\documentclass{article}


\usepackage{graphicx}
\usepackage[ruled,vlined]{algorithm2e}
\usepackage{adjustbox}
\usepackage{comment}
\usepackage{caption}
\usepackage{subcaption}
\usepackage{sidecap}
\usepackage{wrapfig,lipsum,booktabs}

\usepackage{selectp}


\usepackage[preprint]{neurips_2021}



\usepackage[utf8]{inputenc} 
\usepackage[T1]{fontenc}    
\usepackage{hyperref}       
\usepackage{url}            
\usepackage{booktabs}       
\usepackage{amsfonts}       
\usepackage{nicefrac}       
\usepackage{microtype}      
\usepackage{xcolor}         

\usepackage{Definitions}

\newcommand{\tf}{{\tilde f}}

\newcommand{\tC}{{\tilde C}}
\newcommand{\bCcal}{{\bar \Ccal}}
\newcommand{\tpsi}{{\tilde \psi}}

\newcommand{\tLcal}{{\tilde \Lcal}}
\newcommand{\highlight}[1]{\colorbox{blue!10}{#1}}

\DeclareMathOperator{\sg}{sg}

\title{Discrete-Valued Neural Communication}

%

\author{
    Dianbo Liu* \\
     Mila \\
     \And
    Alex Lamb* \\
    Mila \\
     \And
    Kenji Kawaguchi \\
    Harvard University \\
    \And
    Anirudh Goyal\\
    Mila\\
    \And
    Chen Sun \\
    Mila \\
     \And
     Michael Curtis Mozer \\
    Google Brain and University of Colorado \\
    \And
     Yoshua Bengio \\
    Mila \\
    \And
\texttt{* co-first author} \\
\texttt{Emails:liudianbo@gmail.com, alex6200@gmail.com, kkawaguchi@fas.harvard.edu} \\

}

\begin{document}
\maketitle

\begin{abstract}
Deep learning has advanced from fully connected architectures to structured models organized into components, e.g., the transformer composed of positional elements, modular architectures divided into slots, and graph neural nets made up of nodes. In structured models, an interesting question is how to conduct dynamic and possibly sparse communication among the separate components. Here, we explore the hypothesis that restricting the transmitted information among components to discrete representations is a beneficial bottleneck. The motivating intuition is human language in which communication occurs through discrete symbols. Even though individuals have different understandings of what a ``cat'' is based on their specific experiences, the shared discrete token makes it possible for communication among individuals to be unimpeded by individual differences in internal representation.  To discretize the values of concepts dynamically communicated among specialist components, we extend the quantization mechanism from the Vector-Quantized Variational Autoencoder to multi-headed discretization with shared codebooks and use it for \emph{discrete-valued neural communication (DVNC)}. Our experiments show that \emph{DVNC} substantially improves systematic generalization in a variety of architectures---transformers, modular architectures, and graph neural networks. We also show that the DVNC is robust to the choice of hyperparameters, making the method very useful in practice. Moreover, we establish a theoretical justification of our discretization process, proving that it has the ability to increase noise robustness and reduce the underlying dimensionality of the model. 
\end{abstract}

\section{Introduction} \label{sec:intro.}


In AI, there has long been a tension between subsymbolic and symbolic architectures. Subsymbolic architectures, like
neural networks, utilize continuous representations and statistical computation. Symbolic architectures, like production
systems \citep{laird1986chunking} and traditional expert systems, use discrete, structured representations and logical computation. Each architecture has its strengths: subsymbolic computation is useful for perception and control, symbolic computation for higher level,
abstract reasoning. A challenge in integrating these approaches is developing unified learning procedures.

As a step toward bridging the gap, recent work in deep learning has focused on constructing structured architectures with multiple components that interact with one another. For instance, graph neural networks  are composed of distinct nodes \citep{kipf2019contrastive,scarselli2008graph,kipf2018neural,santoro2017simple,raposo2017discovering,bronstein2017geometric,gilmer2017neural,tacchetti2018relational,van2018relational}, transformers are composed of positional elements \citep{bahdanau2014neural,vaswani2017attention}, and modular 
models are divided into slots or modules with bandwidth limited communication \citep{jacobs1991adaptive,bottou1991framework,goyal2020inductive,ronco1996modular,reed2015neural, lamb2021transformers,andreas2016neural,rosenbaum2017routing,fernando2017pathnet,shazeer2017outrageously,rosenbaum2019routing}.

Although these structured models exploit the discreteness in their architectural components, the present work extends these models to leverage discreteness of representations, which is an essential property of symbols. We propose to learn a common \emph{codebook} that is shared by all components for inter-component communication. The codebook permits only a discrete set of  communicable values. We hypothesize that this  communication based on the use and reuse of discrete symbols will provide us with two benefits:
\begin{itemize}
\item The use of discrete symbols limits the bandwidth of representations whose meaning needs to be learned and synchronized across modules. It may therefore
serve as a common language for interaction, and make it easier to learn.
\item The use of shared discrete symbols will
promote systematic generalization by allowing for the reuse of previously encountered symbols in new situations. This makes it easier to hot-swap one component 
for another when new out-of-distribution (OOD) settings arise that require combining existing components in novel ways.
\end{itemize}
Our work is inspired by cognitive science, neuroscience, and mathematical considerations. From the cognitive science
perspective, we can consider different components of structured neural architectures to be analogous to
autonomous agents in a distributed system whose ability to communicate stems from sharing the same language.
If each agent speaks a different language, learning to communicate would be slow and past experience would
be of little use when the need to communicate with a new agent arises. If all agents learn the same language,
each benefits from this arrangement. To encourage a common language, we limit the expressivity of the vocabulary
to discrete symbols that can be combined combinatorially. From the neuroscience perspective, we note that various areas in the brain, including the hippocampus \citep{sun2020hipp, quiroga2005, wills2005}, the prefrontal cortex  \citep{Fujii1246}, and sensory cortical areas  \citep{Tsao670} are tuned to discrete variables (concepts, actions, and objects), suggesting the evolutionary advantage of such encoding, and its contribution to the capacity for generalization in the brain. From a theoretical perspective, we present analyses
suggesting that multi-head discretization of inter-component communication increases model sensitivity and reduces underlying dimensions 
(Section \ref{sec:theory}). These sources of inspiration lead us to the proposed method of 
\emph{Discrete-Valued Neural Communication (DVNC)}.

Architectures like graph neural networks (GNNs), transformers, and slot-based or modular neural networks consist of articulated specialist components, for instance,  nodes in GNNs, positions in transformers, and slots/modules for modular models. We evaluate the efficacy of DVNC in GNNs, transformers, and in a modular recurrent architecture called RIMs. For each of these structured architectures, we keep the original architecture and all of its specialist components the same. The only change is that we impose discretization in the communication between components (Figure~\ref{fig:Implementation}). 

Our work is organized as follows. First, we introduce DVNC and present theoretical analyses showing that DVNC improves sensitivity and reduces underlying dimensionality of models (the logarithm of the covering number). Then we explain how DVNC can be incorporated into different model architectures. And finally we report experimental results showing improved OOD generalization with DVNC.  

\section{Discrete-Value Neural Communication and Theoretical Analysis} 
\label{sec:theory}

In this section, we begin with the introduction of Discrete-Value Neural Communication (DVNC) and proceed by conducting a theoretical analysis of DVNC affects the sensitivity and underlying dimensionality of models. We then explain how DVNC can be used within several different architectures. 


\paragraph{Discrete-Value Neural Communication (DVNC)} 

The process of converting data with continuous attributes into data with discrete attributes is called discretization \citep{chmielewski1996global}. In this study, we use discrete latent variables to quantize information communicated among different modules in a similar manner as in Vector Quantized Variational AutoEncoder (VQ-VAE) \citep{oord2017neural}. Similar to VQ-VAE, we introduce a discrete latent space vector $e \in \RR^{L \times m}$ where $L$ is the size of the discrete latent space (i.e., an $L$-way categorical variable), and $m$ is the dimension of each latent embedding vector $e_j$. Here, $L$ and $m$  are both hyperparameters. In addition, by dividing each target vector into $G$ segments or discretization heads, we separately quantize each head and concatenate the results (Figure~\ref{fig:Discretization}). More concretely, the discretization process for each vector $h  \in \Hcal\subset \RR^{m}$ is described as follows. First, we divide a vector $h$ into $G$ segments $s_1,s_2,\dots,s_G$ with $h=\textsc{concatenate}(s_1,s_2,\dots,s_G),$ where each segment $s_i \in \RR^{m/G}$ with $\frac{m}{G} \in \NN^+$. Second, we discretize each segment $s_i$ separately: 
$$
e_{o_i}=\textsc{discretize}(s_i), \quad \text{ where } o_i=\argmin_{j \in \{1,\dots,L\}} ||s_{i}-e_j||.
$$
Finally, we concatenate the discretized results to obtain the final discretized vector $Z$ as
\[Z=\textsc{concatenate}(\textsc{discretize}(s_1),\textsc{discretize}(s_2),...,\textsc{discretize}(s_G)).\]
The multiple steps described above can be summarized by $Z=q(h,L,G)$, 
where $q(\cdot)$ is the whole discretization process with the codebook, $L$ is the codebook size, and $G$\ is number of segments per vector. It is worth emphasizing that the codebook $e$ is shared across all communication vectors and heads, and is trained together with other parts of the model.

\begin{figure}
     \centering
     \begin{subfigure}[b]{0.26\textwidth}
         \centering
         \includegraphics[width=\textwidth]{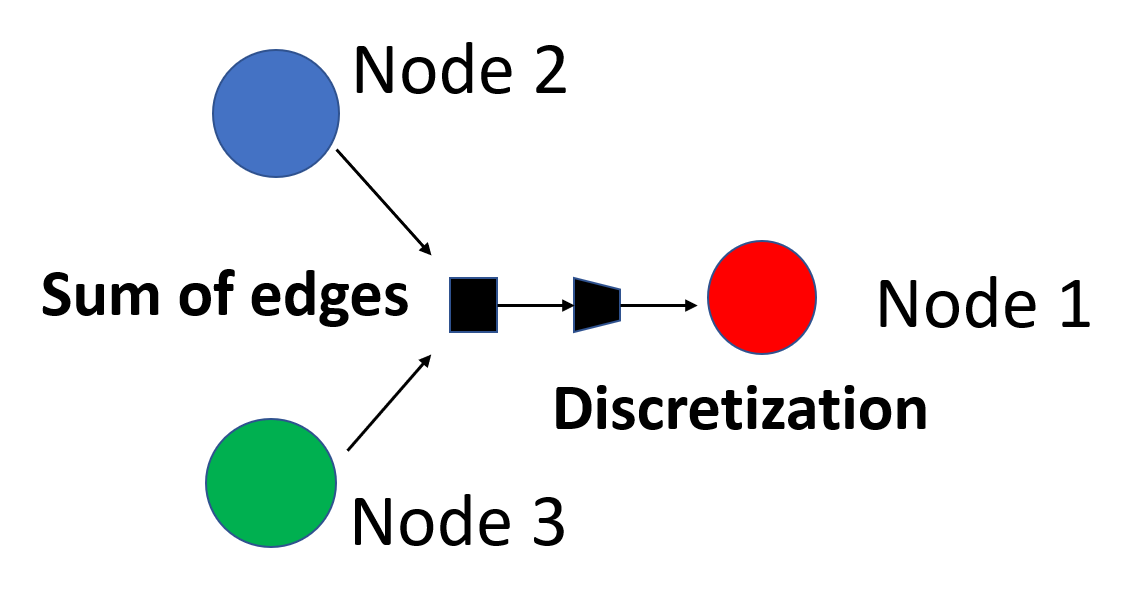}
         \caption{DVNC in Graph Neural Network}
     \end{subfigure}
     \hfill
     \begin{subfigure}[b]{0.35\textwidth}
         \centering
         \includegraphics[width=\textwidth]{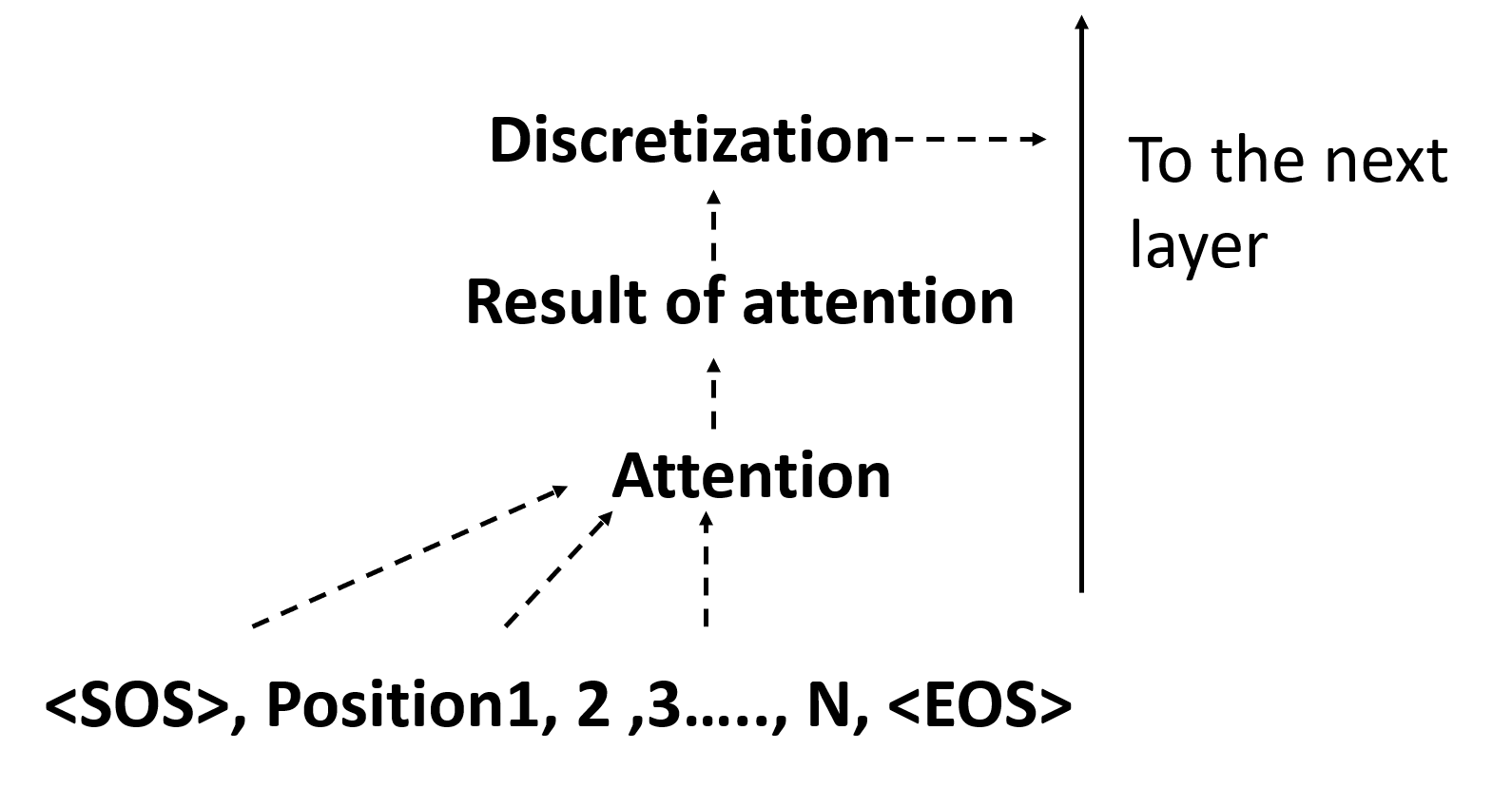}
         \caption{DVNC in Transformer}
     \end{subfigure}
     \hfill
     \begin{subfigure}[b]{0.37\textwidth}
         \centering
         \includegraphics[width=\textwidth,trim={0 0 0 2cm},clip]{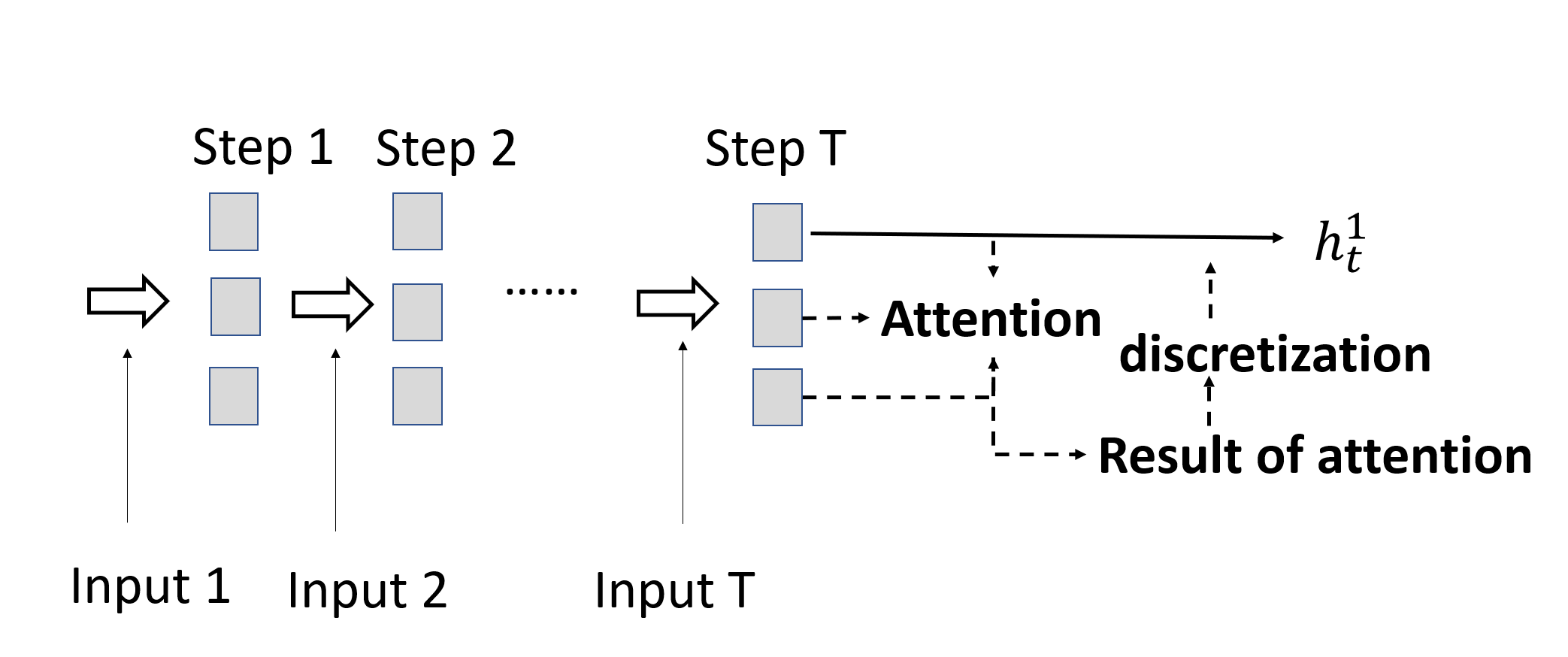}
         \caption{DVNC in Modular Recurrent Mechanisms (RIMs)}
     \end{subfigure}
        \caption{Communication among different components in neural net models is discretized via a shared codebook. In modular recurrent neural networks and transformers, values of results of attention are discretized. In graph neural networks, communication from edges is discretized. }
\label{fig:Implementation} 
\end{figure}

The overall loss for model training is:
\begin{equation}\label{equation2}
\mathcal{L}=\mathcal{L}_\mathrm{task}+\frac{1}{G}\left(\sum^{G}_{i}||\sg(s_{i})-e_{o_i}||^2_2+\beta \sum^{G}_{i}||s_{i}-\sg(e_{o_i})||^2_2\right)
\end{equation}

where $\mathcal{L}_\mathrm{task}$ is the loss for specific task, \textit{e.g.}, cross entropy loss for classification or mean square error loss for regression, $\sg$ refers to a stop-gradient operation that blocks gradients from flowing into its argument, and $\beta$ is a hyperparameter which controls the reluctance to change the code. The second term $\sum^{G}_{i}||\sg(s_{i})-e_{o_i}||^2_2$ is the codebook loss, which only applies to the discrete latent vector and brings the selected selected $e_{o_i}$ close to the output segment $s_i$. The third term $\sum^{G}_{i}||s_{i}-\sg(e_{o_i})||^2_2$ is the commitment loss, which only applies to the target segment $s_i$ and trains the module that outputs $s_i$ to make $s_i$ stay close to the chosen discrete latent vector $e_{o_i}$. We picked $\beta=0.25$ as in the original VQ-VAE paper \citep{oord2017neural}. We initialized $e$ using $k$-means clustering on vectors $h$ with $k=L$ and trained the codebook together with other parts of the model by gradient decent. When there were multiple $h$ vectors to discretize in a model, the mean of the codebook and commitment loss across all $h$ vectors was used. Unpacking this equation, it can be seen that we adapted the vanilla VQ-VAE loss to directly suit our discrete communication method \citep{oord2017neural}. In particular, the VQ-VAE loss was adapted to handle multi-headed discretization by summing over all the separate discretization heads.

\sidecaptionvpos{figure}{c}

\begin{SCfigure}
  \centering 
  \includegraphics[width=0.50\linewidth,trim={8cm 2cm 3cm 0},clip]{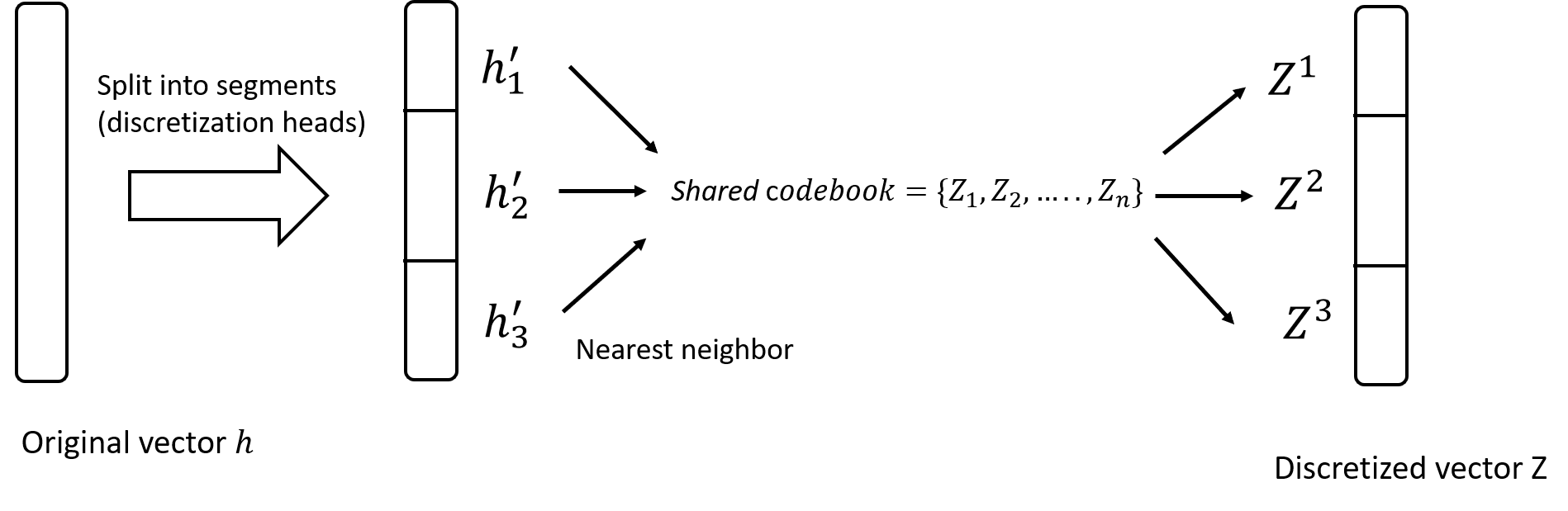} 
  \caption{In structured architectures, communication is typically vectorized. In DVNC, this communication vector is first divided into discretization heads.  Each head is discretized separately to the nearest neighbor of a collection of latent codebook vectors which is shared across all the heads.  The discretization heads are then concatenated back into the same shape as the original vector.}
  \label{fig:Discretization} 
\end{SCfigure}

\begin{table}[th!]
\caption{Communication with discretized values achieves a noise-sensitivity bound that is independent of the number of dimensions $m$ and network lipschitz constant $\bar \varsigma_{k}$ and only depends on the number of discretization heads $G$ and codebook size $L$.  }
\label{tab:motivation}
\resizebox{\linewidth}{!}{%
\begin{tabular}{|c|c|c|}
\hline
Communication Type & Example & Sensitivity Bounds (Thm~\ref{prop:1},~\ref{prop:2}) \\\hline
\begin{tabular}[c]{@{}c@{}}Communication with continuous signals\\ is expressive but can take a huge range of novel\\ values, leading to poor systematic generalization \end{tabular} & \begin{tabular}[c]{@{}c@{}}\includegraphics[width=0.12\linewidth]{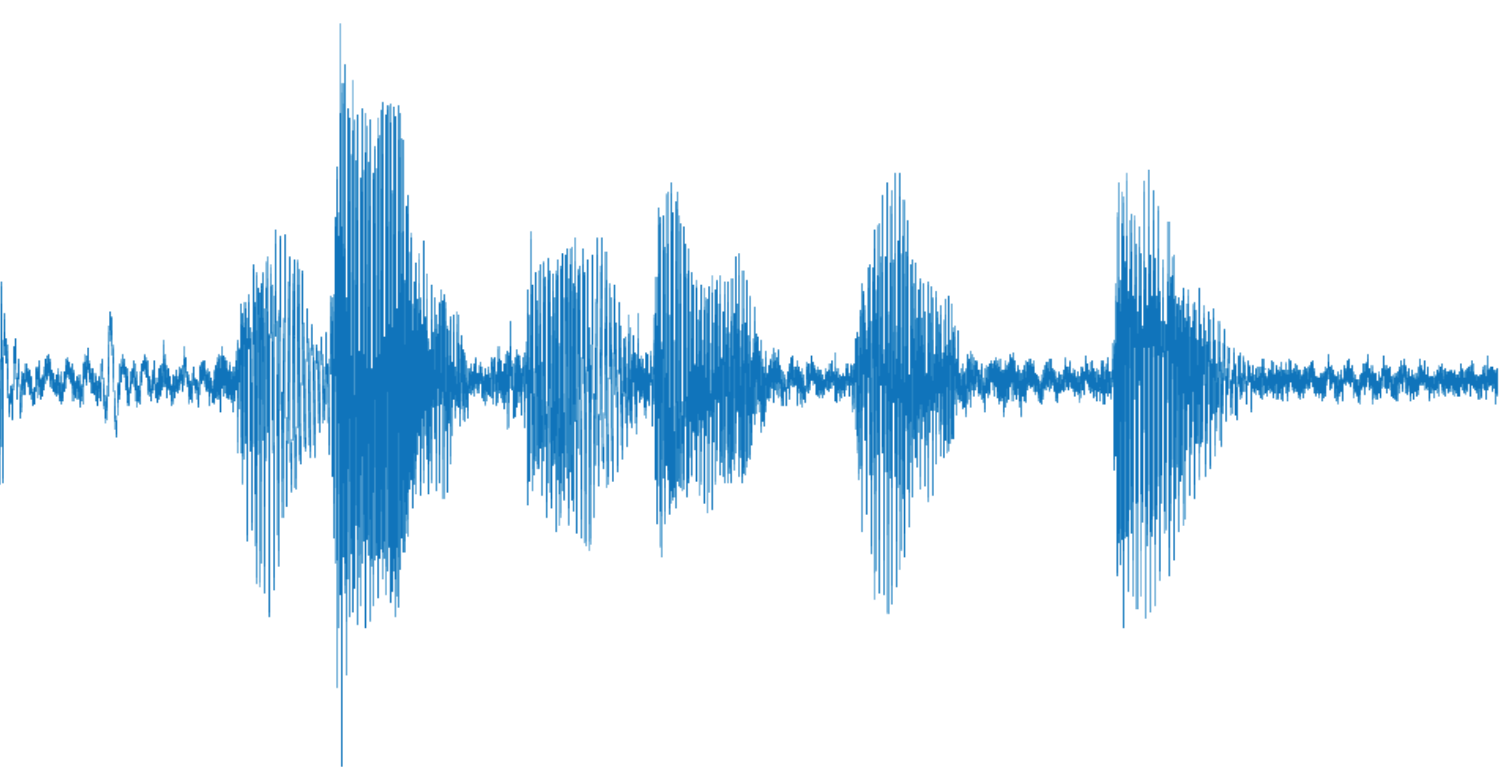} \\ $m \sim 10^5$ \end{tabular} & $\Ocal\left( \sqrt{\frac{m\ln(4  \sqrt{n m})+\ln(2/\delta)}{2n}}+\frac{\bar \varsigma_{k}R_{\Hcal}}{\sqrt{n}}\right)$ \\ \hline
\begin{tabular}[c]{@{}c@{}}Communication with multi-head \\ discrete-values is both expressive and sample efficient\end{tabular} & \begin{tabular}[c]{@{}c@{}}``John owns a car'' \\ $G = 15$ \\ $L = 30$ \end{tabular}   & $\Ocal\left(\sqrt{\frac{G\ln(L)+\ln(2/\delta)}{2n}}\right)$ \\ \hline
\end{tabular}%
}
\end{table}


 In the next subsection, we use the following additional notation. The function $\phi$ is arbitrary and thus can refer to the composition of an evaluation criterion and the rest of the network following discretization.  Given any function $\phi: \RR^{m} \rightarrow \RR$ and any family of sets $S=\{S_1,\dots,S_K\}$ with $S_1,\dots,S_K \subseteq \Hcal$, we define the corresponding function $\phi_{k}^S$ by $\phi_{k}^S(h)=\one\{h \in S_k\}\phi(h)$ for all $k \in [K]$, where $[K]=\{1,2,\dots,K\}$.
Let $e \in E\subset \RR^{L\times m}$ be fixed and we denote by $(Q_k)_{k \in [L^G]}$ all the possible values after the discretization process: i.e.,  $q(h,L,G)\in \cup_{k \in [L^G]} \{Q_k\}$ for all $h \in \Hcal$.

\paragraph{Theoretical Analysis} 
  This subsection shows that adding the discretization process has two potential advantages: (1) it improves noise robustness and (2) it reduces underlying dimensionality.  These are proved in Theorems \ref{prop:1}--\ref{prop:2}, illustrated by examples (Table~\ref{tab:motivation}), and explored in analytical experiments using Gaussian-distributed vectors (Figure~\ref{fig:empirical_analysis}).  
  
  To understand the advantage on noise robustness, we note that there is  an additional error incurred by  noise \textit{without discretization}, i.e., the second term $\bar \varsigma_{k}R_{\Hcal}/\sqrt{n}\ge0$ in the bound of Theorem \ref{prop:2} ($\bar \varsigma_{k}$ and $R_{\Hcal}$ are defined in  Theorem \ref{prop:2}). This  error due to  noise disappears \textit{with discretization} in the bound of Theorem \ref{prop:1} as the discretization process reduces the sensitivity to noise. This is because the discretization process lets the communication become invariant to noise  within the same category; e.g.,  the communication is invariant to different notions  of ``cats''. 
  
  To understand the advantage of discretization on dimensionality, we can see that it reduces the underlying dimensionality of $m\ln(4\sqrt{n m})$ \textit{without discretization} (in  Theorem \ref{prop:2}) to that of $G\ln(L)$ \textit{with discretization} (in Theorem \ref{prop:1}).  As a result, the size of the codebook $L$ affects the underlying dimension in a weak (logarithmic) fashion, while the number of dimensions $m$ and the number of discretization heads $G$ scale the underlying dimension in a linear way. Thus, the discretization process successfully lowers the underlying dimensionality for any $n \geq 1$ as long as $G\ln(L) < m\ln(4\sqrt{m})$.  This is nearly always the case as the number of discretization heads $G$ is almost always much smaller than the number of units $m$.  Intuitively, a discrete language has combinatorial expressiveness, making it able to model complex phenomena, but still lives in a much smaller space than the world of unbounded continuous-valued signals (as $G$ can be much smaller than $m$).

\begin{theorem} \label{prop:1}
\emph{(with discretization)} Let $S_{k}=\{Q_{k}\}$ for all $k \in [L^{G}]$. Then, for any $\delta>0$, with probability at least $1-\delta$ over an iid draw of $n$ examples $(h_{i})_{i=1}^n$, the following holds for any $\phi: \RR^{m} \rightarrow \RR$ and all $k \in [L^{G}]$: if $|\phi_k^S(h)|\le \alpha$ for all $h \in \Hcal$, then 
\begin{align} \label{eq:prop:1:1}
\left| \EE_{h}[\phi_{k}^S(q(h,L,G))] - \frac{1}{n}\sum_{i=1}^n\phi_{k}^S(q(h_{i}, L,G)) \right| = \Ocal\left( \alpha \sqrt{\frac{G\ln(L)+\ln(2/\delta)}{2n}}\right), 
\end{align}
where no constant is hidden in $\Ocal$.
\end{theorem}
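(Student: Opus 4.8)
The plan is to exploit the fact that the discretization map collapses the problem to a \emph{finite partition} of $\Hcal$, after which the claim reduces to an elementary concentration inequality. Fix the codebook $e\in E$ so that $q(\cdot,L,G)$ is a fixed map from $\Hcal$ into the finite set $\{Q_1,\dots,Q_{L^{G}}\}$; I break ties in the $\argmin$ defining each $o_i$ by an arbitrary but fixed rule, so that $q$ is a well-defined (measurable) function. For $k\in[L^{G}]$ put $A_k=\{h\in\Hcal:q(h,L,G)=Q_k\}$, so that $\{A_1,\dots,A_{L^G}\}$ is a partition of $\Hcal$. Since $S_k=\{Q_k\}$, the composed function $h\mapsto\phi_k^S(q(h,L,G))$ is constant on each cell: it equals $c_k:=\phi(Q_k)$ on $A_k$ and $0$ outside $A_k$. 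The hypothesis $|\phi_k^S(h)|\le\alpha$ for all $h\in\Hcal$ (read as a bound on $\phi$ at its reachable discretized values) gives $|c_k|\le\alpha$ whenever $A_k\neq\emptyset$; for empty cells both terms of \eqref{eq:prop:1:1} vanish, so there is nothing to prove.

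Next I would rewrite the two quantities in \eqref{eq:prop:1:1} in terms of cell probabilities. With $p_k:=\mathbb{P}(h\in A_k)$ for a fresh draw $h$ and $\hat p_k:=\frac1n\sum_{i=1}^n\one\{h_i\in A_k\}$, the constancy on cells gives $\EE_h[\phi_k^S(q(h,L,G))]=c_k p_k$ and $\frac1n\sum_{i=1}^n\phi_k^S(q(h_i,L,G))=c_k\hat p_k$, hence the left-hand side of \eqref{eq:prop:1:1} equals $|c_k|\,|p_k-\hat p_k|\le\alpha\,|p_k-\hat p_k|$. The key point is that the random quantity $|p_k-\hat p_k|$ does not depend on $\phi$, so a bound uniform in $k$ is automatically uniform over all $\phi$; this is precisely where the reduction from the continuous, high-dimensional regime of Theorem~\ref{prop:2} to the $G\ln L$ regime takes place.

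Finally I would apply Hoeffding's inequality together with a union bound. For each fixed $k$, $n\hat p_k$ is a sum of $n$ i.i.d.\ indicators with mean $p_k$, so $\mathbb{P}(|p_k-\hat p_k|>t)\le 2\exp(-2nt^2)$. A union bound over the at most $L^{G}$ cells gives $\mathbb{P}(\exists\,k:|p_k-\hat p_k|>t)\le 2L^{G}\exp(-2nt^2)$, and setting this equal to $\delta$ yields $t=\sqrt{(G\ln L+\ln(2/\delta))/(2n)}$, using $\ln(2L^{G}/\delta)=G\ln L+\ln(2/\delta)$. On the complementary event, which has probability at least $1-\delta$, the left-hand side of \eqref{eq:prop:1:1} is at most $\alpha\sqrt{(G\ln L+\ln(2/\delta))/(2n)}$ simultaneously for every $k\in[L^G]$ and every $\phi$, i.e., the claimed bound with absolute constant $1$ (so nothing is hidden in $\Ocal$).

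I do not expect a serious obstacle: the argument is a short reduction followed by a textbook tail bound. The only steps needing care are (i) fixing a tie-breaking rule so that $q$, and hence the cells $A_k$, are genuinely well defined and measurable; (ii) extracting $|\phi(Q_k)|\le\alpha$ from the stated boundedness hypothesis; and (iii) keeping the union bound over $k$ only --- not over $\phi$ --- which is exactly what preserves the $\ln(L^{G})=G\ln L$ term and delivers the advertised reduction in underlying dimensionality compared with Theorem~\ref{prop:2}.
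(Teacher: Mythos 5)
Your proposal is correct and follows essentially the same route as the paper's proof: reduce to the finite partition induced by $q(\cdot,L,G)$, observe that both the expectation and the empirical average of $\phi_k^S\circ q$ collapse to $\phi(Q_k)$ times the (true vs.\ empirical) cell probability, then apply Hoeffding's inequality to $|p_k-\hat p_k|$ and union-bound over the $L^{G}$ cells with $\delta'=\delta/L^{G}$. The paper reaches the same reduction via a two-term decomposition whose second term is shown to vanish, but this is just a slightly more indirect phrasing of your observation that $\phi_k^S\circ q$ is constant on each cell; the concentration step and the final algebra $\ln(2L^{G}/\delta)=G\ln L+\ln(2/\delta)$ are identical.
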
 
\begin{theorem} \label{prop:2} 
\emph{(without discretization)}  Assume that  $\|h\|_2 \le R_{\Hcal}$ for all $h \in \Hcal\subset \RR^{m}$. Fix  $\Ccal\in \argmin_{\bCcal}\{|\bCcal|:\bCcal \subseteq \RR^{m},\Hcal \subseteq \cup_{c \in\bCcal} \Bcal_{}[c]\}$ where $\Bcal[c]=\{x\in \RR^{m} : \|x - c \|_2\le R_{\Hcal}/(2\sqrt{n})\}$. Let $S_{k}=\Bcal[c_{k}]$ for all $k \in [|\Ccal|]$ where $c_k \in \Ccal$ and $\cup_{k} \{c_k\}=\Ccal$.   Then, for any $\delta>0$, with probability at least $1-\delta$ over an iid draw of $n$ examples $(h_{i})_{i=1}^n$, the following holds  for any $\phi: \RR^{m} \rightarrow \RR$  and  all $k \in [|\Ccal|]$:
if $|\phi_k^S(h)|\le \alpha$ for all $h \in \Hcal$ and 
$|\phi_{k}^{S}(h{})-\phi_{k}^{S}(h{}')|\le \varsigma_{k}\|h -h'\|_2$ for all $h,h' \in S_{k}$, then
\begin{align} \label{eq:prop:2:1}
\left| \EE_{h}[\phi_{k}^S(h)] - \frac{1}{n}\sum_{i=1}^n\phi_{k}^S(h_{i}) \right| 
 = \Ocal\left(\alpha \sqrt{\frac{m\ln(4  \sqrt{n m})+\ln(2/\delta)}{2n}}+\frac{\bar \varsigma_{k}R_{\Hcal}}{\sqrt{n}}\right), 
\end{align}
where no constant is hidden in $\Ocal$ and $\bar \varsigma_{k}= \varsigma_{k} \left(\frac{1}{n} \sum_{i=1}^n\one\{h_{i}\in \Bcal_{}[c_{k}]\}\right)$.
\end{theorem}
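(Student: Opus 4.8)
The plan is to convert the statement — which is uniform over the entire (infinite) family of admissible $\phi$ — into a single union bound over the \emph{finite} cover $\Ccal$. The mechanism is that the Lipschitz hypothesis forces $\phi_{k}^{S}$ to be nearly constant, equal to $\phi(c_{k})$, on its ball $S_{k}=\Bcal[c_{k}]$; consequently the only objects that need a genuine concentration argument are the fixed set-indicators $\one\{\cdot\in S_{k}\}$, which carry no $\phi$-dependence at all, and the scalar $\phi(c_{k})$, which merely rescales. Throughout write $\rho=R_{\Hcal}/(2\sqrt{n})$ for the covering radius and set $p_{k}=\EE_{h}[\one\{h\in S_{k}\}]$, $\hat p_{k}=\frac{1}{n}\sum_{i=1}^{n}\one\{h_{i}\in S_{k}\}$, so that $\bar\varsigma_{k}R_{\Hcal}/\sqrt{n}=2\varsigma_{k}\rho\,\hat p_{k}$.

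First I would count the cover. Since $\Hcal\subseteq[-R_{\Hcal},R_{\Hcal}]^{m}$, a coordinatewise grid of mesh $\rho/\sqrt{m}$ has cells of Euclidean diameter $\rho$; the cell centers form a cover of $\Hcal$ by radius-$\rho$ balls of cardinality at most $\lceil 2R_{\Hcal}\sqrt{m}/\rho\rceil^{m}=\lceil 4\sqrt{nm}\rceil^{m}$, and since $\Ccal$ is a minimal such cover, $\ln|\Ccal|\le m\ln(4\sqrt{nm}+1)$ — this is the source of the $m\ln(4\sqrt{nm})$ term. Next, the Lipschitz collapse: because $c_{k}\in S_{k}$ and $\|h-c_{k}\|_{2}\le\rho$ for $h\in S_{k}$, applying the Lipschitz hypothesis at $h$ and $c_{k}$ (and using $\phi_{k}^{S}(c_{k})=\phi(c_{k})$) gives $|\phi_{k}^{S}(h)-\phi(c_{k})\one\{h\in S_{k}\}|\le\varsigma_{k}\rho\,\one\{h\in S_{k}\}$ for all $h\in\Hcal$; and, taking any point of $S_{k}\cap\Hcal$ when this is nonempty (if it is empty then $\phi_{k}^{S}\equiv 0$ on $\Hcal$ and \eqref{eq:prop:2:1} is trivial), $|\phi(c_{k})|\le\alpha+\varsigma_{k}\rho$. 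Integrating this pointwise bound against the distribution and the empirical measure gives $|\EE_{h}[\phi_{k}^{S}(h)]-\phi(c_{k})p_{k}|\le\varsigma_{k}\rho\,p_{k}$ and $|\frac{1}{n}\sum_{i}\phi_{k}^{S}(h_{i})-\phi(c_{k})\hat p_{k}|\le\varsigma_{k}\rho\,\hat p_{k}=\frac{1}{2}\bar\varsigma_{k}R_{\Hcal}/\sqrt{n}$.

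The concentration step is then just Hoeffding: for each fixed $k$, $\one\{\cdot\in S_{k}\}$ is $[0,1]$-valued, so Hoeffding plus a union bound over the $|\Ccal|$ sets yields, with probability at least $1-\delta$, $|p_{k}-\hat p_{k}|\le\varepsilon_{n}:=\sqrt{\ln(2|\Ccal|/\delta)/(2n)}\le\sqrt{(m\ln(4\sqrt{nm}+1)+\ln(2/\delta))/(2n)}$ for all $k$ simultaneously. On this event the triangle inequality and the previous displays give $|\EE_{h}[\phi_{k}^{S}(h)]-\frac{1}{n}\sum_{i}\phi_{k}^{S}(h_{i})|\le|\phi(c_{k})|\varepsilon_{n}+\varsigma_{k}\rho(p_{k}+\hat p_{k})\le(\alpha+\varsigma_{k}\rho)\varepsilon_{n}+2\varsigma_{k}\rho\,\hat p_{k}+\varsigma_{k}\rho\,\varepsilon_{n}$, using $p_{k}\le\hat p_{k}+\varepsilon_{n}$ at the end. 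The summand $\alpha\varepsilon_{n}$ and the summand $2\varsigma_{k}\rho\,\hat p_{k}=\bar\varsigma_{k}R_{\Hcal}/\sqrt{n}$ are precisely the two terms displayed in \eqref{eq:prop:2:1}. (Note the discretized Theorem~\ref{prop:1} is the same argument with $S_{k}=\{Q_{k}\}$ a singleton: then $\rho=0$, the Lipschitz residual vanishes, and $|\Ccal|$ is replaced by the \emph{exact} count $L^{G}$, which is why no $\varsigma_{k}$-term survives there.)

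The hard part is disposing of the leftover $\varsigma_{k}\rho\,\varepsilon_{n}$, which carries the raw \emph{local} Lipschitz constant $\varsigma_{k}$ rather than the reweighted $\bar\varsigma_{k}$. A case split handles the extremes: if $\hat p_{k}=0$ one does not smooth at all but bounds the target directly by $|\EE_{h}[\phi_{k}^{S}(h)]|\le\alpha p_{k}\le\alpha\varepsilon_{n}$, while if $\hat p_{k}$ is bounded away from $0$ then $\varsigma_{k}\rho\,\varepsilon_{n}$ is small compared with $\bar\varsigma_{k}R_{\Hcal}/\sqrt{n}=2\varsigma_{k}\rho\,\hat p_{k}$. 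One then has to argue that in between this term — which carries an extra factor $\varepsilon_{n}$ (of order $\sqrt{(m\ln(nm)+\ln(1/\delta))/n}$ up to constants) relative to the $\varsigma_{k}R_{\Hcal}/\sqrt{n}$ scale — is absorbed into the stated $\Ocal$; pinning this down, and keeping track of the numerical constants so that, as claimed, none is hidden in the $\Ocal$, is the only step that needs real care, everything else being elementary Hoeffding, the triangle inequality, and the grid covering count.
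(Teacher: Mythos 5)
Your overall strategy --- Hoeffding plus a union bound over the finite cover, with Lipschitz continuity controlling the within-cell variation --- is the right one and is essentially the paper's as well; the covering count and your observation about Theorem~\ref{prop:1} (there $\rho=0$ so the Lipschitz residual vanishes and the union bound runs over the exact count $L^G$) are both correct. But the ``hard part'' you flag at the end is a genuine gap, not a bookkeeping nuisance, and it traces to your choice of pivot. Approximating $\phi_k^S$ by $\phi(c_k)\one\{\cdot\in S_k\}$ introduces a Lipschitz residual $\varsigma_k\rho\,p_k$ on the \emph{population} side, which carries the weight $p_k$ rather than the empirical $\hat p_k$ that defines $\bar\varsigma_k$. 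On the Hoeffding event the best you can do is $p_k\le\hat p_k+\varepsilon_n$, and the resulting leftover $\varsigma_k\rho\,\varepsilon_n$ cannot be absorbed: there is no universal constant $C$ with $\alpha\varepsilon_n+2\varsigma_k\rho\,\hat p_k+2\varsigma_k\rho\,\varepsilon_n\le C\bigl(\alpha\varepsilon_n+2\varsigma_k\rho\,\hat p_k\bigr)$, as one sees by letting $\hat p_k\to 0$ with $\varsigma_k\rho/\alpha\to\infty$. Since the theorem explicitly asserts no constant is hidden in the $\Ocal$, your case split cannot close this.

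The paper avoids the problem by pivoting on the conditional mean $\mu_k:=\EE_h[\phi(h)\mid h\in S_k]$ rather than on the cover center value $\phi(c_k)$. This is not cosmetic: the identity $\EE_h[\phi_k^S(h)]=\mu_k\,p_k$ holds \emph{exactly}, so there is no Lipschitz residual on the population side at all, and $|\mu_k|\le\alpha$ holds directly because the conditional expectation averages over the support of $h$, which lies in $\Hcal$ --- so you do not even need the $|\phi(c_k)|\le\alpha+\varsigma_k\rho$ patch. Decomposing $\EE_h[\phi_k^S(h)]-\frac{1}{n}\sum_i\phi_k^S(h_i)=\mu_k(p_k-\hat p_k)+\frac{1}{n}\sum_{i\in\Ical_k}\bigl(\mu_k-\phi(h_i)\bigr)$ where $\Ical_k=\{i:h_i\in S_k\}$, the first term is at most $\alpha\varepsilon_n$ by Hoeffding and the union bound, and each summand of the second is at most $\sup_{h\in S_k\cap\Hcal}|\phi(h)-\phi(h_i)|\le\varsigma_k\cdot 2\rho$, so the second term is at most $\hat p_k\cdot 2\varsigma_k\rho=\bar\varsigma_k R_{\Hcal}/\sqrt{n}$. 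That gives exactly the two displayed terms of \eqref{eq:prop:2:1} with no slack, which is what the ``no hidden constant'' claim demands. If you replace your $\phi(c_k)$-pivot by $\mu_k$ everything else in your argument goes through unchanged.
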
 

The two proofs of these theorems use the same steps and are equally tight as shown in Appendix \ref{app:proof}. Equation \eqref{eq:prop:2:1} is also as tight as that of related work as discussed in Appendix \ref{app:3}. The set $S$ is chosen to cover the original continuous space $\Hcal$ in Theorem \ref{prop:2} (via the $\epsilon$-covering $\Ccal$ of $\Hcal$), and its discretized space in Theorem \ref{prop:1}. Equations \eqref{eq:prop:1:1}--\eqref{eq:prop:2:1} hold for all functions $\phi: \RR^{m} \rightarrow \RR$, including the maps that depend on the samples $(h_{i})_{i=1}^n$ via any learning processes. For example, we can set $\phi$ to be an evaluation criterion  of the latent space $h$ or the composition of an evaluation criterion and any neural network layers that are learned with the samples $(h_{i})_{i=1}^n$. In  Appendix \ref{sec:2}, we present additional theorems, Theorems \ref{thm:1}--\ref{thm:2}, where we analyze the effects of learning the map $x\mapsto h$ and the codebook $e$ via input-target pair samples $((x_i,y_i))_{i=1}^n$. 

Intuitively, the proof shows that we achieve the improvement in sample efficiency when $G$ and $L$ are small, with the dependency on $G$ being significantly stronger (details in Appendix).  Moreover, the dependency of the bound on the Lipschitz constant $\varsigma_{k}$ is eliminated by using discretization.  Our theorems \ref{prop:1}--\ref{prop:2} are applicable to all of our models for recurrent neural networks, transformers, and graph neural networks (since the function $\phi$ is arbitrary) in the following subsections.


\begin{figure}
    \centering
    \includegraphics[width=0.24\linewidth]{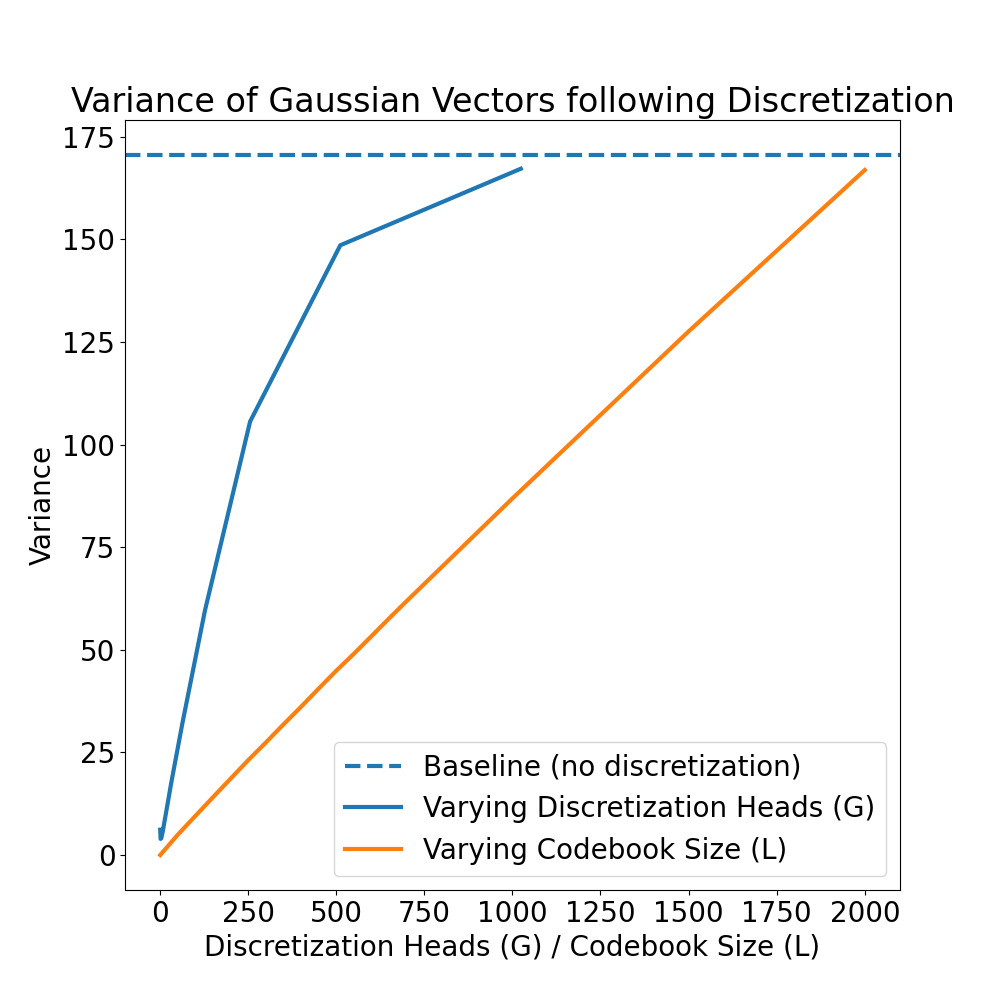}
    \includegraphics[width=0.24\linewidth]{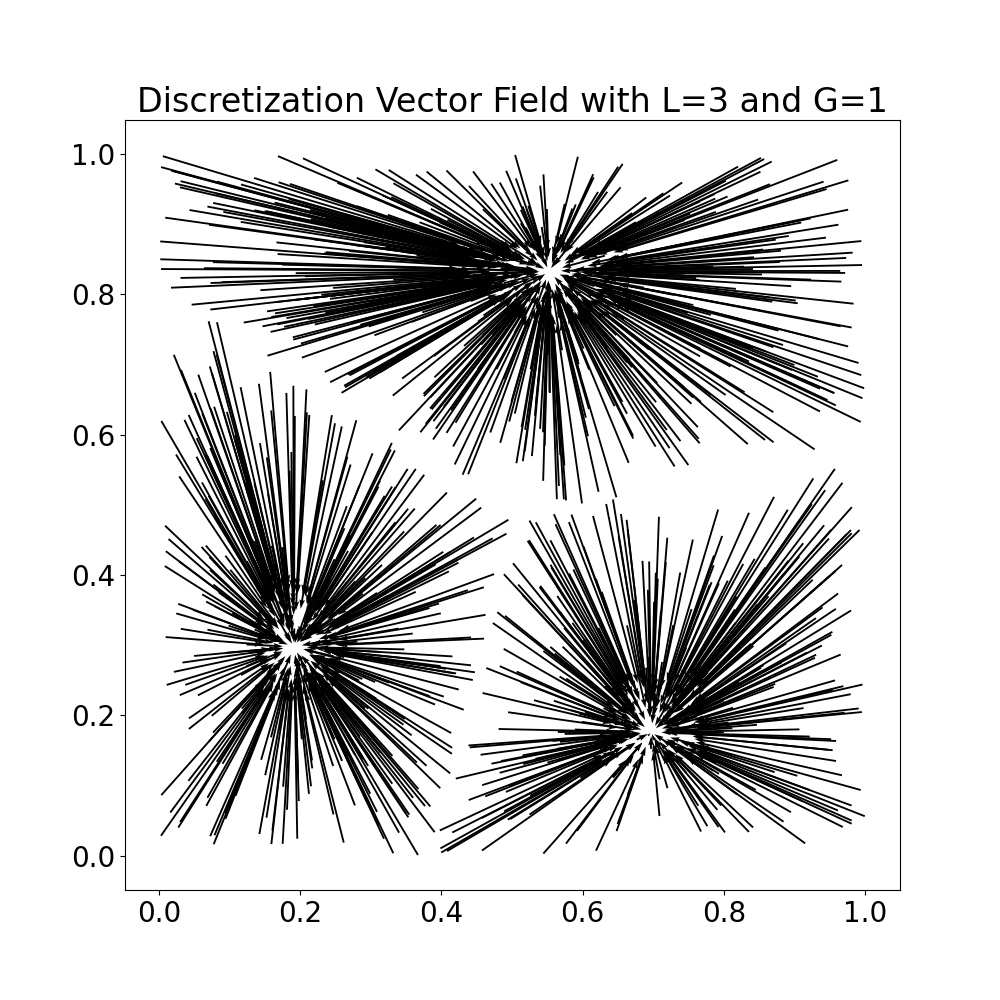}
    \includegraphics[width=0.24\linewidth]{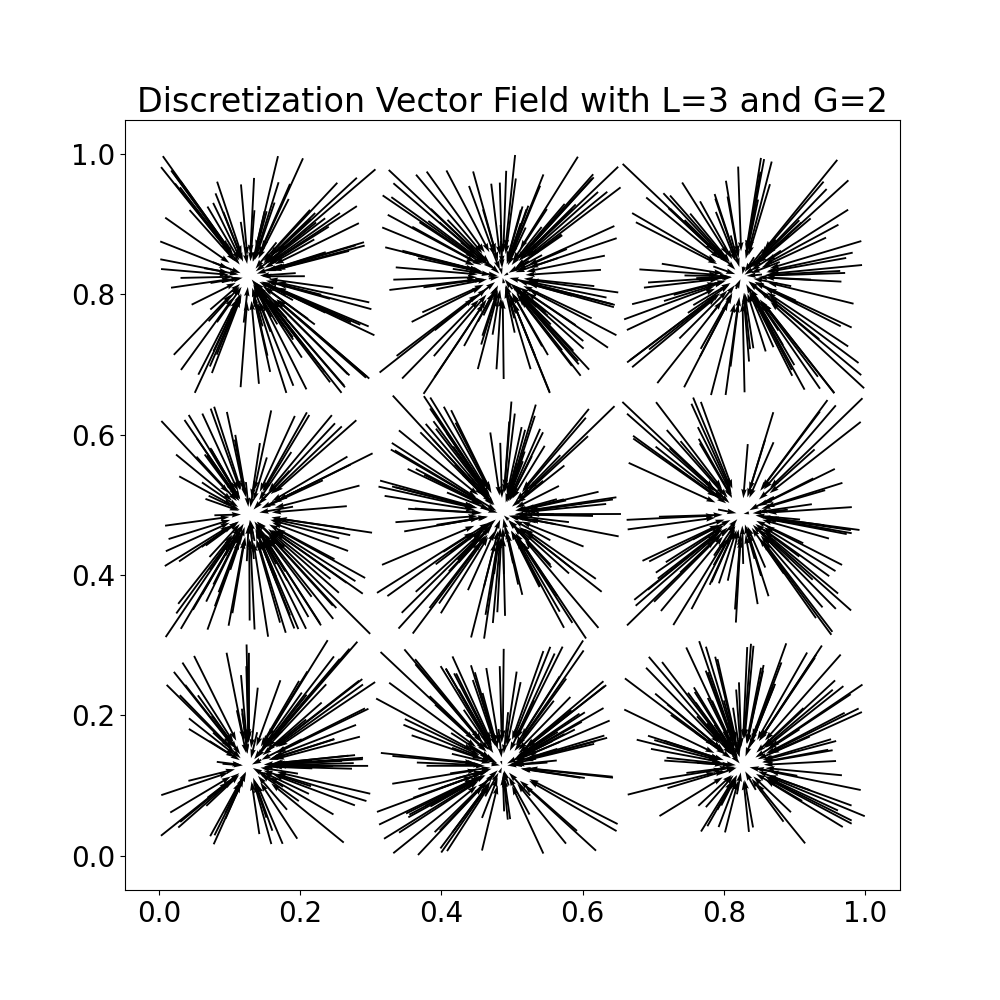}
    \includegraphics[width=0.24\linewidth]{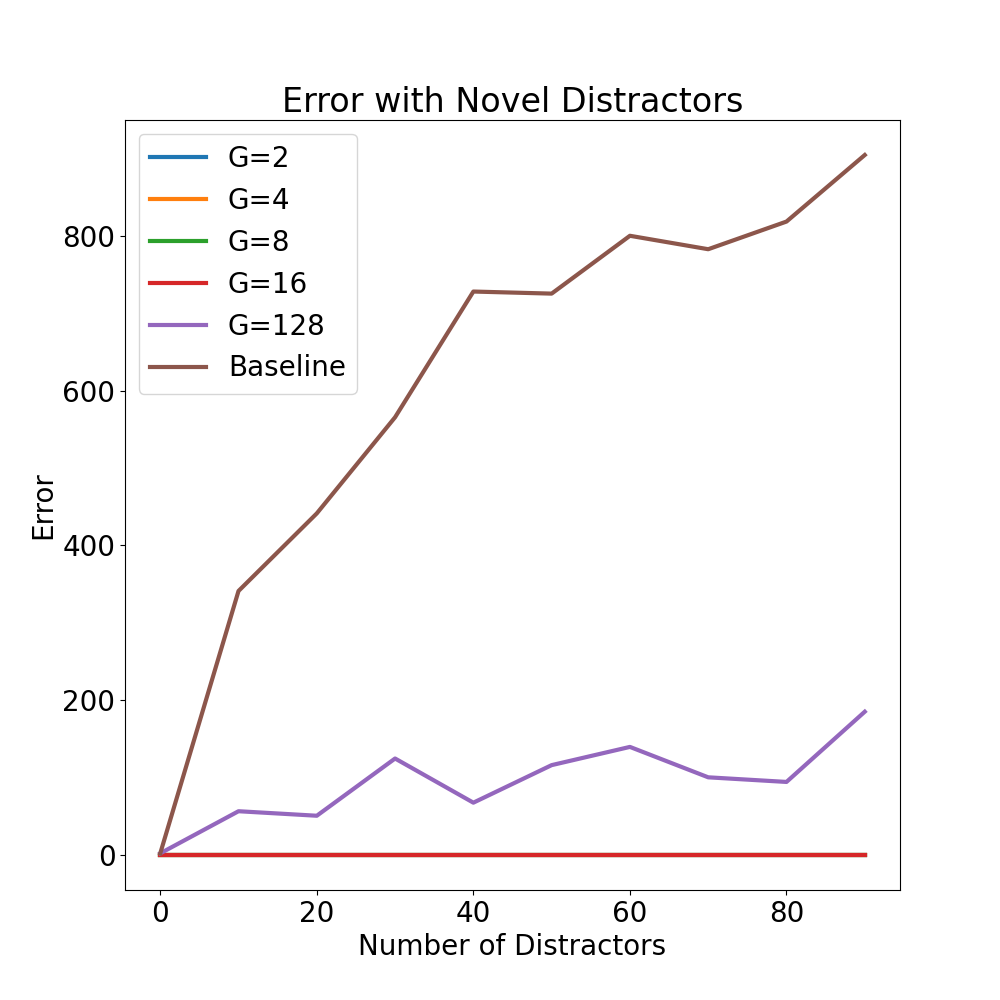}
    \caption{We perform empirical analysis on Gaussian vectors to build intuition for our theoretical analysis.  Expressiveness scales much faster as we increase discretization heads than as we increase the size of the codebook.  This can be seen when we measure the variance of a collection of Gaussian vectors following discretization (left), and can also be seen when we plot a vector field of the effect of discretization (center).  Discretizing the values from an attention layer trained to select a fixed Gaussian vector makes it more robust to novel Gaussian distractors (right). For more details, see Appendix~\ref{app:experiments}. }
    \label{fig:empirical_analysis}
\end{figure}

\paragraph{Communication along edges of Graph Neural Network} \label{sec:GNN}

One model where relational information is naturally used is in a graph neural network for modelling object interactions and predicting the next time frame. We denote node representations by $\zeta^t_i$, edge representations by $\epsilon^t_{i,j}$, and actions applied to each node by $a^t_i$. Without DVNC, the changes of each nodes after each time step is computed by $\zeta^{t+1}_i=\zeta^t_i+\Delta \zeta^t_i$, where $\Delta \zeta^t_i=f_{node}(\zeta^t_i,a^i_t,\sum_{j \neq i} \epsilon^{i,j}_t)$ and $\epsilon^t_{i,j}=f_{edge}(\zeta^t_i,z^t_j)$. 

In this present work, we discretize the sum of all edges connected to each node with DVNC, as so: $\Delta \zeta^t_i=f_{node}(\zeta^t_i,a^i_t,q(\sum_{j \neq i} \epsilon^{i,j}_t,L,G))$.

\paragraph{Communication Between Positions in Transformers} \label{sec:Transformer}

In a transformer model without DVNC, at each layer, the scaled dot product multi-head soft attention is applied to allow the model to jointly attend to information from different representation
subspaces at different positions \citep{vaswani2017attention} as: 
$$
\mathrm{Output}=\mathrm{residual}+\textsc{MultiHeadAttention}(B,K,V),
$$ 
where 
$\textsc{MultiHeadAttention}(B,K,V)=\textsc{Concatenate}(head_1,head_2,.....head_n)W^O$
and 
$head_i=\textsc{SoftAttention}(BW^{B}_{i},KW^{K}_{i},VW^{V}_{i})$.
Here, $W^O,W^B,W^K$, and $W^V$ are projection matrices, and $B$, $K$, and $V$ are queries, keys, and values respectively. 

In this present work, we applied the DVNC  process to the results of the attention in the last two layers of transformer model, as so:
\[\mathrm{Output}=\mathrm{residual}+q(\textsc{MultiHeadAttention}(B,K,V),L,G).\]

\paragraph{Communication with Modular Recurrent Neural Networks} \label{sec:RIM}
There have been many efforts to introduce modularity into RNN. Recurrent independent mechanisms (RIMs) activated different modules at different time step based on inputs \citep{goyal2019recurrent}. In RIMs, outputs from different modules are communicated to each other via soft attention mechanism. In the original RIMs method, we have $\hat{z}^{t+1}_{i}=\textsc{RNN}(z^{t}_{i},x^t)$ for active modules, and $\hat{z}^{t+1}_{i'}=z^{t}_{i'}$ for inactive modules, where $t$ is the time step, $i$ is index of the module, and $x_t$ is the input at time step $t$. Then, the dot product query-key soft attention is used to communication output from all modules $i \in \{1,\dots,M\}$ such that
$
h^{t+1}_i=\textsc{SoftAttention}(\hat{z}^{t+1}_{1},\hat{z}^{t+1}_{2},....\hat{z}^{t+1}_{M}).
$

In this present work, we applied the DVNC  process to the output of the soft attention, like so: 
$z^{t+1}_{i}=\hat{z}^{t+1}_{i}+q(h^{t+1}_i, L, G).$
Appendix~\ref{app:method} presents the pseudocode for RIMs with discretization.  







\section{Related Works}

\paragraph{The Society of Specialists}
Our work is related to the theoretical nature of intelligence proposed by \citet{minsky1988society} and others \citep{braitenberg1986vehicles,fodor1983modularity}, in which the authors suggest that an intelligent mind can be built from many little specialist parts, each mindless by itself. Dividing model architecture into different specialists been the subject of a number of research directions, including neural module networks \citep{andreas2016neural}), multi-agent reinforcement learning \citep{zhang2019multi} and many others \citep{jacobs1991adaptive,reed2015neural,rosenbaum2019routing}. Specialists for different computation processes have been introduced in many models such as RNNs and transformers \citep{goyal2019recurrent,lamb2021transformers,goyal2021coordination}. Specialists for entities or objects in fields including computer vision \citep{kipf2019contrastive}. Methods have also been proposed to taking both entity and computational process into consideration \citep{goyal2020object}. In a more recent work, in addition to entities and computations, rules were considered when designing specialists \citep{goyal2021neural}. Our Discrete-Valued Neural Communication method can be seen as introducing specialists of representation into machine learning model. 


\paragraph{Communication among Specialists}
Efficient communication among different specialist components in a model requires compatible representations and synchronized messages. In recent years, attention mechanisms are widely used for selectively communication of information among specialist components in machine learning modes \citep{goyal2019recurrent,goyal2021coordination,goyal2021neural} and transformers \citep{vaswani2017attention,lamb2021transformers}.collective memory and shared RNN parameters have also been used for multi-agent communication  \citep{garland1996multiagent,pesce2020improving}. Graph-based models haven been widely used in the context of relational reasoning, dynamical system simulation , multi-agent systems and many other fields. In graph neural networks, communication among different nodes were through edge attributes that a learned from the nodes the edge is connecting together with other information \citep{kipf2019contrastive,scarselli2008graph, bronstein2017geometric, watters2017visual,  van2018relational, kipf2018neural, battaglia2018relational, tacchetti2018relational,op3veerapaneni}.Graph based method represent entities, relations, rules and other elements as node, edge and their attributes in a graph \citep{koller2009probabilistic,battaglia2018relational}. In graph architectures, the inductive bias is assuming the system to be learnt can be represented as a graph. In this study, our DVNC introduces inductive bias that forces inter-component communication to be discrete and share the same codebook. The combinatorial properties come from different combinations of latent vectors in each head and different combination of heads in each representation vector.While most of inter-specialist communication mechanisms operates in a pairwise symmetric manner, 
\citet{goyal2021coordination} introduced a bandwidth limited communication channel to allow information from a limited number of modules to be broadcast globally to all modules, inspired by Global workspace theory \citep{baars2019consciousness,baars1993cognitive}.our proposed method selectively choose what information can communicated from each module. We argue these two methods are complimentary to each other and can be used together, which we like to investigate in future studies.





\begin{table}
\caption{Performance of Transformer Models with Discretized Communication on the Sort-of-Clevr Visual Reasoning Task.}
\label{Table:TransformerResults}
\resizebox{1.0\linewidth}{!}{
\begin{tabular}{c c c c}\hline
\textbf{Method} & \textbf{Ternary Accuracy} & \textbf{Binary Accuracy} & \textbf{Unary Accuracy} \\ 
\midrule
Transformer baseline                 & 57.25 $\pm$ 1.30                             & 76.00 $\pm$ 1.41                            & 97.75 $\pm$ 0.83                           \\ 
Discretized transformer (G=16)       & 61.33 $\pm$ 2.62                             & 84.00 $\pm$ 2.94                            & 98.00 $\pm$ 0.89                           \\ 
Discretized transformer (G=8)        & \highlight{62.67 $\pm$ 1.70}                             & \highlight{88.00 $\pm$ 0.82}                            & \highlight{98.75 $\pm$ 0.43}                           \\ 
Discretized transformer (G=1)        & 58.50 $\pm$ 4.72                             & 80.50 $\pm$ 7.53                            & 98.50 $\pm$ 0.50                           \\ 
		     \bottomrule
		    
	        \end{tabular}
               }
            \end{table}


\begin{figure}
     \centering
     \begin{subfigure}[b]{0.19\textwidth}
         \centering
         \includegraphics[width=\textwidth]{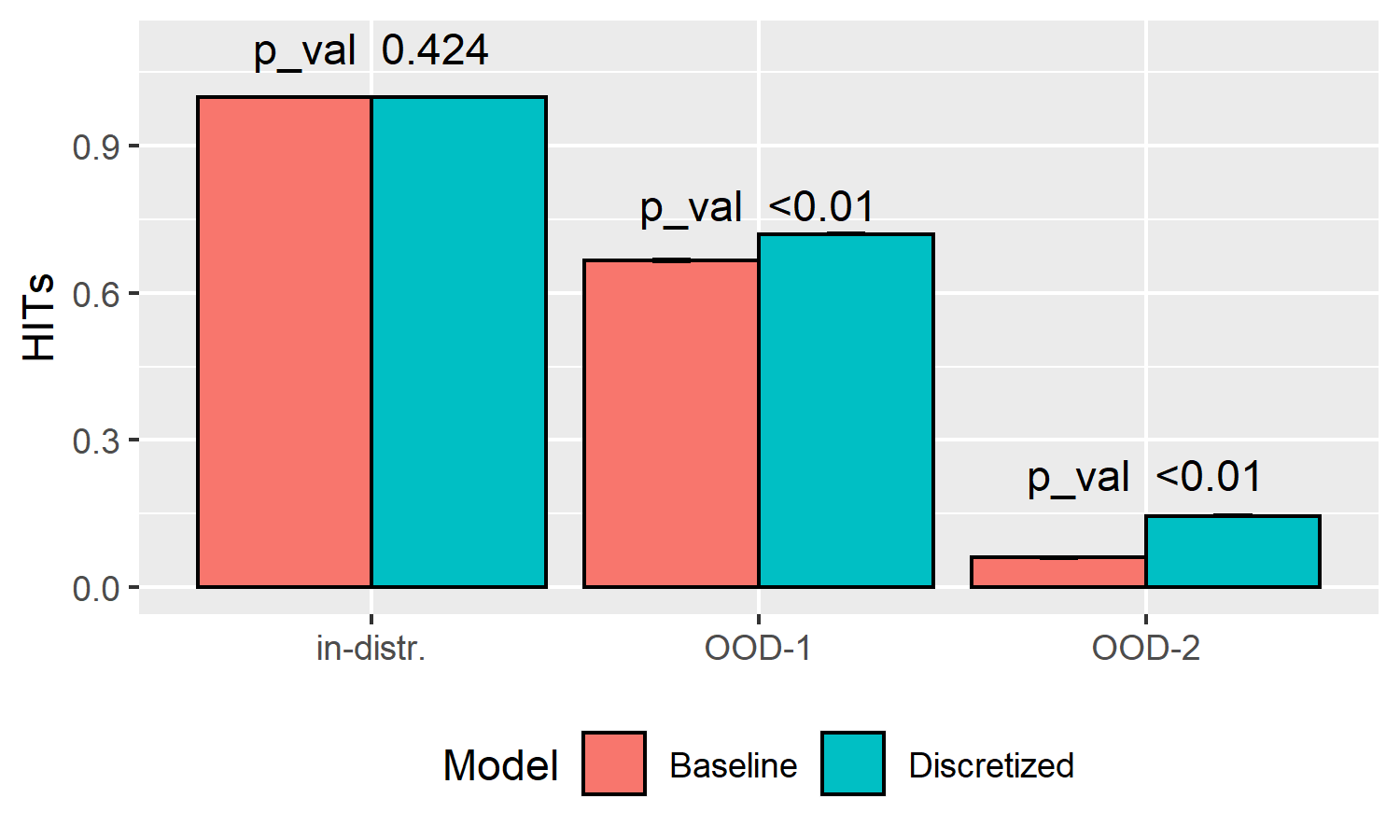}
         \caption{2D Shapes}
 \end{subfigure}
     \hfill
     \begin{subfigure}[b]{0.19\textwidth}
         \centering
         \includegraphics[width=\textwidth]{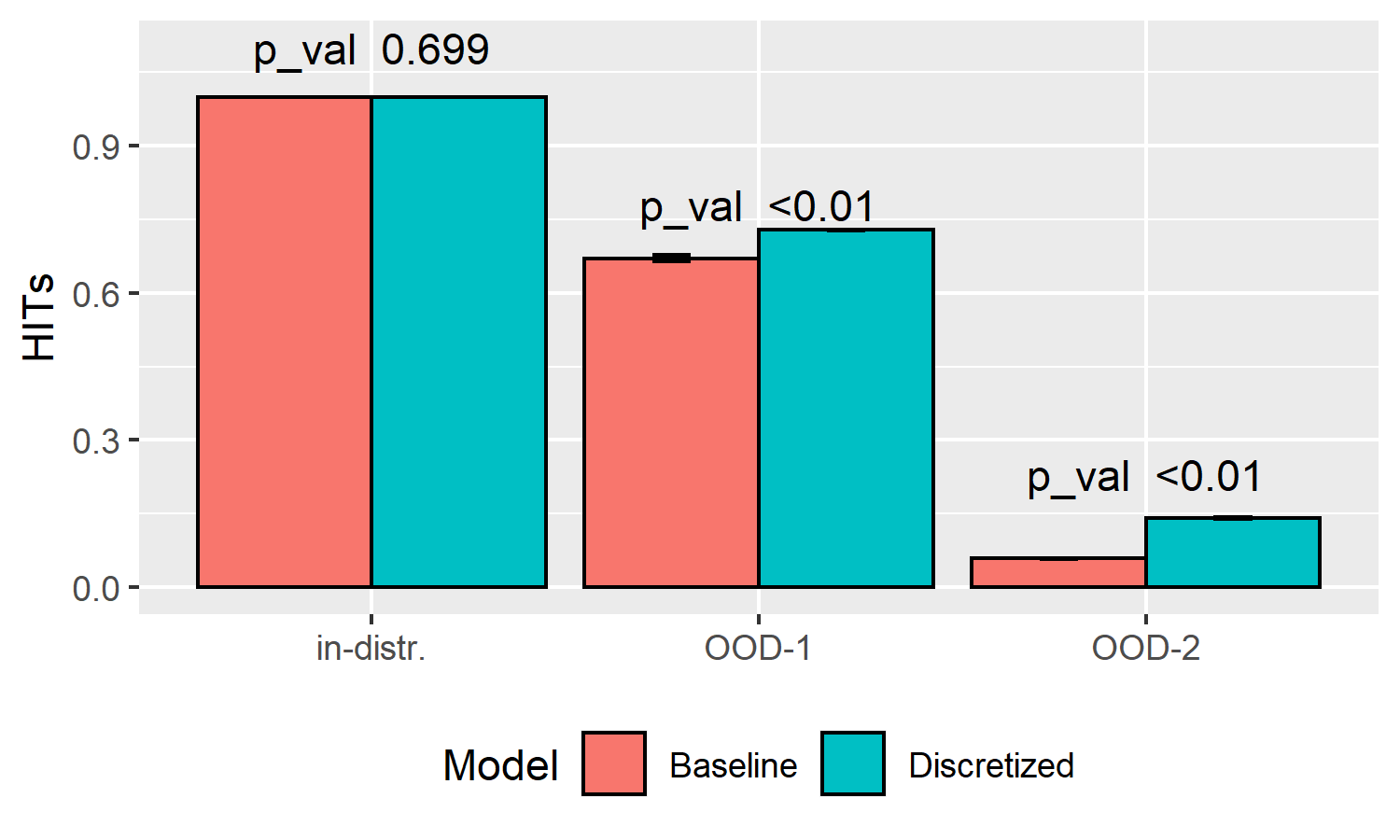}
         \caption{3D Shapes}
     \end{subfigure}
     \hfill
     \begin{subfigure}[b]{0.19\textwidth}
         \centering
         \includegraphics[width=\textwidth,trim={0 0 0 2cm},clip]{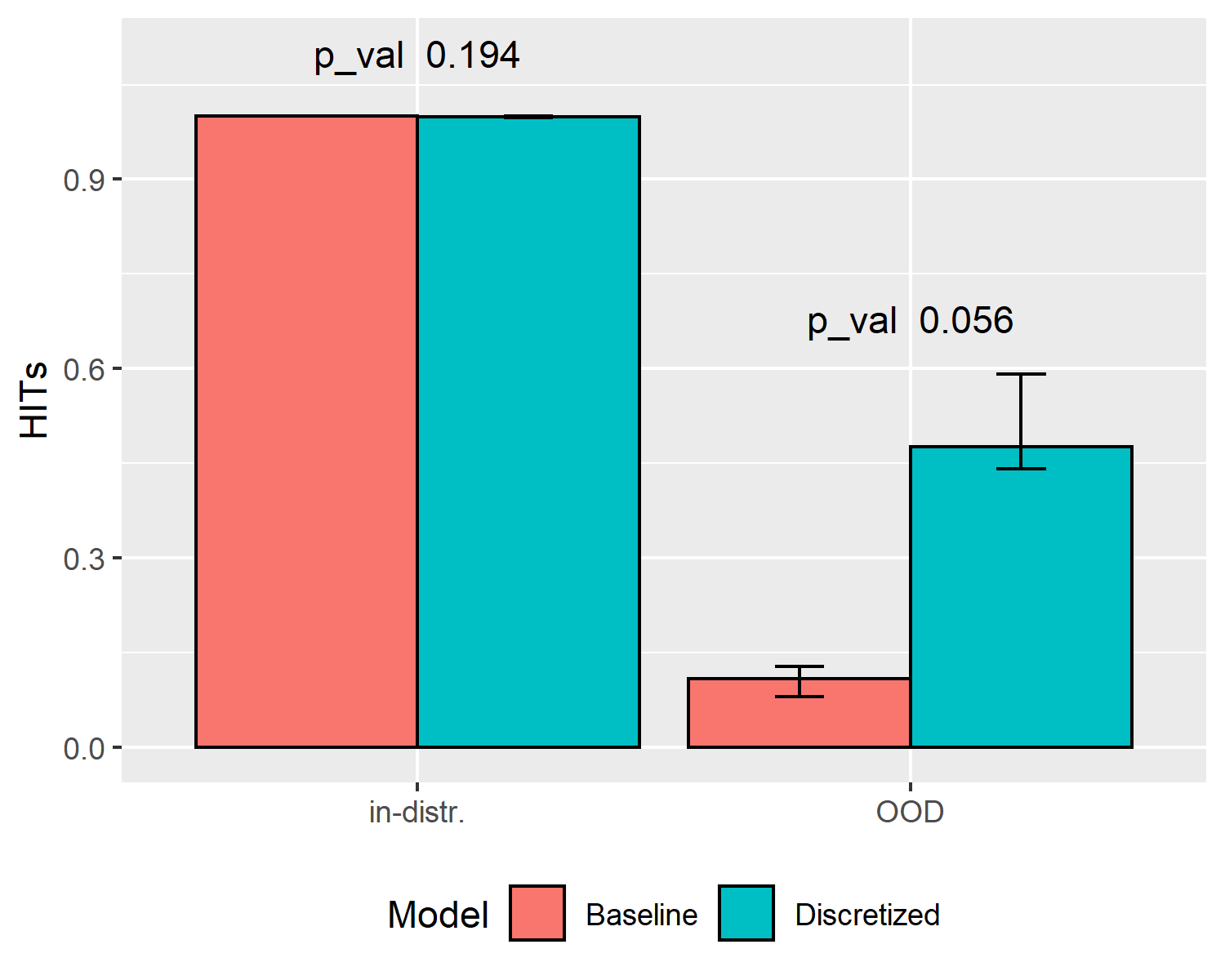}
         \caption{Three-Body}
     \end{subfigure}
     \hfill
     \begin{subfigure}[b]{0.19\textwidth}
         \centering
         \includegraphics[width=\textwidth]{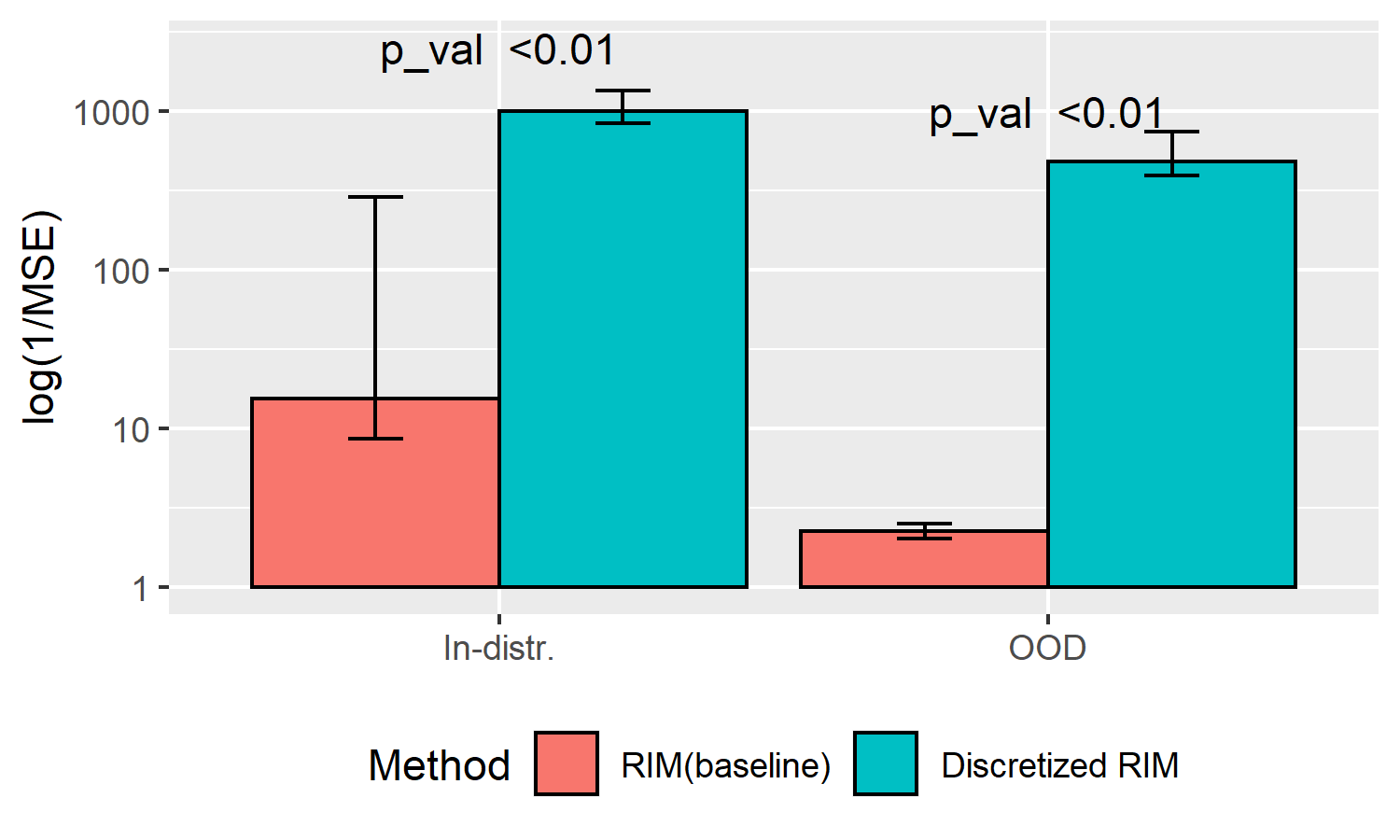}
         \caption{Adding}
     \end{subfigure}
     \hfill
     \begin{subfigure}[b]{0.19\textwidth}
         \centering
         \includegraphics[width=\textwidth]{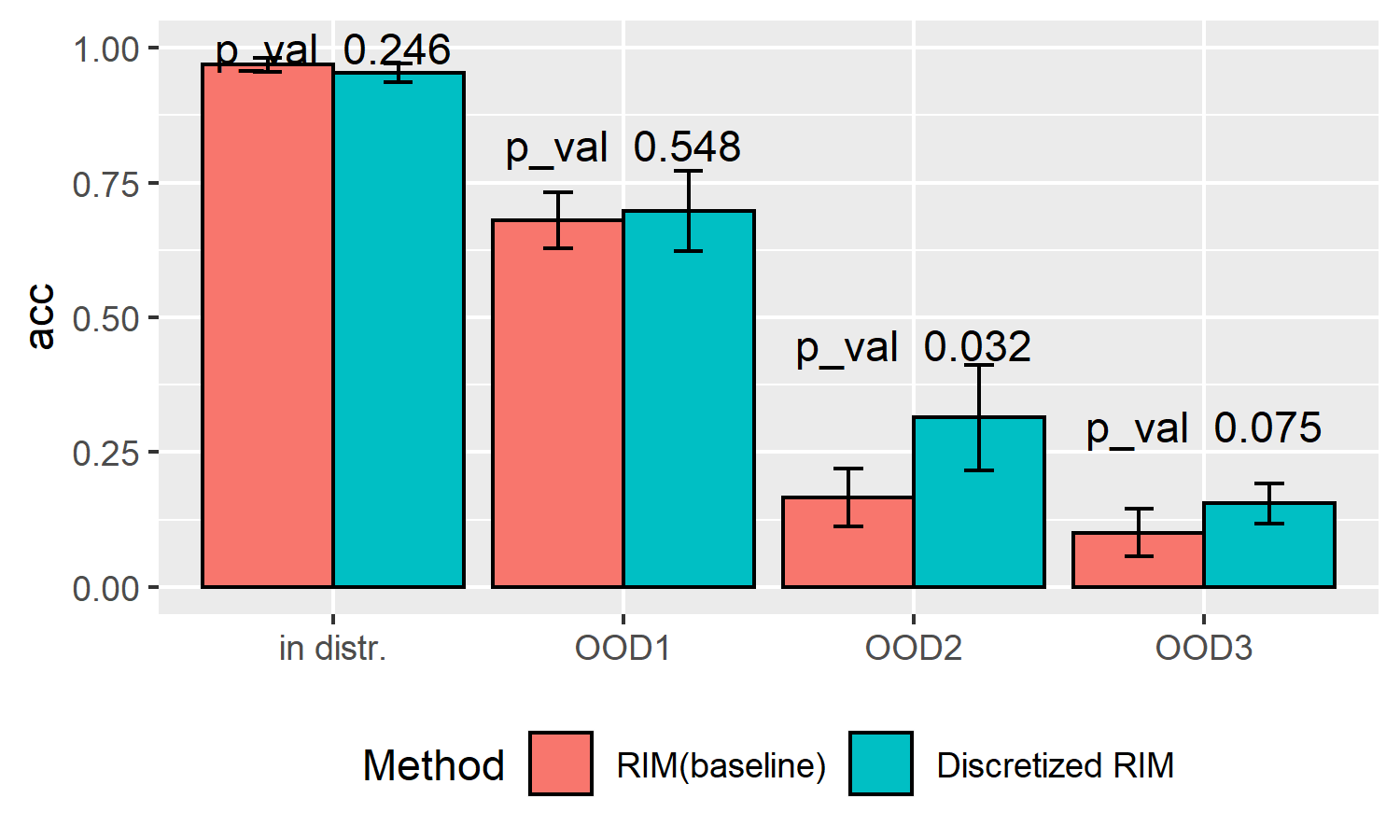}
         \caption{Seq. MNIST}
     \end{subfigure}

        \caption{GNN models (a,b and c) and RIMs with DVNC (d and e) show improved OOD generalization.  Test data in OOD 1 to 3 are increasingly different from training distribution.}
\label{fig:OODResult}
\end{figure}

\section{Experiments} \label{sec:experiments}


In this study we have two hypothesis: 1) The use of discrete symbols limits the bandwidth of communication. 2) The use of shared discrete symbols will promote systematic generalization.Theoretical results obtained in section \ref{sec:theory}, agree with hypothesis 1. In order to verify hypothesis 2, in this section, we design and conduct empirical experiments to show that discretization of inter-component communication improves OOD generalization and model performance.

\subsection{Communication along Edges of Graph Neural Networks}

We first turn our attention to visual reasoning task using GNNs and show that GNNs with DVNC have improved OOD generalization. In the tasks, the sequences of image frames in test set have different distribution from data used for training. We set up the model in a identical manner as in Kipf et al. 2019 \citep{kipf2019contrastive} except the introduction of DVNC. A CNN is used to to extract different objects from images. Objects are represented as nodes and relations between pairs of objects are represented as edges in the graph. The changes of nodes and edges can be learned by message passing in GNN. The information passed by all edges connected to a node is discretized by DVNC (see details in Section \ref{sec:GNN}). 

The following principles are followed when choosing and designing GNN visual reasoning tasks as well as all other OOD generalization tasks in subsequent sections: 1) Communication among different components are important for the tasks. 2) meaning of information communicated among components can be described in a discrete manner. 3) In OOD setting,distributions of information communicated among specialist components in test data also deviate from that in training data.

\paragraph{Object Movement Prediction in Grid World}

We begin with the tasks to predict object movement in grid world environments. Objects in the environments are interacting with each other and their movements are manipulated by actions applied to each object that give a push to the object in a randomly selected direction . Machine learning models are tasked to predict next positions of all objects in the next time step $t+1$ given their position and actions in time step $t$. We can think of this task as detecting whether an object can be pushed toward a direction. If object is blocked by the environment or another object in certain direction, then it can not be moved in that direction. Positions of objects are captures by nodes in GNN and the relative positions among different objects can be communicated in a discrete manner via message passing through edges.

We adapted and modified the original 2D shapes and 3D shapes movement tasks from \citet{kipf2019contrastive} by introducing different number of objects in training or testing environment. In both 2D and 3D shapes tasks ,five objects are available in training data, three objects are available in OOD-1 and only two objects are available in OOD-2. Our experimental results suggest that DVNC in GNN improved OOD generalization in a statistically significant manner (Figure~\ref{fig:OODResult}). In addition, the improvement is robust across different hyperparameters $G$ and $L$ (Figure~\ref{fig:DifferentGL}). Details of the visual reasoning tasks set up and model hyperparameters can be found in Appendix.

\paragraph{Three-body Physics Object Movement Prediction}
Next, we turn our attention to  a three-body-physics environment in which three balls  interacting with each other and move according to physical laws in classic mechanics in a 2D environment. There are no external actions applied on the objects. We  adapted and modified the three-body-physics environment from \citep{kipf2019contrastive}. In OOD experiment, the sizes of the balls are different from the original training data. Details of the experimental set up can be found in Appendix. Our experimental results show that GNN with DVNC improved OOD generalization (Figure~\ref{fig:OODResult} ).

\paragraph{Movement Prediction in Atari Games}
Similarly, we design OOD movement prediction tasks for 8 Atari games. Changes of each image frame depends on previous image frame and actions applied upon different objects. A different starting frame number is used to make the testing set OOD. GNN with DVNC showed statistically significant improvement in 6 out of 8 games and marginally significant improvement in the other games (Table~\ref{table_atari}).

\begin{table}
\centering
\caption{Graph Neural Networks benefit from discretized communication on OOD generalization in predicting movement in Atari games.  }
\resizebox{0.99\linewidth}{!}{
\begin{tabular}{c c c c | c c c c}
\hline
\textbf{Game} & \textbf{GNN (Baseline)}  & \textbf{GNN (Discretized)} & \textbf{P-Value (vs. baseline)} & \textbf{Game} & \textbf{GNN (Baseline)}  & \textbf{GNN (Discretized)} & \textbf{P-Value (vs. baseline)} \\ \midrule
Alien  & 0.1991 $\pm$ 0.0786 & \highlight{0.2876 $\pm$ 0.0782} & 0.00019  & DoubleDunk & 0.8680 $\pm$ 0.0281 & \highlight{0.8793 $\pm$ 0.0243}  & 0.04444 \\ 
BankHeist & 0.8224 $\pm$ 0.0323 & \highlight{0.8459 $\pm$ 0.0490} & 0.00002  & MsPacman & 0.2005 $\pm$ 0.0362 & \highlight{0.2325 $\pm$ 0.0648} & 0.05220\\ 
Berzerk & 0.6077 $\pm$ 0.0472 & \highlight{0.6233 $\pm$ 0.0509} & 0.06628 & Pong & 0.1440 $\pm$ 0.0845 & \highlight{0.2965 $\pm$ 0.1131} & 0.00041 \\ 
Boxing & 0.9228 $\pm$ 0.0806 & \highlight{0.9502 $\pm$ 0.0314} & 0.69409 & SpaceInvaders & 0.0460 $\pm$ 0.0225 & \highlight{0.0820 $\pm$ 0.0239} & 0.00960\\ \bottomrule
\end{tabular}}
\label{table_atari}
\end{table}

\subsection{Communication Between Positions in Transformers}

In transformer models, attention mechanism is used to communicate information among different position. We design and conduct two visual reasoning tasks to understand if discretizing results of attention in transformer models will help improve the performance (Section \ref{sec:Transformer}). In the tasks, transformers take sequence of pixel as input and detect relations among different objects which are communicated through attention mechanisms.

We experimented with the Sort-of-CLEVR visual relational reasoning task,  where the model is tasked with answering questions about certain properties of various objects and their relations with other objects \citep{santoro2017simple}. Each image in Sort-of-CLEVR contains  randomly placed geometrical shapes of different colors and  shapes. Each image comes with 10 relational questions and 10 non-relational questions. Nonrelational questions only consider properties of individual objects. On the other hand, relational questions consider relations among multiple objects. The input to the model consists of the image and the corresponding question. Each image and question
come with a finite number of possible answers and hence this task is to classify and pick the correct the answer\citep{goyal2021coordination}.
Transformer models with DVNC show significant improvement (Table~\ref{Table:TransformerResults}).



\subsection{Communication with Modular Recurrent Neural Networks}

Recurrent Independent Mechanisms(RIMs) are RNN with modular structures. In RIMs, units communicate with each other using attention mechanisms at each time step. We discretize results of inter-unit attention in RIMs (Section \ref{sec:RIM}). We conducted a numeric reasoning task and a visual reasoning task to understand if DVNC applied to RIMs improves OOD generalization. 

We considered a synthetic adding task in which the model is trained to compute the sum of a sequence of numbers followed by certain number of dummy gap tokens\citep{goyal2019recurrent}. In OOD settings of the task, the number of gap tokens after the target sequence is different in test set from training data. Our results show that DVNC makes significnat improvement in the OOD task(Figure~\ref{fig:OODResult}). 

For further evidence that RIMs with DVNC can better achieve OOD generlization, we consider the task of classifying MNIST digits as sequences of pixels \citep{krueger2016zoneout} and assay generalization to images of resolutions different from those seen during training. Our results suggest that RIMs with DVNC have moderately better OOD generalization than the baseline especially when the test images are very different from original training data (Figure~\ref{fig:OODResult}).

\begin{figure}[t!]
     \centering
     \begin{subfigure}[b]{0.24\textwidth}
         \centering
          \caption{2D Shapes (G)}
         \includegraphics[width=\textwidth]{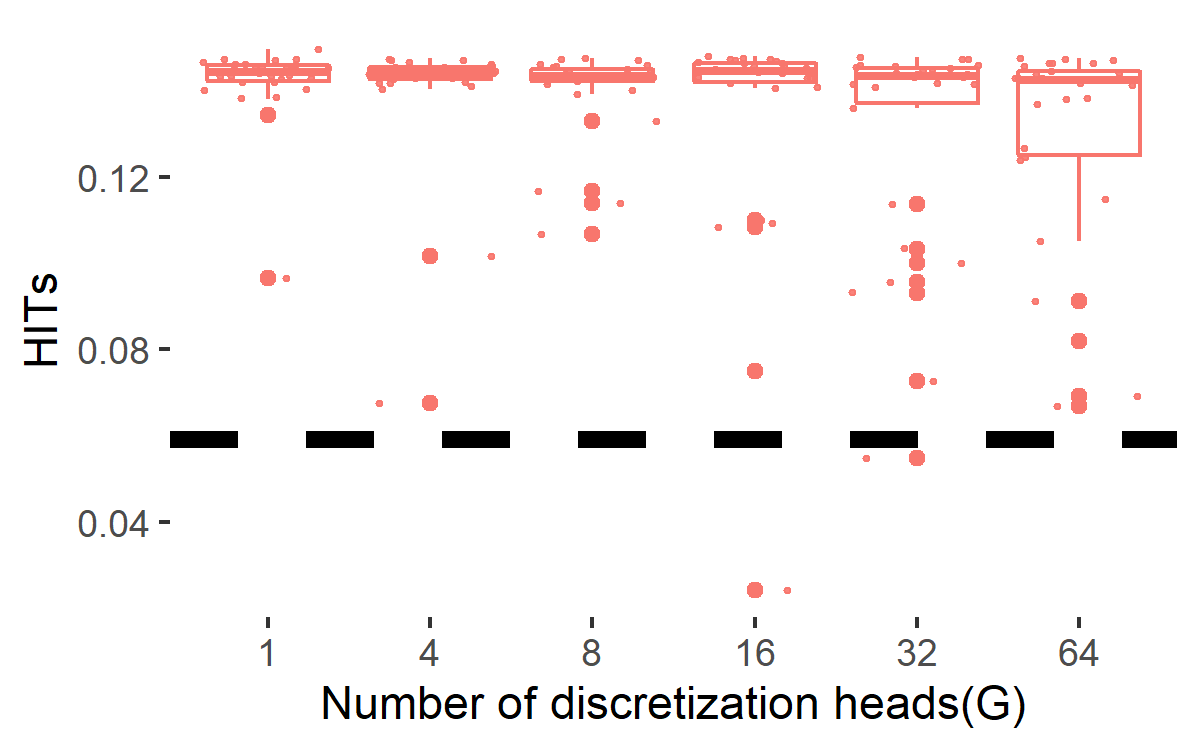}
        
     \end{subfigure}
     \hfill
     \begin{subfigure}[b]{0.24\textwidth}
         \centering
          \caption{2D Shapes (L)}
         \includegraphics[width=\textwidth]{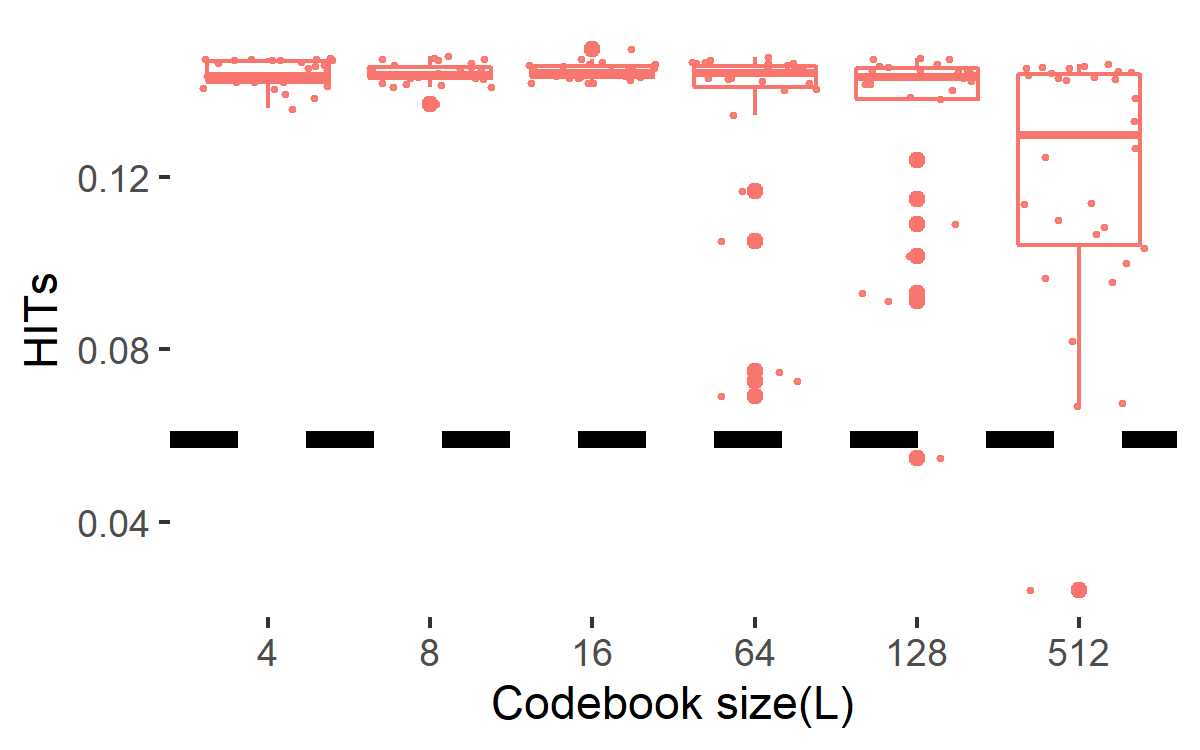}
        
     \end{subfigure}
     \hfill
     \begin{subfigure}[b]{0.24\textwidth}
         \centering
         \caption{3D Shapes (G)}
         \includegraphics[width=\textwidth]{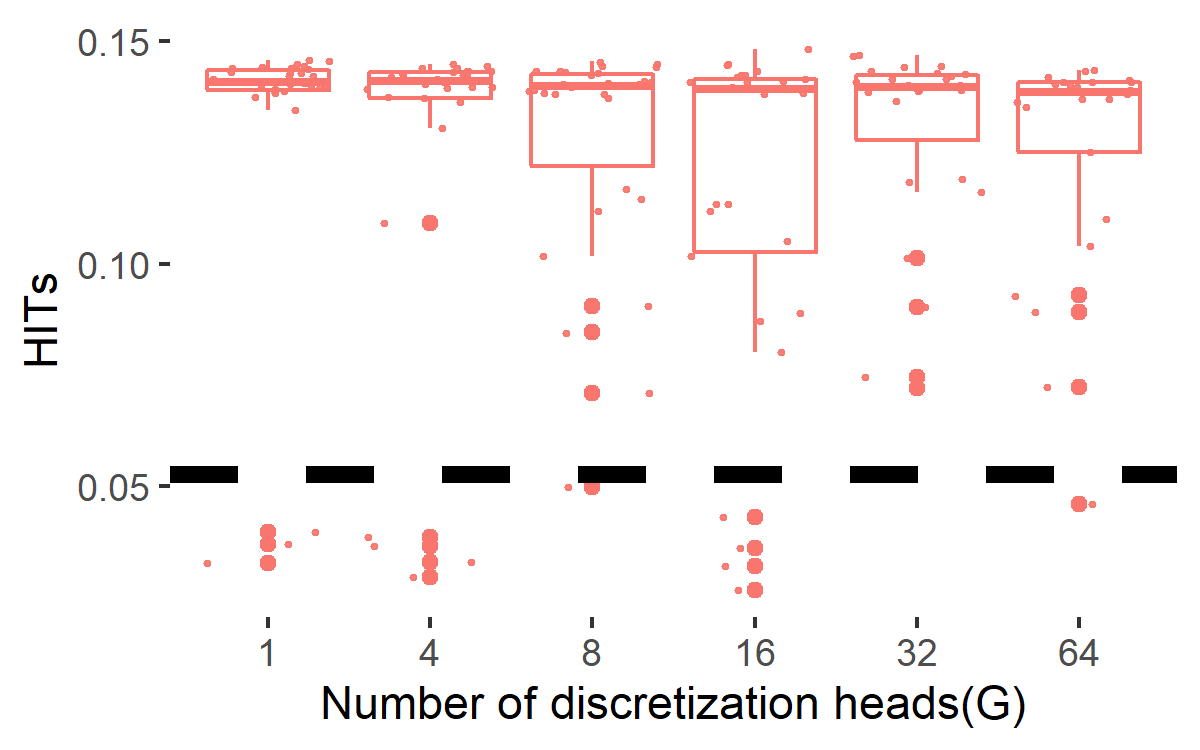}
         
     \end{subfigure}
      \hfill
     \begin{subfigure}[b]{0.24\textwidth}
         \centering
          \caption{3D Shapes (L)}
         \includegraphics[width=\textwidth]{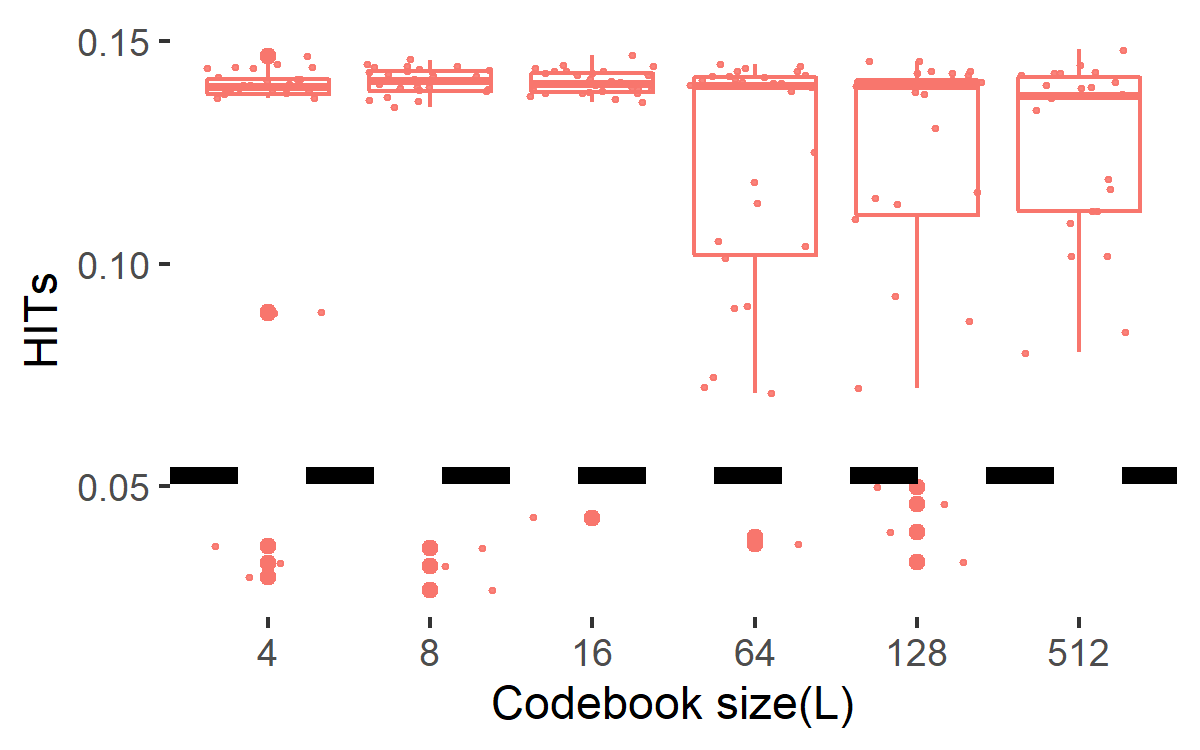}
        
     \end{subfigure}
     
     \begin{subfigure}[b]{0.23\textwidth}
         \centering
         \vspace{4mm}
         \caption{2D Shapes(GNN)}
         \vspace{-2mm}
         \includegraphics[width=\textwidth]{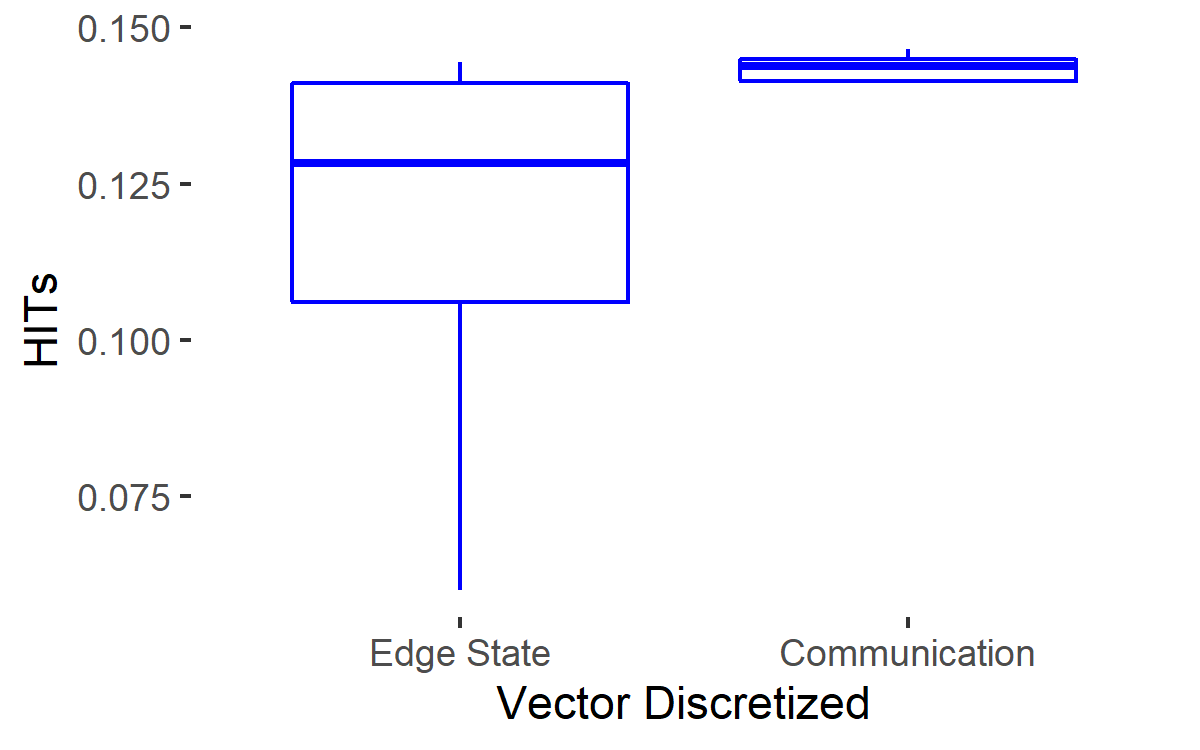}
     \end{subfigure}
     \hfill
     \begin{subfigure}[b]{0.23\textwidth}
         \centering
         \vspace{4mm}
         \caption{3D Shapes(GNN)}
         \vspace{-2mm}
         \includegraphics[width=\textwidth]{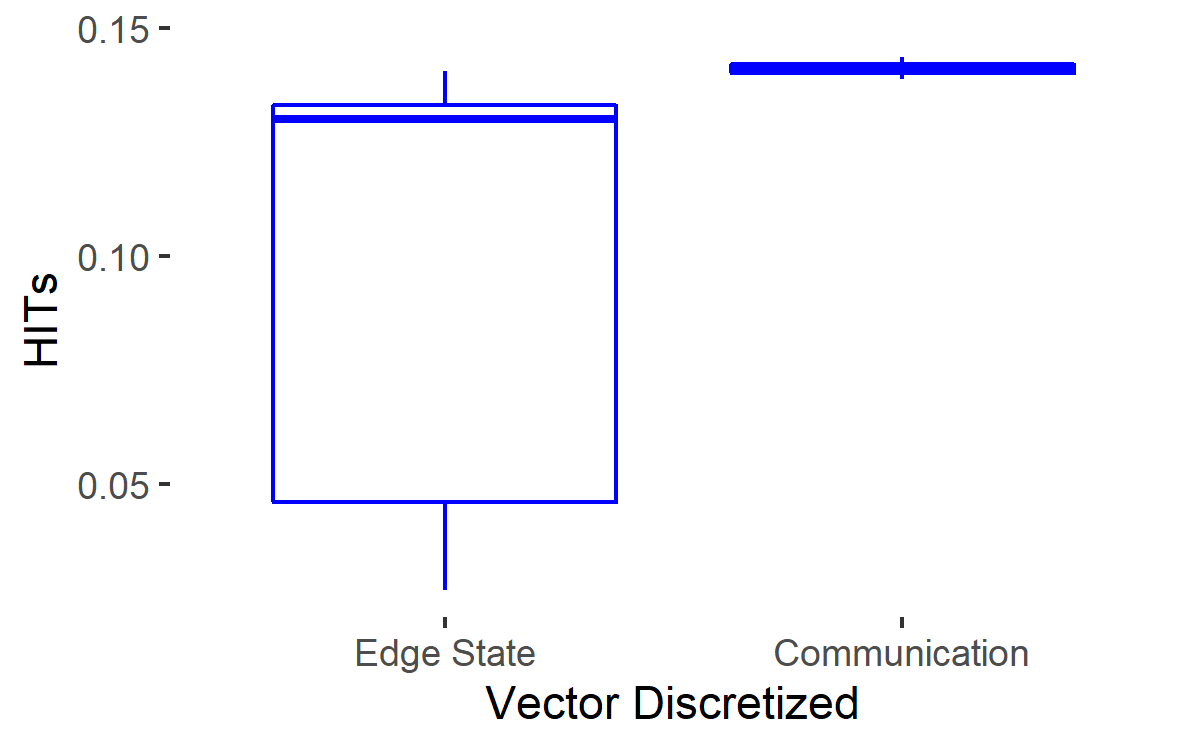}
         
     \end{subfigure}
    \hfill
     \begin{subfigure}[b]{0.48\textwidth}
         \centering
         \vspace{4mm}
         \caption{Adding Task(RIM)}
         \vspace{-2mm}
         \includegraphics[width=\textwidth,height=2cm]{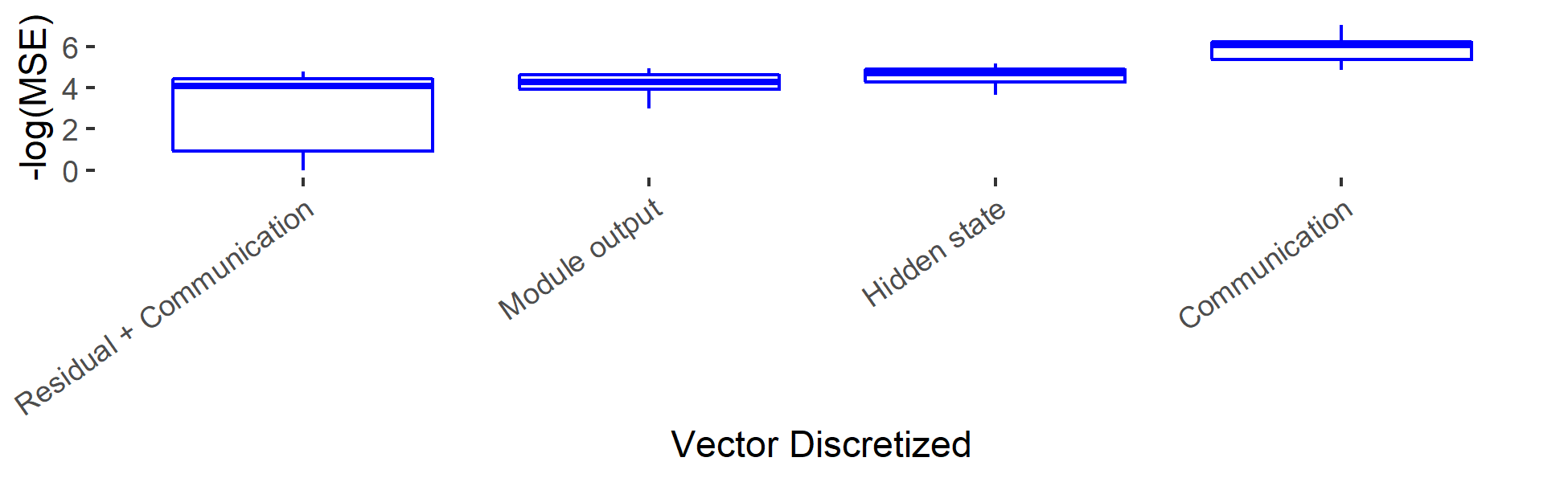}
        
     \end{subfigure}
     
\caption{Upper Row: Models with DVNC have improved OOD generalization in wide range of hyperparameter settings. Red dots are performances of models with DVNC with different codebook size (L) and number of heads (G). Black dashed lines are performance of baseline methods without discretization. Lower row: (e) and (f) compare OOD generalization in HITs (higher is better) between GNNs with results of communication discretized vs. edge states discretized. (g) Compares RIMs model with results of communication discretized vs. other vectors discretized. }
\label{fig:DifferentGL} 
\end{figure}

\subsection{Analysis and Ablations}

\paragraph{Discretizing Communication Results is Better than Discretizing other Parts of the Model: }  The intuition is that discretizing the results of communication with a shared codebook encourages more reliable and independent processing by different specialists.  We experimentally tested this key hypothesis on the source of the DVNC's success by experimenting with discretizing other parts of the model.  For example, in RIMs, we tried discretizing the updates to the recurrent state and tried discretizing the inputs.  On the adding task, this led to improved results over the baseline, but performed much worse than discretizing communication.  For GNNs we tried discretizing the input to the communication (the edge hidden states) instead of the result of communication, and found that it led to significantly worse results and had very high variance between different trials.  These results are in Figure~\ref{fig:DifferentGL}.  

\paragraph{VQ-VAE Discretization Outperforms Gumbel-Softmax: } The main goal of our work was to demonstrate the benefits of communication with discrete values, and this discretization could potentially be done through a number of different mechanisms.  Experimentally, we found that the nearest-neighbor and straight-through estimator based discretization technique,similar to method used in VQ-VAE, outperformed the use of a Gumbel-Softmax to select the discrete tokens (Figure~\ref{fig:gumbel} in Appendix).  These positive results led us to focus on the VQ-VAE discretization technique, but in principle other mechanisms could also be recruited to accomplish our discrete-valued neural communication framework. We envision that DVCN should work complementarily with future advances in learning discrete representations.

\vspace{-2mm}
\section{Discussion}
\vspace{-2mm}
With the evolution of deep architectures from the monolithic MLP to complex architectures with specialized components, we are faced with the issue of how to ensure that subsystems can communicate and coordinate.
Communication via continuous, high-dimensional signals is a natural choice given the history of deep learning, but our work argues that discretized communication results in more robust, generalizable learning. Discrete-Valued Neural Communication (DVNC) achieves a much lower noise-sensitivity bound while allowing high expressiveness through the use of multiple discretization heads.  This technique is simple and easy-to-use in practice and improves out-of-distribution generalization. DVNC is applicable to all structured architectures that we have examined where inter-component communication is important  and the information to be communicated can be discretized by its nature. 
\paragraph{Limitations} 
The proposed method has two major limitations. First, DVNC can only improve performance if communication among specialists is important for the task. If the different components do not have good specialization, then DVNC's motivation is less applicable. Another limitation is that the discretization process can reduce the expressivity of the function class, although this can be mitigated by using a large value for $G$ and $L$ and can be partially monitored by the quantity of training data (e.g., training loss) similarly to the principle of structural minimization. Hence future work could examine how to combine discrete communication and continuous communication. 
\paragraph{Social Impact} Research conducted in this study is purely technical. The authors expect no direct  negative nor positive social impact.

\clearpage
\typeout{} 
\bibliography{References}
\bibliographystyle{apalike}


\clearpage
\section*{Checklist}

The checklist follows the references.  Please
read the checklist guidelines carefully for information on how to answer these
questions.  For each question, change the default \answerTODO{} to \answerYes{},
\answerNo{}, or \answerNA{}.  You are strongly encouraged to include a {\bf
justification to your answer}, either by referencing the appropriate section of
your paper or providing a brief inline description.  For example:
\begin{itemize}
  \item Did you include the license to the code and datasets? \answerYes{See Section 2 (`General formatting instructions') of the tex template (`Formatting Instructions For NeurIPS 2021').}
  \item Did you include the license to the code and datasets? \answerNo{The code and the data are proprietary.}
  \item Did you include the license to the code and datasets? \answerNA{}
\end{itemize}
Please do not modify the questions and only use the provided macros for your
answers.  Note that the Checklist section does not count towards the page
limit.  In your paper, please delete this instructions block and only keep the
Checklist section heading above along with the questions/answers below.

\begin{enumerate}

\item For all authors...
\begin{enumerate}
  \item Do the main claims made in the abstract and introduction accurately reflect the paper's contributions and scope?
    \answerYes{Abstract and introduction directly reflect the paper's contribution and scope }

  \item Did you describe the limitations of your work?
     \answerYes{see last paragraph of section 6 of the manuscript }

  \item Did you discuss any potential negative societal impacts of your work?
    \answerNA{This study is purely technical and do not involve any social aspects}

  \item Have you read the ethics review guidelines and ensured that your paper conforms to them?
    \answerYes{}
\end{enumerate}

\item If you are including theoretical results...
\begin{enumerate}
  \item Did you state the full set of assumptions of all theoretical results?
    \answerYes{See section 2 and related appendix}
    
	\item Did you include complete proofs of all theoretical results?
    \answerYes{see appendix}
\end{enumerate}

\item If you ran experiments...
\begin{enumerate}
  \item Did you include the code, data, and instructions needed to reproduce the main experimental results (either in the supplemental material or as a URL)?
    \answerYes{see section 2, URL available after double-blind review}

  \item Did you specify all the training details (e.g., data splits, hyperparameters, how they were chosen)?
    \answerYes{see appendix}

	\item Did you report error bars (e.g., with respect to the random seed after running experiments multiple times)?
    \answerYes{see figures with bar and box plot, error bars and p-val provided}
    
	\item Did you include the total amount of compute and the type of resources used (e.g., type of GPUs, internal cluster, or cloud provider)?
    \answerYes{See appendix}
\end{enumerate}

\item If you are using existing assets (e.g., code, data, models) or curating/releasing new assets...
\begin{enumerate}
  \item If your work uses existing assets, did you cite the creators?
    \answerYes{ all cited}

  \item Did you mention the license of the assets?
    \answerYes{}

  \item Did you include any new assets either in the supplemental material or as a URL?
    \answerYes{ new dataset released together with code on github after double-blind review}

  \item Did you discuss whether and how consent was obtained from people whose data you're using/curating?
    \answerNo{ publicly available data used}

  \item Did you discuss whether the data you are using/curating contains personally identifiable information or offensive content?
    \answerNA{ no personally identifiable information included}
\end{enumerate}

\item If you used crowdsourcing or conducted research with human subjects...
\begin{enumerate}
  \item Did you include the full text of instructions given to participants and screenshots, if applicable?
    \answerNA{}
  \item Did you describe any potential participant risks, with links to Institutional Review Board (IRB) approvals, if applicable?
    \answerNA{}
  \item Did you include the estimated hourly wage paid to participants and the total amount spent on participant compensation?
    \answerNA{}
\end{enumerate}

\end{enumerate}


\clearpage

\appendix

\allowdisplaybreaks

\clearpage

\section{Additional theorems for theoretical motivations} \label{sec:2}
In this appendix, as a complementary to Theorems \ref{prop:1}--\ref{prop:2}, we provide additional theorems, Theorems \ref{thm:1}--\ref{thm:2}, which  further illustrate the two advantages  of the discretization process by considering an   abstract model with the discretization bottleneck.
For the  advantage on the sensitivity, the error due to potential noise and perturbation without discretization --- the third term  $\xi(w,r',\Mcal',d)>0$ in Theorem \ref{thm:2} --- is shown to be minimized to  zero with discretization  in Theorems \ref{thm:1}. For the second advantage,       the  underlying dimensionality of  $\Ncal_{(\Mcal',d')}(r',\Hcal)+\ln(\Ncal_{(\Mcal,d)}(r, \Theta)/\delta)$ without discretization (in the bound of Theorem \ref{thm:2}) is proven to be reduced to the typically much smaller  underlying dimensionality of $L^{G}+ \ln(\Ncal_{(\Mcal,d)}(r, E\times \Theta)$ with discretization in  Theorems \ref{thm:1}. Here, for any metric space $(\Mcal,d)$ and subset $M\subseteq \Mcal$,   the  $r$-converging number of $M$\ is defined by 
$
\Ncal_{(\Mcal,d)}(r,M)=\min\left\{|\Ccal|: \Ccal \subseteq \Mcal, M\subseteq \cup_{c \in\Ccal} \Bcal_{(\Mcal,d)}[c,r]\right.\}$ where the (closed) ball of radius $r$ at centered at $c$ is denoted by $\Bcal_{(\Mcal,d)}[c,r]=\{x\in \Mcal : d(x,c)\le r\}$.  See Appendix \ref{app:2} for a   simple   comparison between the bound of Theorem \ref{thm:1} and that of Theorem \ref{thm:2} when  the metric spaces $(\Mcal,d)$ and $(\Mcal',d')$ are chosen to be Euclidean spaces.

We now introduce the  notation used in Theorems \ref{thm:1}--\ref{thm:2}. Let $q_e(h):=q(h,L,G)$. The models are defined by $\tf(x):=\tf(x,w,\theta):=
(\varphi_w \circ h_\theta)(x)$ without the discretization and   $f(x):=f(x,w,e,\theta) :=(\varphi_w \circ q_e \circ h_\theta)(x)$ with the discretization.
 Here, $\varphi_w$ represents a deep neural network  with  weight parameters $w\in \Wcal \subset \RR^{D}$, $q_e$ is the discretization process with the  codebook $e \in E  \subset \RR^{L\times m}$, and $h_\theta$ represents a deep neural network with  parameters $\theta\in \Theta \subset\RR^{\zeta}$. Thus, the tuple of all learnable parameters are $(w, e, \theta)$. For the codebook space,  $E=E_1 \times E_2$ with $E_1 \subset \RR^L$ and $E_2 \subset \RR^m$. Moreover, let $J:(f(x) ,y) \mapsto J(f(x) ,y)\in \RR $ be an arbitrary (fixed) function,   $h_{\theta}(x) \in \Hcal \subset \RR^m$,
  $x \in \Xcal$, and $y \in \Ycal =\{y^{(1)}, y^{(2)}\}$ for some $y^{(1)}$ and $y^{(2)}$.

\begin{theorem} \label{thm:1}
\emph{(with discretization)}
Let $C_{ J}(w)$ be the smallest real number such that $| J(\varphi_w(\eta) ,y)|\le C_{ J}(w)$ for all $(\eta,y)\in E_2\times \Ycal $. Let $\rho \in \NN^+$ and $(\Mcal,d)$ be a matric space such that $E\times \Theta \subseteq \Mcal$. Then, for any $\delta>0$,  with  probability at least $1-\delta$ over an iid draw of $n$ examples $((x_i, y_i))_{i=1}^n$, the following holds: for any $(w,e,\theta) \in \Wcal \times E \times \Theta$,   
\begin{align*}
&\left|\EE_{x,y}[J(f(x,w,e,\theta) ,y)] - \frac{1}{n}\sum_{i=1}^nJ(f(x_{i},w,e,\theta) ,y_{i}) \right| 
\\ & \le C_{J}(w) \sqrt{\frac{4L ^{G}\ln2 + 2 \ln(\Ncal_{(\Mcal,d)}(r, E\times \Theta)/\delta)}{n}}+ \sqrt{\frac{\Lcal_{d} (w)^{2/\rho}}{n}}, 
\end{align*}
where $r=\Lcal_{d} (w)^{1/\rho-1} \sqrt{\frac{1}{n}}$ and $\Lcal_{d} (w)\ge 0$ is the smallest real number such that for all $(e,\theta)$ and $(e',\theta')$ in $E\times \Theta$,
$
|\psi_{w}(e,\theta) -\psi_{w}(e',\theta')| \le\Lcal_{d} (w) d((e,\theta), (e',\theta'))$ with $\psi_{w}(e,\theta)= \EE_{x,y}[ J(f(x) ,y)]-\frac{1}{n}\sum_{i=1}^n  J(f(x_{i}) ,y_{i})$
\end{theorem}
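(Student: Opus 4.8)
The plan is to prove Theorem~\ref{thm:1} by combining a union bound over the finitely many discretization cells with a covering-number argument over the continuous parameter space, in the spirit of the proof of Theorem~\ref{prop:1}. First I would observe that, with the codebook $e$ fixed, the composed map $q_e \circ h_\theta$ takes values in the finite set $\{Q_1,\dots,Q_{L^G}\}$, so the function $x \mapsto J(f(x,w,e,\theta),y)$ factors through a partition of $\Xcal \times \Ycal$ induced by which cell $h_\theta(x)$ lands in and which of the two labels $y$ takes. This reduces the deviation between $\EE_{x,y}[J(f(x,w,e,\theta),y)]$ and its empirical average to controlling, uniformly, the empirical means of the $2L^G$ indicator-weighted pieces $\one\{h_\theta(x) \in \text{cell }k,\ y = y^{(j)}\} J(\varphi_w(Q_k^{(2)}),y^{(j)})$, each bounded by $C_J(w)$. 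A Hoeffding bound plus a union bound over these $2L^G$ pieces gives a term scaling like $C_J(w)\sqrt{(L^G\ln 2 + \ln(1/\delta))/n}$ — but only for a \emph{fixed} $(w,e,\theta)$.

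The second step is to make the bound uniform over $(w,e,\theta)$. Here I would fix $w$ and use the Lipschitz hypothesis: the map $(e,\theta) \mapsto \psi_w(e,\theta)$ (the deviation itself as a function of the parameters) is $\Lcal_d(w)$-Lipschitz w.r.t.\ the metric $d$ on $\Mcal \supseteq E \times \Theta$. Taking an $r$-cover $\Ccal$ of $E \times \Theta$ of size $\Ncal_{(\Mcal,d)}(r, E\times\Theta)$, I would apply the fixed-parameter bound at each of the $|\Ccal|$ centers with failure probability $\delta/|\Ccal|$, then transfer to an arbitrary $(e,\theta)$ by paying an additive $\Lcal_d(w) r$ from the Lipschitz property. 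Choosing $r = \Lcal_d(w)^{1/\rho - 1} n^{-1/2}$ makes this penalty equal to $\Lcal_d(w)^{1/\rho} n^{-1/2} = \sqrt{\Lcal_d(w)^{2/\rho}/n}$, which is exactly the second term in the claimed bound; the union-bound cost $\ln(|\Ccal|/\delta)$ gets absorbed into the $\ln(\Ncal_{(\Mcal,d)}(r,E\times\Theta)/\delta)$ inside the first term, and tracking the factors of $2$ and $4$ carefully (the $4L^G\ln 2$ rather than $2L^G\ln 2$) should come out of the Hoeffding constants and the two-sided/two-label bookkeeping.

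The subtle points I expect to be the main obstacle are: (i) getting the constants to land \emph{exactly} as stated (the theorem claims "no constant is hidden," implicitly, by analogy with Theorems~\ref{prop:1}--\ref{prop:2}, so the $4L^G\ln 2 + 2\ln(\cdots/\delta)$ and the $\sqrt{\cdot}$ without a leading constant must be reproduced precisely — this requires being careful about whether one uses $|\phi|\le\alpha$ versus a range of $2\alpha$ in Hoeffding, and about the factor of $2$ from union-bounding two-sided deviations); and (ii) correctly handling the dependence of the covering radius $r$ on $w$ — since $\Lcal_d(w)$ varies with $w$, the cover is chosen after fixing $w$, so the argument is really "for each $w$, with probability $1-\delta$ over the sample, for all $(e,\theta)$, \dots," and one must check that $w$ itself need not be covered because the statement is allowed to hold simultaneously for all $w$ once the $(e,\theta)$-uniformity is established — i.e., the randomness is only in the sample and the bound's right-hand side is an explicit function of $w$. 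I would also need to verify the measurability/well-definedness of $C_J(w)$ and $\Lcal_d(w)$ as the claimed infima, but that is routine. The cleanest exposition mirrors the Appendix~\ref{app:proof} proof of Theorem~\ref{prop:1} step for step, replacing the single Hoeffding+union-bound over $L^G$ cells with the same over $2L^G$ cells and then inserting the covering argument for the continuous parameters.
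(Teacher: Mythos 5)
Your high-level decomposition is right, and the second half of your plan (fix $w$, cover $E\times\Theta$ at radius $r$, union-bound over the cover, transfer by the Lipschitz hypothesis, and optimize $r = \Lcal_d(w)^{1/\rho-1}n^{-1/2}$ to make the transfer cost exactly $\sqrt{\Lcal_d(w)^{2/\rho}/n}$) matches the paper's proof of Theorem~\ref{thm:1} step for step. However, the concentration step in your first half does not, and as stated it would not produce the claimed rate.

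You propose to bound the $2L^G$ indicator-weighted pieces one at a time with Hoeffding and then take a union bound over the $2L^G$ cells. Union-bounding cell by cell controls $\max_k |p_k - \hat p_k|$ at rate $\sqrt{\ln(L^G/\delta)/n}$, but the deviation you actually need to control after the exact within-cell cancellation is the aggregate $C_J(w)\sum_{k=1}^{2L^G}|p_k - \hat p_k|$; summing the per-cell bounds puts $L^G$ \emph{outside} the square root, giving a bound of order $L^G\sqrt{\ln(L^G/\delta)/n}$, which is worse than the claimed $\sqrt{L^G/n}$ by a multiplicative factor of roughly $\sqrt{L^G}$. The paper's Lemma~\ref{lemma:1} instead observes that the count vector $(|\Ical_1|,\dots,|\Ical_{2L^G}|)$ is multinomial (for fixed $e,\theta$) and applies the Bretagnolle--Huber--Carol inequality, which bounds the full $\ell_1$ deviation $\sum_k|p_k-\hat p_k|$ at rate $\sqrt{(2K\ln 2 + 2\ln(1/\delta))/n}$ with $K=2L^G$. (Equivalently, one may view this as Hoeffding plus a union bound over all $2^{2L^G}$ \emph{subsets} of cells, not over the $2L^G$ cells themselves, since $\sum_k|p_k-\hat p_k|=2\max_{A}(\hat P(A)-P(A))$.) This is not a cosmetic constant-tracking issue: the $4L^G\ln 2$ inside the square root is precisely the BHC term.

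A related but smaller point on your (ii): uniformity over $w$ in Lemma~\ref{lemma:1} is free \emph{because} the random quantity being controlled --- the multinomial $\ell_1$ deviation --- does not depend on $w$ at all; $C_J(w)$ is pulled out deterministically. Your stated justification ("the randomness is only in the sample and the bound's right-hand side is an explicit function of $w$") is not by itself a reason for simultaneity over $w$; it is the $w$-independence of the concentrated random variable that does the work, and that independence is exactly what the multinomial decomposition buys you and what a per-cell Hoeffding argument also preserves, so this piece of your plan is salvageable once the concentration tool is corrected.
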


\begin{theorem} \label{thm:2}
\emph{(without discretization)} 
 Let $\tC_{ J}(w)$ be the smallest real number such that $| J((\varphi_w \circ h_\theta)(x) ,y)|\le \tC_{ J}(w)$ for all $(\theta,x,y)\in \Theta \times \Xcal \times \Ycal $. Let  $\rho \in \NN^+$ and $(\Mcal,d)$ be a matric space such that $\Theta \subseteq \Mcal$. Let $(\Mcal',d')$ be a matric space such that $\Hcal \subseteq \Mcal'$. Fix $r'> 0$ and  $\bCcal_{r',d'} \in \argmin_{\Ccal}\{|\Ccal|:\Ccal \subseteq\Mcal',\Hcal \subseteq \cup_{c \in\Ccal} \Bcal_{(\Mcal', d')}[c,r ']\}$. Assume that for any $c \in\bCcal_{r',d'}$, we have $|(J(\varphi_w(h) ,y)-(J(\varphi_w(h') ,y)|\le \xi(w,r',\Mcal',d)$  for any $h,h'  \in \Bcal_{(\Mcal', d')}[c,r ']$ and $y \in \Ycal$. Then, for any $\delta>0$,  with  probability at least $1-\delta$ over an iid draw of $n$ examples $((x_i, y_i))_{i=1}^n$, the following holds: for any $(w,\theta) \in \Wcal \times \Theta$,   
\begin{align*}
&\left|\EE_{x,y}[J(\tf(x,w,\theta) ,y)] - \frac{1}{n}\sum_{i=1}^nJ(\tf(x_{i},w,\theta) ,y_{i}) \right| 
\\ & \le \tC_{J}(w) \sqrt{\frac{4\Ncal_{(\Mcal',d')}(r',\Hcal) \ln2 + 2 \ln(\Ncal_{(\Mcal,d)}(r, \Theta)/\delta)}{n}}+ \sqrt{\frac{\tLcal_{d} (w)^{2/\rho}}{n}}+  \xi(w,r',\Mcal',d), 
\end{align*}
where $r=\tLcal_{d} (w)^{1/\rho-1} \sqrt{\frac{1}{n}}$ and $\tLcal_{d} (w)\ge 0$ is the smallest real number
such that for all $\theta$ and $\theta'$ in $ \Theta$,
$
|\tpsi_{w}(\theta) -\tpsi_{w}(\theta')| \le\tLcal_{d} (w) d(\theta, \theta')
$ with  $\tpsi_{w}(\theta)= \EE_{x,y}[ J(\tf(x) ,y)]-\frac{1}{n}\sum_{i=1}^n  J(\tf(x_{i}) ,y_{i})$.
 \end{theorem}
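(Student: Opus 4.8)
\textbf{Proof proposal for Theorem~\ref{thm:2}.}

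The plan is to mirror the structure used for Theorem~\ref{thm:1} (and, before that, Theorems~\ref{prop:1}--\ref{prop:2}), because the statement is deliberately parallel: the $r'$-covering $\bCcal_{r',d'}$ of $\Hcal$ in the $(\Mcal',d')$ metric plays the same role that the finite discretized range $(Q_k)_{k\in[L^G]}$ played in the discretized case, and the $\xi(w,r',\Mcal',d)$ term is the residual oscillation of $J\circ\varphi_w$ inside each covering ball. First I would fix the covering $\bCcal_{r',d'}=\{c_1,\dots,c_{N'}\}$ with $N'=\Ncal_{(\Mcal',d')}(r',\Hcal)$, and introduce the ``snapped'' feature map: for each $x$, let $\pi(h_\theta(x))$ be the index of a ball containing $h_\theta(x)$, so that $h_\theta(x)\in\Bcal_{(\Mcal',d')}[c_{\pi(h_\theta(x))},r']$. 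Then for each fixed $w,\theta$ the empirical-process quantity $\tpsi_w(\theta)$ decomposes along the $N'$ cells: $J(\tf(x),y)=\sum_{k=1}^{N'}\one\{\pi(h_\theta(x))=k\}J(\varphi_w(h_\theta(x)),y)$, and within cell $k$ we may replace $h_\theta(x)$ by the fixed centre-ish value up to an error bounded by $\xi(w,r',\Mcal',d)$ uniformly.

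The second step is the concentration argument for a \emph{fixed} $(w,\theta)$. Having reduced each cell's contribution (up to the additive $\xi$ term, which appears once in the final bound) to a $\pm\tC_J(w)$-bounded quantity that is constant on each of $N'$ cells, one gets a function living effectively in an $N'$-dimensional ``histogram'' class; a union bound over the $2^{N'}$ sign patterns (equivalently, over which cells contribute which label value $y^{(1)}$ vs.\ $y^{(2)}$, giving the $4N'\ln 2$ inside the square root after squaring constants) together with Hoeffding's inequality yields, with probability $1-\delta'$, a deviation of order $\tC_J(w)\sqrt{(4N'\ln2+\ln(1/\delta'))/n}$. This is exactly the per-point statement; it matches the first term of the claimed bound with $\delta'$ to be chosen.

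The third step promotes this to a bound uniform over $(w,\theta)\in\Wcal\times\Theta$. Here I would use the chaining-through-Lipschitzness device already used for Theorem~\ref{thm:1}: $\theta\mapsto\tpsi_w(\theta)$ is $\tLcal_d(w)$-Lipschitz in the $(\Mcal,d)$ metric, so covering $\Theta$ at radius $r=\tLcal_d(w)^{1/\rho-1}n^{-1/2}$ by $\Ncal_{(\Mcal,d)}(r,\Theta)$ balls, applying the fixed-$(w,\theta)$ bound at each centre with $\delta'=\delta/\Ncal_{(\Mcal,d)}(r,\Theta)$ via a union bound, and controlling the within-ball variation by $\tLcal_d(w)\cdot r=\tLcal_d(w)^{1/\rho}n^{-1/2}$, i.e.\ the $\sqrt{\tLcal_d(w)^{2/\rho}/n}$ term. (The role of $\varphi_w$ being common to all cells is what lets the $w$-dependence be absorbed entirely into the constants $\tC_J(w),\tLcal_d(w)$, so no separate cover of $\Wcal$ is needed — the quantifier over $w$ is handled by the fact that the bound is stated in terms of $w$-dependent constants.) Collecting the three error contributions — the $\sqrt{(4N'\ln2+2\ln(\Ncal_{(\Mcal,d)}(r,\Theta)/\delta))/n}$ concentration term, the $\sqrt{\tLcal_d(w)^{2/\rho}/n}$ chaining term, and the $\xi(w,r',\Mcal',d)$ covering-oscillation term — gives the stated inequality.

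I expect the main obstacle to be the bookkeeping in the first step: making precise the claim that restricting to a covering ball of $\Hcal$ lets one treat $J(\varphi_w(h),y)$ as ``cell-constant up to $\xi$'' \emph{simultaneously} for all $\theta$, since the cell assignment $\pi\circ h_\theta$ itself depends on $\theta$ — one must be careful that the $N'$ cells are fixed (they depend only on $\bCcal_{r',d'}$, not on $\theta$) while only the assignment map varies, so the relevant function class is still the fixed $2^{N'}$-size family of sign patterns over fixed cells and the union bound is legitimate. The $\xi$ term must be pulled out \emph{before} the union bound (it is a deterministic uniform bound), otherwise one would incur an extra $\sqrt{\log}$ factor on it. Everything else is a routine transcription of the Theorem~\ref{thm:1} argument with $L^G\rightsquigarrow\Ncal_{(\Mcal',d')}(r',\Hcal)$ and the zero-oscillation fact of exact discretization replaced by the $\xi$-oscillation bound of the covering.
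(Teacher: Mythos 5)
Your proposal follows the paper's proof essentially verbatim: cover $\Hcal$ with $\bCcal_{r',d'}$, bound the within-ball oscillation deterministically by $\xi(w,r',\Mcal',d)$, concentrate the residual piecewise-constant function via a multinomial tail bound (your ``histogram'' step reconstructs the Bretagnolle--Huber--Carol inequality that the paper invokes by citation), and then cover $\Theta$ with a radius-$r$ net, union-bound with $\delta'=\delta/\Ncal_{(\Mcal,d)}(r,\Theta)$, and absorb the within-ball variation of $\tpsi_w$ using its $\tLcal_{d}(w)$-Lipschitzness with $r=\tLcal_{d}(w)^{1/\rho-1}n^{-1/2}$. The only slip is a miscount in the sign-pattern union bound: with $N'$ covering balls and the two label values in $\Ycal$ there are $2N'$ cells and hence $2^{2N'}$ sign patterns (not $2^{N'}$), although the constant $4N'\ln 2$ you quote is nonetheless the correct one that this yields.
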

Note that we have  $C_{ J}(w)\le \tC_{ J}(w)$ and  $\Lcal_{d} (w) \approx \tLcal_{d}(w)$ by their definition.  For example, if we  set  $J$ to be a loss criterion, the  bound in Theorem \ref{thm:2}  becomes in the same order as and comparable to the  generalization bound via the \textit{algorithmic robustness} approach proposed by the previous papers
\citep{xu2012robustness,sokolic2017generalization,sokolic2017robust}, as we show in  Appendix \ref{app:3}. 

\section{Additional Experiments}
\label{app:experiments}

\begin{figure}
    \centering
    \includegraphics[width=0.8\textwidth]{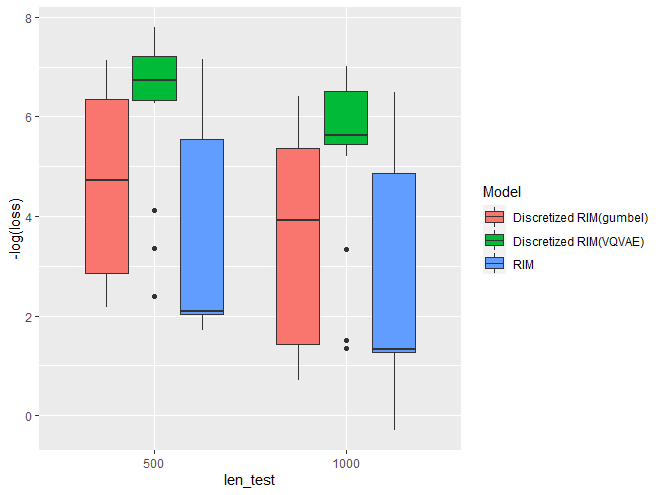}
    \caption{Performance on adding task (RIMs) with no discretization, Gumbel-Softmax discretization, or VQ-VAE style discretization (ours). Test length=500 is in-distribution test result and test length=1000 is out-of-distribution results.  }
    \label{fig:gumbel}
\end{figure}

\section{Additional discussions on theoretical motivations}

\subsection{Simple comparison of Theorems \ref{thm:1} and \ref{thm:2} with Euclidean space} \label{app:2}

For the purpose of  of  the  comparison, we will now consider the simple worst case with no additional structure with  the Euclidean space to instantiate   $\Ncal_{(\Mcal',d')}(r',\Hcal)$, $\Ncal_{(\Mcal,d)}(r, \Theta)$, and $\Ncal_{(\Mcal,d)}(r, E\times \Theta)$. It should be obvious that we can improve the bounds via considering metric spaces with additional structures. For example, we can consider a lower dimensional manifold $\Hcal$ in the ambient space of $\RR^m$ to reduce   $\Ncal_{(\Mcal',d')}(r',\Hcal)$. Similar ideas can be applied for $\Theta$ and $E\times \Theta$. Furthermore, the invariance as well as  margin  were used to reduce the bound on $\Ncal_{(\Mcal',d')}(r',\Xcal)$   in previous works \citep{sokolic2017generalization,sokolic2017robust} and similar ideas can be applied for $\Ncal_{(\Mcal',d')}(r',\Hcal)$, $\Ncal_{(\Mcal,d)}(r, \Theta)$, and $\Ncal_{(\Mcal,d)}(r, E\times \Theta)$. In this regard, the discretization  can be viewed as a method to minimize $\Ncal_{(\Mcal',d')}(r',\Hcal)$ to easily controllable  $L^G$ while minimizing the sensitivity term $ \xi(w,r',\Mcal',d)$ to zero at the same time in Theorems \ref{thm:1} and \ref{thm:2}.

Suppose that for any $y\in \Ycal$, the function $h \mapsto  J(\varphi_w(h) ,y)$ is Lipschitz continuous as  
$|( J(\varphi_w(h) ,y)-( J(\varphi_w(h') ,y)|\le \varsigma(w)d(h,h')$. Then, we can set $\xi(w,r',\Mcal',d)= 2\varsigma(w)r'$ since $d(h,h')\le2 r'$ for any $h,h'  \in \Bcal_{(\Mcal', d')}[c,r ']$.

As an simple example, let us choose the metric space $(\Mcal',d')$ to be the Euclidean space $\RR^{m}$ with the Euclidean metric and $\Hcal \subset \RR^{m}$  such that $\|v\|_2 \le R_{\Hcal}$ for all $v \in \Hcal$. Then, we have  $ \Ncal_{(\Mcal',d')}(r',\Hcal)\le (2R_{\Hcal} \sqrt{m}/r')^m$ and  we can set $\xi(w,r',\Mcal',d)=2\varsigma(w)r'$. Thus, by setting $r'=R_{\Hcal}/2$, we can replace  $ \Ncal_{(\Mcal',d')}(r',\Hcal)$  by $(4 \sqrt{m})^m$ and  set $\xi(w,r',\Mcal',d)=\varsigma(w)R_{\Hcal}$.

Similarly, let us choose the metric space $(\Mcal,d)$ to be the Euclidean space with the Euclidean metric and $E \subset  \RR^{Lm}$  and $\Theta \subset\RR^{\zeta}$ such that $\|v\|_2 \le R_{E}$ for all $v \in E$ and  $\|v\|_2 \le R_{\Theta }$ for all $v \in \Theta$. This implies that  $\|(v_E,v_\theta)\|_{2} \le \sqrt{R_{E}^2 + R_{\Theta}^2}$. Thus, we have  $ \Ncal_{(\Mcal,d)}(r,  \Theta)\le(2R_{\Theta }  \sqrt{\zeta}/r)^\zeta$ and $\Ncal_{(\Mcal,d)}(r, E\times \Theta)\le(2\sqrt{R_{E}^2 + R_{\Theta}^2}  \sqrt{Lm+\zeta}/r)^{Lm+\zeta}$. Since  $r=\tLcal_{d} (w)^{1/\rho-1} \sqrt{\frac{1}{n}}$  and   $r=\Lcal_{d} (w)^{1/\rho-1} \sqrt{\frac{1}{n}}$, we can replace  $ \Ncal_{(\Mcal,d)}(r,  \Theta)$  by $(2R_{\Theta }\tLcal_{d} (w)^{1-1/\rho}  \sqrt{\zeta n})^\zeta$ and $\Ncal_{(\Mcal,d)}(r, E\times \Theta)$ by  $(2\Lcal_{d} (w)^{1-1/\rho}\sqrt{R_{E}^2 + R_{\Theta}^2}  \sqrt{(Lm+\zeta)n})^{Lm+\zeta}$. 
By summarizing these and ignoring the logarithmic dependency as in the standard $\tilde O$ notation, we have the following bounds for Theorems \ref{thm:1} and \ref{thm:2}:
$$
\text{(with discretization)} \quad C_{ J}(w) \sqrt{\frac{4L^{G}  + 2Lm+2\zeta+2\ln(1/\delta)}{n}}+ \sqrt{\frac{\Lcal_{d} (w)^{2/\rho}}{n}},
$$
and
$$
\text{(without discretization)}  \quad \tC_{ J}(w) \sqrt{\frac{4(4 \sqrt{m})^m  + 2 \zeta+2\ln(1/\delta)}{n}}+ \sqrt{\frac{\tLcal_{d} (w)^{2/\rho}}{n}}+ \varsigma(w)R_{\Hcal},
$$
where we used the fact that $\ln (x/y)=\ln(x) + \ln (1/y)$. Here, we can more easily see that the discretization process has the benefits in the two aspects:
\begin{enumerate}
\item 
The discretization process improves sensitivity against noise and perturbations:  i.e., it reduces the sensitivity term  $ \varsigma(w)R_{\Hcal}$ to be zero.
\item
The discretization process reduces underlying dimensionality:   i.e., it reduce the term of $4(4 \sqrt{m})^m  $ to the term of $4L^{G}  + 2Lm$. In practice, we typically have $4(4 \sqrt{m})^m  \gg4L^{G}  + 2Lm$. This shows that using the discretization process withe codebook of size $L \times m$, we can successfully reduce the exponential dependency on $m$ to the linear dependency on $m$. This is a significant improvement.  
\end{enumerate}

\subsection{On the comparison of Theorem  \ref{thm:2} and algorithmic robustness} \label{app:3}
If we assume  that the function   $x\mapsto \ell(\tf(x),y)$ is Lipschitz for all $y \in \Ycal$ with Lipschitz constant $\varsigma_{x}(w)$ similarly to our assumption in Theorem \ref{thm:2}, the bound via the algorithmic robustness in the previous paper \citep{xu2012robustness} becomes the following: for any $\delta>0$,  with  probability at least $1-\delta$ over an iid draw of $n$ examples $((x_i, y_i))_{i=1}^n$, for any $(w,\theta) \in \Wcal \times \Theta$, 
\begin{align}\label{eq:15}
& \left| \EE_{x,y}[\ell(\tf(x,w,\theta),y)] - \frac{1}{n}\sum_{i=1}^n[\ell(\tf(x_{i},w,\theta),y_{i})] \right| 
\\ \nonumber & \le  \hat C_J  \sqrt \frac{4\Ncal_{(\Mcal',d')}(r',\Xcal) \ln 2 + 2 \ln \frac{1}{\delta}}{n}+2 \varsigma_{x}(w)r',
\end{align}
 where $\hat C_J\ge  \tC_{J}(w)$ for all $w\in \Wcal$ and $(\Mcal',d')$ is a metric space such that $\Xcal \subseteq \Mcal'$.
See Appendix \ref{app:1}. for more details on the algorithmic robustness bounds.

Thus, we can see that the dominant term $\Ncal_{(\Mcal',d')}(r',\Hcal)$ in Theorem \ref{thm:2} is comparable to the dominant term $\Ncal_{(\Mcal',d')}(r',\Xcal)$ in the previous study. Whereas the previous bound measures the robustness in the input space $\Xcal$, the bound in Theorem \ref{thm:2} measures   the robustness in the bottleneck layer  space $\Hcal$. When compared to  the input space $\Xcal$, if the bottleneck layer space  $\Hcal$ is smaller or has  more structures,  then we can have $\Ncal_{(\Mcal',d')}(r',\Hcal)<\Ncal_{(\Mcal',d')}(r',\Xcal)$ and  Theorem \ref{thm:2} can be advantageous over the previous bound.  However,  Theorem \ref{thm:2} is not our main result as we have much tighter bounds for the discretization process in Theorem \ref{thm:1} as well as  Theorem \ref{prop:1}. 

\subsection{On  algorithmic robustness} \label{app:1}

In the previous paper, algorithmic robustness is defined to be the measure of how much the loss value can vary with respect to the perturbations of values data points $(x,y)\in \Xcal \times \Ycal$. More precisely, an algorithm $\mathcal A$ is said to be $(|\Omega|,\varrho(\cdot))$-robust if $\mathcal{X} \times  \mathcal{Y}$ can be partitioned into $|\Omega|$ disjoint sets $\Omega_1,\dots,\Omega_{|\Omega|}$ such that for any dataset  $S \in (\mathcal{X} \times  \mathcal{Y})^m$, all $(x,y)\in S$, all $(x',y') \in\mathcal{X} \times  \mathcal{Y}$, and all $i \in \{1,\dots,|\Omega|\}$, if $(x,y),(x',y') \in \Omega_i$, then
$$
|\ell(\tf(x),y)-\ell(\tf(x'),y')|\le \varrho(S).
$$ 
If algorithm $\mathcal A$ is $(\Omega,\varrho(\cdot))$-robust and the codomain of $\ell$ is upper-bounded by $M$, then given a dataset $S$, we have \citep{xu2012robustness} that  for any $\delta>0$, with probability at least $1-\delta$, 
$$ 
\left| \EE_{x,y}[\ell(\tf(x),y)] - \frac{1}{n}\sum_{i=1}^n[\ell(\tf(x_{i}),y_{i})] \right| \le M \sqrt \frac{2| \Omega| \ln 2 + 2 \ln \frac{1}{\delta}}{n}+\varrho(S) .
$$
The previous paper \citep{xu2012robustness} further shows concrete examples of this bound for a case where the function  $(x,y)\mapsto \ell(\tf(x),y)$ is Lipschitz with Lipschitz constant $\varsigma_{x,y}(w)$, 
$$ 
\left| \EE_{x,y}[\ell(\tf(x),y)] - \frac{1}{n}\sum_{i=1}^n[\ell(\tf(x_{i}),y_{i})] \right| \le  M \sqrt \frac{2\Ncal_{(\Mcal',d')}(r',\Xcal\times \Ycal) \ln 2 + 2 \ln \frac{1}{\delta}}{n}+2 \varsigma_{x,y}(w)r',
$$
where $(\Mcal',d')$ is a metric space such that $\Xcal\times \Ycal \subseteq \Mcal'$. Note that the Lipschitz assumption on t he function $(x,y)\mapsto \ell(\tf(x),y)$  does not typically hold for the 0-1 loss on classification. For classification, we can assume that the function   $x\mapsto \ell(\tf(x),y)$ is Lipschitz instead, yielding equation \eqref{eq:15}.

\section{Proofs} \label{app:proof}
We use the notation of $q_e(h):=q(h,L,G)$ in the proofs.

\subsection{Proof of Theorem \ref{prop:1}}
\begin{proof}[Proof of Theorem \ref{prop:1}]

Let $\Ical_{k}=\{i\in[n]: q_e (h_{i})= Q_{k}\}.$ 
By using the following equality,
$$
\EE_{h}[\phi_{k}^{S}(q_e (h_{}))]=\EE_{h}[\phi_{k}^{S}(q_e (h_{}))|q_e (h)= Q_{k}]\Pr(q_e (h)= Q_{k})=\phi(Q_{k})\Pr(q_e (h)= Q_{k}),
$$
we first decompose the difference into two terms as
\begin{align} \label{eq:11}
&\EE_{h}[\phi_{k}^S(q_e(h))] - \frac{1}{n}\sum_{i=1}^n\phi_{k}^S(q_e(h_{i})) 
\\ \nonumber & =\phi(Q_{k})\left(\Pr(q_e (h)= Q_{k}) - \frac{|\Ical_{k}|}{n}\right)+\left(\phi(Q_{k})\frac{|\Ical_{_{k}}|}{n}- \frac{1}{n}\sum_{i=1}^n\phi_{k}^S(q_e(h_{i}))\right). 
\end{align}
The second term in the right-hand side  of \eqref{eq:11} is further simplified by using 
$$
\frac{1}{n}\sum_{i=1}^n\phi_{k}^S(q_e(h_{i}))=\frac{1}{n}  \sum_{i \in \Ical_{k}}\phi(q_e(h_{i})), 
$$
and
$$
\phi(Q_{k})\frac{|\Ical_{_{k}}|}{n}= \frac{1}{n}\sum_{i \in \Ical_{k}}\phi(q_e(h_{i})),
$$
as 
\begin{align*}
& \phi(Q_{k})\frac{|\Ical_{_{k}}|}{n}- \frac{1}{n}\sum_{i=1}^n\phi_{k}^S(q_e(h_{i}))=0.
\end{align*}
Substituting these into equation \eqref{eq:11} yields
\begin{align} \label{eq:14} 
\nonumber \left|\EE_{h}[\phi_{k}^S(q_e(h))] - \frac{1}{n}\sum_{i=1}^n\phi_{k}^S(q_e(h_{i}))\right| &=\left|\phi(Q_{k})\left(\Pr(q_e (h)= Q_{k}) - \frac{|\Ical_{k}|}{n}\right)\right|
\\ &\le |\phi(Q_{k})|\left|\Pr(q_e (h)= Q_{k}) - \frac{|\Ical_{k}|}{n}\right|.
\end{align}
Let $p_k=\Pr(q_e (h)= Q_{k}) $ and $\hat p= \frac{|\Ical_{k}|}{n}$. Consider the random variable $X_i=\one\{q_e (h_{i})= Q_{k}\}$ with the pushforward measure of the random variable $h_i$ under the  map $h_{i}\mapsto\one\{q_e (h_{i})= Q_{k}\} $.
Here, we have that  $X_i \in\{0,1\} \subset [0,1]$. Since  $e$ is fixed and $h_1,\dots,h_n$ are assumed to be iid,  the Hoeffding's inequality
implies the following:  for each fixed $k \in [L^G]$,   
$$
\Pr(\left|p_k -\hat p_k\right| \ge t) \le 2\exp\left(- 2nt^2 \right).
$$ 
By solving $\delta'= 2\exp\left(- 2nt^2 \right)$, this implies that for each fixed $k \in [L^G]$, for any $\delta'>0$, with probability at least $1-\delta'$,
$$
\left|p_k -\hat p_k\right| \le \sqrt{\frac{\ln(2/\delta')}{2n}}.
$$
By taking union bounds over $k \in [L^{G}]$ with $\delta'=\frac{\delta}{L^G}$, we have that  for any $\delta>0$, with probability at least $1-\delta$,
the following holds  for all $k \in [L^{G}]$:
$$
\left|p_k -\hat p_k\right| \le \sqrt{\frac{\ln(2L^{G}/\delta)}{2n}}.
$$
Substituting this into equation \eqref{eq:14} yields that 
for any $\delta>0$, with probability at least $1-\delta$,
the following holds  for all $k \in [L^{G}]$:
\begin{align*}
\left| \EE_{h}[\phi_{k}^S(q_e(h))] - \frac{1}{n}\sum_{i=1}^n\phi_{k}^S(q_e(h_{i})) \right| \le |\phi(Q_{k})| \sqrt{\frac{\ln(2L^{G}/\delta)}{2n}}= |\phi(Q_{k})| \sqrt{\frac{G\ln(L)+\ln(2/\delta)}{2n}}. 
\end{align*}
\end{proof}

\subsection{Proof of Theorem \ref{prop:2}}

\begin{proof}[Proof of Theorem \ref{prop:2}]
Let $(\Mcal',d')$ be a matric space such that $\Hcal \subseteq \Mcal'$. Fix $r'> 0$ and  $\bCcal\in \argmin_{\Ccal}\{|\Ccal|:\Ccal \subseteq\Mcal',\Hcal \subseteq \cup_{c \in\Ccal} \Bcal_{(\Mcal', d')}[c,r ']\}$ such that $|\bCcal|<\infty$. Fix an arbitrary ordering and define $c_k \in \bCcal_{r',d'}$ to be the $k$-the element in the ordered version of    $ \bCcal$ in that fixed ordering (i.e., $\cup_{k} \{c_k\}=  \bCcal_{r',d'}$). Let $ \Bcal[c]=\Bcal_{(\Mcal', d')}[c,r ']$ and $S=\{\Bcal[c_{1}], \Bcal[c_{2}],\dots, \Bcal_{}[c_{|\bCcal_{} |}]\}$. Suppose that  $\left|\phi_{k}^{S}(h{})-\phi_{k}^S(h')\right| \le\xi_{k}(r',\Mcal',d)$  for any $h,h'  \in \Bcal_{}[c_{k}]$ and $k \in [|\bCcal|]$, which is shown to be satisfied later in this proof.  Let $\Ical_{k}=\{i\in[n]:  h_{i}\in \Bcal_{}[c_{k}]\}$ for all $k \in [|\bCcal_{}|]$. 
By using the following equality,
$$
\EE_{h}[\phi_{k}^{S}(h{})]=\EE_{h}[\phi_{k}^{S}(h{})|h\in \Bcal_{}[c_{k}]]\Pr(h\in \Bcal_{}[c_{k}]),
$$
we first decompose the difference into two terms as
\begin{align} \label{eq:12}
&\left|\EE_{h}[\phi_{k}^S(h)] - \frac{1}{n}\sum_{i=1}^n\phi_{k}^S(h_{i}) \right| 
\\ \nonumber & \le \left|\EE_{h}[\phi_{k}^{S}(h{})|h\in \Bcal_{}[c_{k}]]\left(\Pr(h\in \Bcal_{}[c_{k}])- \frac{|\Ical_{k}|}{n}\right) \right|
 + \left|\EE_{h}[\phi_{k}^{S}(h{})|h\in \Bcal_{}[c_{k}]]\frac{|\Ical_{k}|}{n}- \frac{1}{n}\sum_{i=1}^n\phi_{k}^S(h_{i}) \right|
\end{align}
The second term in the right-hand side  of \eqref{eq:12} is further simplified by using 
$$
 \frac{1}{n}\sum_{i=1}^n\phi_{k}^S(h_{i}) =\frac{1}{n} \sum_{i \in \Ical_{k}}\phi_{k}^S(h_{i}), 
$$
and
$$
\EE_{h}[\phi_{k}^{S}(h{})|h\in \Bcal_{}[c_{k}]]\frac{|\Ical_{k}|}{n}= \frac{1}{n}\sum_{i \in \Ical_{k}}\EE_{h}[\phi_{k}^{S}(h{})|h\in \Bcal_{}[c_{k}]],
$$
as 
\begin{align*}
& \left|\EE_{h}[\phi_{k}^{S}(h{})|h\in \Bcal_{}[c_{k}]]\frac{|\Ical_{k}|}{n}- \frac{1}{n}\sum_{i=1}^n\phi_{k}^S(h_{i}) \right|
\\ & =\left|\frac{1}{n} \sum_{i \in \Ical_{k}}\left(\EE_{h}[\phi_{k}^{S}(h{})|h\in \Bcal_{}[c_{k}]]-\phi_{k}^S(h_{i})\right) \right|
\\ & \le \frac{1}{n} \sum_{i \in \Ical_{k}}\sup_{  h\in \Bcal_{}[c_{k}]}\left|\phi_{k}^{S}(h{})-\phi_{k}^S(h_{i})\right| \le\frac{|\Ical_k|}{n} \xi_{k}(r',\Mcal',d). 
\end{align*}
Substituting these into equation \eqref{eq:12} yields
\begin{align} \label{eq:13}
\nonumber &\left|\EE_{h}[\phi_{k}^S(h)] - \frac{1}{n}\sum_{i=1}^n\phi_{k}^S(h_{i})\right| 
\\ \nonumber & \le\left|\EE_{h}[\phi_{k}^{S}(h{})|h\in \Bcal_{}[c_{k}]]\left(\Pr(h\in \Bcal_{}[c_{k}])- \frac{|\Ical_{k}|}{n}\right) \right|
 +\frac{|\Ical_k|}{n} \xi_{k}(r',\Mcal',d)
\\  & \le \left|\EE_{h}[\phi(h{})|h\in \Bcal_{}[c_{k}]]\right|\left|\left(\Pr(h\in \Bcal_{}[c_{k}])- \frac{|\Ical_{k}|}{n}\right)\right|+\frac{|\Ical_k|}{n} \xi_{k}(r',\Mcal',d), 
\end{align}
Let $p_k=\Pr(h\in \Bcal_{}[c_{k}]) $ and $\hat p= \frac{|\Ical_{k}|}{n}$. Consider the random variable $X_i=\one\{h\in \Bcal_{}[c_{k}]\}$ with the pushforward measure of the random variable $h_i$ under the  map $h_{i}\mapsto\one\{h\in \Bcal_{}[c_{k}]\} $.
Here, we have that  $X_i \in\{0,1\} \subset [0,1]$. Since $\Bcal[c_{k}]$ is fixed and  $h_1,\dots,h_n$ are assumed to be iid,  the Hoeffding's inequality
implies the following:  for each fixed $k \in [|\bCcal_{} |]$,   
$$
\Pr(\left|p_k -\hat p_k\right| \ge t) \le 2\exp\left(- 2nt^2 \right).
$$ 
By solving $\delta'= 2\exp\left(- 2nt^2 \right)$, this implies that for each fixed $k \in [|\bCcal_{} |]$, for any $\delta'>0$, with probability at least $1-\delta'$,
$$
\left|p_k -\hat p_k\right| \le \sqrt{\frac{\ln(2/\delta')}{2n}}.
$$
By taking union bounds over $k \in [|\bCcal_{} |]$ with $\delta'=\frac{\delta}{|\bCcal_{} |}$, we have that  for any $\delta>0$, with probability at least $1-\delta$,
the following holds  for all $k \in [|\bCcal_{} |]$:
$$
\left|p_k -\hat p_k\right| \le \sqrt{\frac{\ln(2|\bCcal_{} |/\delta)}{2n}}= \sqrt{\frac{\ln(|\bCcal_{} |)+\ln(2/\delta)}{2n}}.
$$
Substituting this into equation \eqref{eq:13} yields that 
for any $\delta>0$, with probability at least $1-\delta$,
the following holds  for all $k \in [|\bCcal_{} |]$:
\begin{align*}
&\left| \EE_{h}[\phi_{k}^S(h)] - \frac{1}{n}\sum_{i=1}^n\phi_{k}^S(h_{i}) \right| 
\\ & \le \left|\EE_{h}\left[\phi(h{})|h\in \Bcal_{}[c_{k}] \right]\right| \sqrt{\frac{\ln(\Ncal_{(\Mcal',d')}(r',\Hcal))+\ln(2/\delta)}{2n}}
\\ & \hspace{10pt}+\xi_{k}(r',\Mcal',d) \left(\frac{1}{n} \sum_{i=1}^n\one\{h_{i}\in \Bcal_{}[c_{k}]\}\right), 
\end{align*}
where we used $|\Ical_k|=\sum_{i=1}^n\one\{h_{i}\in \Bcal_{}[c_{k}]\}$.
Let us now choose the metric space $(\Mcal',d')$ to be the Euclidean space $\RR^{m}$ with the Euclidean metric and $\Hcal \subset \RR^{m}$  such that $\|v\|_2 \le R_{\Hcal}$ for all $v \in \Hcal$. Then, we have  $ \Ncal_{(\Mcal',d')}(r',\Hcal)\le (2R_{\Hcal} \sqrt{m}/r')^m$ and  we can set $\xi(w,r',\Mcal',d)=2\varsigma_k r'$. This is because that  the function $h \mapsto \phi_{k}^{S}(h{})$ is Lipschitz continuous as  
$|\phi_{k}^{S}(h{})-\phi_{k}^{S}(h{}')|\le \varsigma_{k}d(h,h')$, and because $d(h,h')\le2 r'$ for any $h,h'  \in \Bcal_{(\Mcal', d')}[c_{k},r ']$. Thus, by setting $r'=R_{\Hcal}/(2\sqrt{n})$, we can replace  $ \Ncal_{(\Mcal',d')}(r',\Hcal)$  by $(4 \sqrt{nm})^m$ and  set $\xi(w,r',\Mcal',d)=\varsigma_{k}R_{\Hcal}/\sqrt{n}$.

This yields
\begin{align*}
&\left| \EE_{h}[\phi_{k}^S(h)] - \frac{1}{n}\sum_{i=1}^n\phi_{k}^S(h_{i}) \right| 
\\ & \le \left|\EE_{h}\left[\phi(h{})|h\in \Bcal_{}[c_{k}] \right]\right| \sqrt{\frac{m\ln( 4\sqrt{nm})+\ln(2/\delta)}{2n}}+\frac{\varsigma_{k}R_{\Hcal}}{\sqrt n} \left(\frac{1}{n} \sum_{i=1}^n\one\{h_{i}\in \Bcal_{}[c_{k}]\}\right). 
\end{align*}

\end{proof}

\subsection{Proof of Theorem \ref{thm:1}}
In the proof of Theorem \ref{thm:1}, we write 
$
f(x):=f(x,w,e,\theta)
$   
when
the dependency on $(w,e,\theta)$ is clear from the context.
\begin{lemma} \label{lemma:1}
Fix $\theta \in \Theta$ and $e \in E$. Then for any $\delta>0$,  with  probability at least $1-\delta$ over an
iid draw of $n$ examples $((x_i
, y_i))_{i=1}^n$, the following holds for any $w \in \Wcal$:  
\begin{align*}
\left|\EE_{x,y}[J(f(x,w,e,\theta) ,y)] - \frac{1}{n}\sum_{i=1}^nJ(f(x,w,e,\theta) ,y_{i}) \right| \le C_{J}(w) \sqrt{\frac{4L^{G} \ln2 + 2 \ln(1/\delta)}{n}}. 
\end{align*}
\end{lemma}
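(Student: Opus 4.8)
The plan is to reduce Lemma~\ref{lemma:1} to the fact that, once $\theta$ and $e$ are \emph{fixed}, the discretized feature $q_e(h_\theta(x))$ can take only finitely many values, so the loss becomes piecewise constant on a partition whose size is controlled by $L^G$ and whose structure is independent of $w$. \textbf{Step 1 (finite range).} By construction of the discretization process, $q_e(h) = q(h,L,G) \in \cup_{k \in [L^G]}\{Q_k\}$ for every $h \in \Hcal$; hence for fixed $\theta$ the map $x \mapsto q_e(h_\theta(x))$ takes values in the finite set $\{Q_1,\dots,Q_{L^G}\}$, and therefore $x \mapsto f(x,w,e,\theta) = \varphi_w(q_e(h_\theta(x)))$ takes at most $L^G$ distinct values $\varphi_w(Q_1),\dots,\varphi_w(Q_{L^G})$, whatever $w$ is.

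\textbf{Step 2 (a $w$-independent partition on which the loss is constant).} Since $\Ycal = \{y^{(1)},y^{(2)}\}$, partition $\Xcal \times \Ycal$ into the (at most) $2L^G$ cells $\Omega_{k,j} = \{(x,y) \in \Xcal\times\Ycal : q_e(h_\theta(x)) = Q_k,\ y = y^{(j)}\}$, $(k,j) \in [L^G]\times\{1,2\}$ (padding with empty cells so that $|\Omega| = 2L^G$). This partition depends only on the fixed $\theta,e$, not on $w$. On $\Omega_{k,j}$ the map $(x,y)\mapsto J(f(x,w,e,\theta),y)$ equals the constant $J(\varphi_w(Q_k),y^{(j)})$, so within each cell the loss varies by exactly $0$, simultaneously for every $w$ --- i.e.\ the rule is $(2L^G,0)$-robust in the sense of Appendix~\ref{app:1}. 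Moreover, since each $Q_k$ is a discretization output and hence lies in $E_2$, and $y^{(j)} \in \Ycal$, the definition of $C_J(w)$ gives $|J(f(x,w,e,\theta),y)| \le C_J(w)$ for all $(x,y)$.

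\textbf{Step 3 (conclude via $\ell_1$ concentration on the partition).} Because the loss is piecewise constant on $\{\Omega_{k,j}\}$, both $\EE_{x,y}[J(f)]$ and $\frac1n\sum_{i=1}^n J(f(x_i,w,e,\theta),y_i)$ are linear functionals of, respectively, the true and empirical probability vectors $\mu,\hat\mu$ over the $2L^G$ cells, with coefficients bounded in absolute value by $C_J(w)$; hence the quantity to be bounded is at most $C_J(w)\,\|\hat\mu - \mu\|_1$. The standard $\ell_1$ concentration of the empirical distribution of a $K$-outcome multinomial (the inequality underlying the algorithmic robustness bound recalled in Appendix~\ref{app:1}, cf.\ \citep{xu2012robustness}) gives, with $K = 2L^G$, that with probability at least $1-\delta$, $\|\hat\mu-\mu\|_1 \le \sqrt{(2K\ln 2 + 2\ln(1/\delta))/n} = \sqrt{(4L^G\ln 2 + 2\ln(1/\delta))/n}$; crucially this event does not involve $w$. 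Combining the two bounds yields, with probability at least $1-\delta$, for all $w \in \Wcal$ simultaneously,
\begin{align*}
\left|\EE_{x,y}[J(f(x,w,e,\theta),y)] - \frac1n\sum_{i=1}^n J(f(x_i,w,e,\theta),y_i)\right| \le C_J(w)\sqrt{\frac{4L^G\ln 2 + 2\ln(1/\delta)}{n}},
\end{align*}
which is exactly the asserted inequality.

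I do not anticipate a genuine obstacle; the entire content is the observation in Steps~1--2 that discretization collapses the feature space to $L^G$ atoms and that this collapse is independent of the post-discretization weights $w$. The only mild subtlety is that the robustness bound of Appendix~\ref{app:1} is stated for a single fixed hypothesis, so to obtain uniformity over $w \in \Wcal$ one should note (as in Step~3) that the high-probability concentration event concerns only the fixed partition $\{\Omega_{k,j}\}$ and is therefore $w$-free, while $w$ enters solely through the benign multiplicative factor $C_J(w)$. (One may equally phrase Step~3 in the hand-computed style of the proofs of Theorems~\ref{prop:1}--\ref{prop:2}: expand $\EE_{x,y}[J(f)]$ and its empirical average as sums over the $2L^G$ cells and apply the multinomial $\ell_1$ concentration in place of the scalar Hoeffding bound.)
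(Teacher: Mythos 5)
Your proof is correct and is essentially the same argument as the paper's: both partition $\Xcal\times\Ycal$ into the $2L^G$ cells determined by $(Q_k, y^{(j)})$, observe that $(x,y)\mapsto J(f(x,w,e,\theta),y)$ is constant on each cell for every $w$, bound the gap by $C_J(w)$ times the $\ell_1$ distance between the true and empirical cell-probability vectors, and apply the multinomial $\ell_1$ concentration bound — which the paper invokes under the name Bretagnolle–Huber–Carol inequality. Your phrasing via the $(2L^G,0)$-robustness notion of Appendix~\ref{app:1} and your explicit remark that the concentration event is $w$-free are just cleaner packagings of the same computation the paper carries out by hand in its decomposition of equation~\eqref{eq:1}.
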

\begin{proof}[Proof of Lemma \ref{lemma:1}]
Let $\Ical_{k,y}=\{i\in[n]: (q_e \circ h_\theta )(x_{i})= Q_{k},y_{i}=y\}.$ 
Using $\EE_{x,y}[J(f(x) ,y)]=\sum_{k=1}^{L^G} \sum_{y'\in \Ycal}\EE_{x,y}[J(f(x) ,y)| (q_e \circ h_\theta )(x)= Q_{k},y=y']\Pr( (q_e \circ h_\theta )(x)= Q_{k}\wedge y=y')$,
we first decompose the difference into two terms as
\begin{align} \label{eq:1}
&\EE_{x,y}[J(f(x) ,y)] - \frac{1}{n}\sum_{i=1}^nJ(f(x_{i}) ,y_{i}) 
\\ \nonumber & =\scalebox{0.95}{$\displaystyle \sum_{k=1}^{L^G} \sum_{y'\in \Ycal}\EE_{x,y}[J(f(x) ,y)| (q_e \circ h_\theta )(x)= Q_{k},y=y']\left(\Pr( (q_e \circ h_\theta )(x)= Q_{k}\wedge y=y') - \frac{|\Ical_{k,y'}|}{n}\right)$}
\\ \nonumber  & \hspace{10pt} +\left(\sum_{k=1}^{L^G} \sum_{y'\in \Ycal}\EE_{x,y}[J(f(x) ,y)| (q_e \circ h_\theta )(x)= Q_{k},y=y']\frac{|\Ical_{k,y'}|}{n}- \frac{1}{n}\sum_{i=1}^nJ(f(x_{i}) ,y_{i})\right). 
\end{align}
The second term in the right-hand side  of \eqref{eq:1} is further simplified by using 
$$
\frac{1}{n}\sum_{i=1}^nJ(f(x) ,y) =\frac{1}{n}\sum_{k=1}^{L^G} \sum_{y'\in \Ycal} \sum_{i \in \Ical_{k,y'}}J(f(x_{i}) ,y_{i}), 
$$
and
\begin{align*}
&\EE_{x,y}[J(f(x) ,y)| (q_e \circ h_\theta )(x)
\\ &= Q_{k},y=y']\frac{|\Ical_{k,y'}|}{n}= \frac{1}{n}\sum_{i \in \Ical_{k,y'}}\EE_{x,y}[J(f(x) ,y)| (q_e \circ h_\theta )(x)= Q_{k},y=y'],
\end{align*}
as 
\begin{align*}
& \sum_{k=1}^{L^G} \sum_{y'\in \Ycal}\EE_{x,y}[J(f(x) ,y)| (q_e \circ h_\theta )(x)= Q_{k},y=y']\frac{|\Ical_{k,y'}|}{n}- \frac{1}{n}\sum_{i=1}^nJ(f(x_{i}) ,y_{i})
\\ & =\frac{1}{n}\sum_{k=1}^{L^G} \sum_{y'\in \Ycal} \sum_{i \in \Ical_{k,y'}}\left(\EE_{x,y}[J(f(x) ,y)| (q_e \circ h_\theta )(x)= Q_{k},y=y']-J(f(x_{i}) ,y_{i})\right)
\\ & =\frac{1}{n}\sum_{k=1}^{L^G} \sum_{y'\in \Ycal} \sum_{i \in \Ical_{k,y'}}\left(J(\varphi_w( Q_{k}) ,y')-(J(\varphi_w( Q_{k}) ,y')\right)=0
\end{align*}
Substituting these into equation \eqref{eq:1} yields
\begin{align*} 
&\EE_{x,y}[J(f(x) ,y)] - \frac{1}{n}\sum_{i=1}^nJ(f(x_{i}) ,y_{i}) 
\\ \nonumber & =\sum_{k=1}^{L^G} \sum_{y'\in \Ycal}J(\varphi_w( Q_{k}) ,y')\left(\Pr( (q_e \circ h_\theta )(x)= Q_{k}\wedge y=y') - \frac{|\Ical_{k,y'}|}{n}\right)
\\ \nonumber & =\sum_{k=1}^{2{L^G}} J(v_{k})\left(\Pr( ((q_e \circ h_\theta )(x),y)= v_{k}) - \frac{|\Ical_{k}|}{n}\right), \end{align*}
where the last line uses the fact that $\Ycal=\{y^{(1)},y^{(2)}\}$ for some $(y^{(1)},y^{(2)})$, along with the additional notation  $\Ical_{k}=\{i\in[n]: ((q_e \circ h_\theta )(x_{i}),y_{i})= v_{k}\}$. Here, $v_{k}$ is defined as   $v_{k}=(\varphi_w( Q_{k}) ,y^{(1)})$ for all $k\in[{L^G}]$ and $v_{k}=(\varphi_w( e_{k-{L^G}}) ,y^{(2)})$ for all $k \in \{{L^G}+1,\dots,2{L^G}\}$. 

By using the bound of  $|J(\varphi_w(\eta) ,y)|\le C_{J}(w)$,
\begin{align*} 
&\left|\EE_{x,y}[J(f(x) ,y)] - \frac{1}{n}\sum_{i=1}^nJ(f(x_{i}) ,y_{i})\right|
\\ & = \left|\sum_{k=1}^{2{L^G}} J(v_{k})\left(\Pr( ((q_e \circ h_\theta )(x),y)= v_{k}) - \frac{|\Ical_{k}|}{n}\right)\right|
\\ \nonumber & \le C_{J}(w) \sum_{k=1}^{2{L^G}} \left|\Pr( ((q_e \circ h_\theta )(x),y)= v_{k}) - \frac{|\Ical_{k}|}{n}\right|.
\end{align*}
Since $|\Ical_{k}|=\sum_{i=1}^n \one\{((q_e \circ h_\theta )(x_{i}),y_{i})= v_{k}\}$ and $(\theta,e)$ is fixed, the vector $(|\Ical_{1}|,\dots,|\Ical_{2{L^G}}|)$ follows  a multinomial distribution with parameters $n$ and $p=(p_1, ..., p_{2{L^G}})$, where $p_k=\Pr( ((q_e \circ h_\theta )(x),y)= v_{k})$ for $k=1,\dots,2{L^G}$. Thus, by using the Bretagnolle-Huber-Carol inequality \citep[A6.6 Proposition]{van1996}, we have that with probability at least $1-\delta$,
\begin{align*} 
\left|\EE_{x,y}[J(f(x) ,y)] - \frac{1}{n}\sum_{i=1}^nJ(f(x_{i}) ,y_{i})
  \right| \le C_{J}(w) \sqrt{\frac{4{L^G} \ln2 + 2 \ln(1/\delta)}{n}}.
\end{align*}  

\end{proof}

\begin{proof}[Proof of Theorem \ref{thm:1}] Let $\hat \Ccal_{r,d} \in \argmin_{\Ccal}\left\{|\Ccal|: \Ccal \subseteq \Mcal, E\times \Theta\subseteq \cup_{c \in\Ccal} \Bcal_{(\Mcal,d)}[c,r]\right.\}$. Note that if $\Ncal_{(\Mcal,d)}(r, E\times \Theta)=\infty$, the bound in the statement of the theorem vacuously holds.  Thus, we focus on the case of $\Ncal_{(\Mcal,d)}(r, E\times \Theta)=|\hat \Ccal_{r,d}| <\infty$.  For any $(w,e,\theta) \in \Wcal \times E \times \Theta$, the following holds: for any $(\hat e, \hat \theta) \in \hat \Ccal_{r,d}$, 
\begin{align} \label{eq:2}
\nonumber \left|\psi_{w}(e,\theta) \right| 
& =\left|\psi_{w}(\hat e, \hat \theta)+\psi_{w}(e,\theta)-\psi_{w}(\hat e, \hat \theta)  \right| 
\\ & \le\left|\psi_{w}(\hat e, \hat \theta)\right|+\left|\psi_{w}(e,\theta)-\psi_{w}(\hat e, \hat \theta)    \right|.    
\end{align}
For the first term in the right-hand side of \eqref{eq:2}, by using Lemma \ref{lemma:1} with $\delta=\delta'/\Ncal_{(\Mcal,d)}(r,E\times \Theta)$ and taking union bounds, we have that for any $\delta'>0$, with probability at least  $1-\delta'$,
the following holds for all $(\hat e, \hat \theta) \in \hat \Ccal_{r,d}$, \begin{align} \label{eq:3}
\left|\psi_{w}(\hat e, \hat \theta)\right|\le C_{J}(w) \sqrt{\frac{4{L^G} \ln2 + 2 \ln(\Ncal_{(\Mcal,d)}(r, E\times \Theta)/\delta')}{n}}.
\end{align}
By combining equations \eqref{eq:2} and \eqref{eq:3}, we have that   for any $\delta'>0$, with probability at least  $1-\delta'$,
the following holds for any $(w,e,\theta) \in \Wcal \times E \times \Theta$ and  any $(\hat e, \hat \theta) \in \hat \Ccal_{r,d}$:
\begin{align*}
\left|\psi_{w}(e,\theta) \right| 
\le C_{J}(w) \sqrt{\frac{4{L^G} \ln2 + 2 \ln(\Ncal_{(\Mcal,d)}(r, E\times \Theta)/\delta')}{n}}+\left|\psi_{w}(e,\theta)-\psi_{w}(\hat e, \hat \theta)    \right|.    
\end{align*}
This implies that for any $\delta'>0$, with probability at least  $1-\delta'$, the following holds for  any $(w,e,\theta) \in \Wcal \times E \times \Theta$:
\begin{align} \label{eq:4}
\left|\psi_{w}(e,\theta) \right| 
\le C_{J}(w) \sqrt{\frac{4{L^G} \ln2 + 2 \ln(\Ncal_{(\Mcal,d)}(r, E\times \Theta)/\delta')}{n}}+ \min_{(\hat e, \hat \theta) \in \hat \Ccal_{r,d}}\left|\psi_{w}(e,\theta)-\psi_{w}(\hat e, \hat \theta)    \right|.    
\end{align}
For the second term in the right-hand side of \eqref{eq:4}, we have that for  any $(w,e,\theta) \in \Wcal \times E \times \Theta$,
\begin{align*} 
\min_{(\hat e, \hat \theta) \in \hat \Ccal_{r,d}}\left|\psi_{w}(e,\theta)-\psi_{w}(\hat e, \hat \theta)\right| \le \Lcal_{d} (w)\min_{(\hat e, \hat \theta) \in \hat \Ccal_{r,d}} d((e,\theta), (\hat e, \hat \theta)) \le\Lcal_{d} (w)r.  
\end{align*}
Thus, by using $r=\Lcal_{d} (w)^{1/\rho-1} \sqrt{\frac{1}{n}}$, we have that
for any $\delta'>0$, with probability at least  $1-\delta'$, the following holds for  any $(w,e,\theta) \in \Wcal \times E \times \Theta$:
\begin{align} \label{eq:5}
\left|\psi_{w}(e,\theta) \right| 
\le C_{J}(w) \sqrt{\frac{4{L^G} \ln2 + 2 \ln(\Ncal_{(\Mcal,d)}(r, E\times \Theta)/\delta')}{n}}+ \sqrt{\frac{\Lcal_{d} (w)^{2/\rho}}{n}}.    
\end{align}
Since this statement holds for any $\delta'>0$, this implies the statement of this theorem.
\end{proof}

\subsection{Proof of Theorem \ref{thm:2}}
In the proof of Theorem \ref{thm:1}, we write 
$
\tf(x):=\tf(x,w,\theta)
$   
when
the dependency on $(w,\theta)$ is clear from the context.

\begin{lemma} \label{lemma:2}
Fix $\theta \in \Theta$. Let $(\Mcal',d')$ be a matric space such that $\Hcal \subseteq \Mcal'$. Fix $r'> 0$ and  $\bCcal_{r',d'} \in \argmin_{\Ccal}\{|\Ccal|:\Ccal \subseteq\Mcal',\Hcal \subseteq \cup_{c \in\Ccal} \Bcal_{(\Mcal', d')}[c,r ']\}$. Assume that for any $c \in\bCcal_{r',d'} $, we have $|(J(\varphi_w(h) ,y)-(J(\varphi_w(h') ,y)|\le  \xi(w,r',\Mcal',d)$  for any $h,h'  \in \Bcal_{(\Mcal', d')}[c,r ']$ and $y \in \Ycal$. Then for any $\delta>0$,  with  probability at least $1-\delta$ over an
iid draw of $n$ examples $((x_i
, y_i))_{i=1}^n$, the following holds for any $w \in \Wcal$:  
\begin{align*}
&\left|\EE_{x,y}[J(\tf(x,w,\theta) ,y)] - \frac{1}{n}\sum_{i=1}^nJ(\tf(x,w,\theta) ,y_{i}) \right| 
\\ & \le \tC_{J}(w) \sqrt{\frac{4\Ncal_{(\Mcal',d')}(r',\Hcal)\ln2 + 2 \ln(1/\delta)}{n}} + \xi(w,r',\Mcal',d) . 
\end{align*}
\end{lemma}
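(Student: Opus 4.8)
The plan is to mirror the proof of Lemma~\ref{lemma:1} almost verbatim, replacing the exact discretization cells by a disjointified version of the cover $\bCcal_{r',d'}$, at the price of an additive slack of $\xi(w,r',\Mcal',d)$. Write $N:=\Ncal_{(\Mcal',d')}(r',\Hcal)$ and assume $N<\infty$ (otherwise the bound is vacuous). Fix an ordering $\bCcal_{r',d'}=\{c_1,\dots,c_N\}$ and set $\Omega_k:=\Bcal_{(\Mcal',d')}[c_k,r']\setminus\bigcup_{j<k}\Bcal_{(\Mcal',d')}[c_j,r']$, so that $\{\Omega_k\}_{k\in[N]}$ are pairwise disjoint, cover $\Hcal$, and satisfy $\Omega_k\subseteq\Bcal_{(\Mcal',d')}[c_k,r']$. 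Since $\Ycal=\{y^{(1)},y^{(2)}\}$, the products $\Omega_k\times\{y'\}$ form a partition of $\Hcal\times\Ycal$ into $2N$ cells; for the sample $((x_i,y_i))_{i=1}^n$ let $\Ical_{k,y'}=\{i\in[n]:h_\theta(x_i)\in\Omega_k,\ y_i=y'\}$, and note these index sets partition $[n]$.

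Next, conditioning on cells exactly as in Lemma~\ref{lemma:1} via $\EE_{x,y}[J(\tf(x),y)]=\sum_{k,y'}\EE_{x,y}[J(\tf(x),y)\mid h_\theta(x)\in\Omega_k,\ y=y']\,\Pr(h_\theta(x)\in\Omega_k\wedge y=y')$, I would split $\EE_{x,y}[J(\tf(x),y)]-\frac1n\sum_i J(\tf(x_i),y_i)$ into a fluctuation term $(\mathrm{i})=\sum_{k,y'}\EE_{x,y}[J(\tf(x),y)\mid h_\theta(x)\in\Omega_k, y=y']\big(\Pr(h_\theta(x)\in\Omega_k\wedge y=y')-\tfrac{|\Ical_{k,y'}|}{n}\big)$ and a within-cell term $(\mathrm{ii})=\tfrac1n\sum_{k,y'}\sum_{i\in\Ical_{k,y'}}\big(\EE_{x,y}[J(\tf(x),y)\mid h_\theta(x)\in\Omega_k, y=y']-J(\tf(x_i),y_i)\big)$. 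Unlike in Lemma~\ref{lemma:1}, term $(\mathrm{ii})$ is not identically zero since $J\circ\varphi_w$ need not be constant on a cell; however, each summand compares two values of $J(\varphi_w(h),y')$ (or an average of such values) with $h\in\Omega_k\subseteq\Bcal_{(\Mcal',d')}[c_k,r']$, so the hypothesis $|J(\varphi_w(h),y')-J(\varphi_w(h'),y')|\le\xi(w,r',\Mcal',d)$ for $h,h'\in\Bcal_{(\Mcal',d')}[c_k,r']$ bounds each summand by $\xi(w,r',\Mcal',d)$, giving $|(\mathrm{ii})|\le\tfrac1n\sum_{k,y'}|\Ical_{k,y'}|\,\xi(w,r',\Mcal',d)=\xi(w,r',\Mcal',d)$.

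For the fluctuation term, bounding each conditional expectation by $\tC_J(w)$ (from $|J((\varphi_w\circ h_\theta)(x),y)|\le\tC_J(w)$) gives $|(\mathrm{i})|\le\tC_J(w)\sum_{k,y'}\big|\Pr(h_\theta(x)\in\Omega_k\wedge y=y')-\tfrac{|\Ical_{k,y'}|}{n}\big|$. Because $\theta$ is fixed and the $2N$ cells partition $\Hcal\times\Ycal$, the count vector $(|\Ical_{k,y'}|)_{k,y'}$ is multinomial with parameters $n$ and $2N$ categories, so the Bretagnolle--Huber--Carol inequality \citep[A6.6 Proposition]{van1996} yields, with probability at least $1-\delta$, $\sum_{k,y'}\big|\Pr(h_\theta(x)\in\Omega_k\wedge y=y')-\tfrac{|\Ical_{k,y'}|}{n}\big|\le\sqrt{\tfrac{4N\ln2+2\ln(1/\delta)}{n}}$. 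Adding the two bounds and recalling $N=\Ncal_{(\Mcal',d')}(r',\Hcal)$ gives exactly the statement of the lemma.

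The only real obstacle is bookkeeping: the cover balls may overlap, so one must disjointify them to obtain a genuine multinomial with exactly $2\,\Ncal_{(\Mcal',d')}(r',\Hcal)$ categories before invoking Bretagnolle--Huber--Carol, and one must verify that replacing each per-cell value by a common reference (the conditional expectation) incurs an error controlled uniformly by $\xi(w,r',\Mcal',d)$ through the ball-diameter hypothesis. With those two points handled, the proof is a direct transcription of that of Lemma~\ref{lemma:1}.
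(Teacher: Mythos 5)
Your proof is correct and takes essentially the same route as the paper's: decompose the empirical--population gap into a fluctuation term over cells and a within-cell approximation term, bound the former via the Bretagnolle--Huber--Carol inequality for multinomial counts over the $2\Ncal_{(\Mcal',d')}(r',\Hcal)$ cells, and bound the latter by $\xi(w,r',\Mcal',d)$ using the within-ball oscillation hypothesis. The one place where you are more careful than the paper is the disjointification step $\Omega_k=\Bcal_{(\Mcal',d')}[c_k,r']\setminus\bigcup_{j<k}\Bcal_{(\Mcal',d')}[c_j,r']$. The paper works directly with the cover balls $\Bcal_{(\Mcal',d')}[c_k,r']$ and treats them as if they partition $\Hcal$: it writes $\EE_{x,y}[J(\tf(x),y)]=\sum_{k,y'}\EE_{x,y}[J(\tf(x),y)\mid h_\theta(x)\in\Bcal[c_k,r'],y=y']\Pr(h_\theta(x)\in\Bcal[c_k,r']\wedge y=y')$ and then asserts that the counts $(|\Ical_k|)_k$ are multinomial. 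Both of those steps implicitly require the cells to be disjoint, which cover balls generally are not. Your disjointified cells restore this, while the inclusion $\Omega_k\subseteq\Bcal_{(\Mcal',d')}[c_k,r']$ preserves the hypothesis used to bound the within-cell term, so your version closes a small gap in the paper's argument at no cost. Two minor remarks: (1) the $\Omega_k\times\{y'\}$ partition $(\cup_k\Bcal[c_k,r'])\times\Ycal\supseteq\Hcal\times\Ycal$ rather than $\Hcal\times\Ycal$ exactly, but since $h_\theta(x)\in\Hcal$ almost surely the extra mass is zero, and using $2N$ categories in Bretagnolle--Huber--Carol when fewer are nonempty only loosens the bound, so this is harmless; (2) you should make explicit, as you do implicitly, that a conditional expectation over $\Omega_k$ is a convex combination of values $J(\varphi_w(h),y')$ with $h\in\Omega_k\subseteq\Bcal[c_k,r']$, so comparing it to $J(\varphi_w(h_\theta(x_i)),y')$ still falls within the scope of the $\xi$ hypothesis.
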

\begin{proof} [Proof of Lemma \ref{lemma:2}]
  Note that if $\Ncal_{(\Mcal',d')}(r',\Hcal)=\infty$, the bound in the statement of the theorem vacuously holds.  Thus, we focus on the case of $\Ncal_{(\Mcal',d')}(r',\Hcal)=| \bCcal_{r',d'}|  <\infty$. Fix an arbitrary ordering and define $c_k \in \bCcal_{r',d'}$ to be the $k$-the element in the ordered version of    $ \bCcal_{r',d'}$ in that fixed ordering (i.e., $\cup_{k} \{c_k\}=  \bCcal_{r',d'}$). 

Let $\Ical_{k,y}=\{i\in[n]:  h_\theta (x_{i})\in \Bcal_{(\Mcal', d')}[c_{k},r '],y_{i}=y\}$ for all $k \times y \in [|\bCcal_{r',d'}|] \times \Ycal$. 
Using $\EE_{x,y}[J(\tf(x) ,y)]=\sum_{k=1}^{|\bCcal_{r',d'}|} \sum_{y'\in \Ycal}\EE_{x,y}[J(\tf(x) ,y)|  h_\theta (x)\in \Bcal_{(\Mcal', d')}[c_{k},r '],y=y']\Pr(  h_\theta (x)\in \Bcal_{(\Mcal', d')}[c_{k},r ']\wedge y=y')$,
we first decompose the difference into two terms as
\begin{align} \label{eq:6}
&\left|\EE_{x,y}[J(\tf(x) ,y)] - \frac{1}{n}\sum_{i=1}^nJ(\tf(x_{i}) ,y_{i}) \right| 
\\ \nonumber & \scalebox{0.84}{$\displaystyle  =\left|\sum_{k=1}^{|\bCcal_{r',d'}|} \sum_{y'\in \Ycal}\EE_{x,y}[J(\tf(x) ,y)|  h_\theta (x)\in \Bcal_{(\Mcal', d')}[c_{k},r '],y=y']\left(\Pr(  h_\theta (x)\in \Bcal_{(\Mcal', d')}[c_{k},r ']\wedge y=y') - \frac{|\Ical_{k,y'}|}{n}\right) \right| $}
\\ \nonumber  & \hspace{10pt} + \left|\sum_{k=1}^{|\bCcal_{r',d'}|} \sum_{y'\in \Ycal}\EE_{x,y}[J(\tf(x) ,y)|  h_\theta (x)\in \Bcal_{(\Mcal', d')}[c_{k},r '],y=y']\frac{|\Ical_{k,y'}|}{n}- \frac{1}{n}\sum_{i=1}^nJ(\tf(x_{i}) ,y_{i}). \right|
\end{align}
The second term in the right-hand side  of \eqref{eq:6} is further simplified by using 
$$
\frac{1}{n}\sum_{i=1}^nJ(\tf(x) ,y) =\frac{1}{n}\sum_{k=1}^{|\bCcal_{r',d'}|} \sum_{y'\in \Ycal} \sum_{i \in \Ical_{k,y'}}J(\tf(x_{i}) ,y_{i}), 
$$
and
\begin{align*}
&\EE_{x,y}[J(\tf(x) ,y)|  h_\theta (x)\in \Bcal_{(\Mcal', d')}[c_{k},r '],y=y']\frac{|\Ical_{k,y'}|}{n}
\\ & = \frac{1}{n}\sum_{i \in \Ical_{k,y'}}\EE_{x,y}[J(\tf(x) ,y)|  h_\theta (x)\in \Bcal_{(\Mcal', d')}[c_{k},r '],y=y'],
\end{align*}
as 
\begin{align*}
& \left|\sum_{k=1}^{|\bCcal_{r',d'}|} \sum_{y'\in \Ycal}\EE_{x,y}[J(\tf(x) ,y)|  h_\theta (x)\in \Bcal_{(\Mcal', d')}[c_{k},r '],y=y']\frac{|\Ical_{k,y'}|}{n}- \frac{1}{n}\sum_{i=1}^nJ(\tf(x_{i}) ,y_{i})\right|
\\ & =\left|\frac{1}{n}\sum_{k=1}^{|\bCcal_{r',d'}|} \sum_{y'\in \Ycal} \sum_{i \in \Ical_{k,y'}}\left(\EE_{x,y}[J(\tf(x) ,y)|  h_\theta (x)\in \Bcal_{(\Mcal', d')}[c_{k},r '],y=y']-J(\tf(x_{i}) ,y_{i})\right) \right|
\\ & \le \frac{1}{n}\sum_{k=1}^{|\bCcal_{r',d'}|} \sum_{y'\in \Ycal} \sum_{i \in \Ical_{k,y'}}\sup_{  h\in \Bcal_{(\Mcal', d')}[c_{k},r ']}\left|J(\varphi_w(  h ,y')-J(\varphi_w(  h_\theta (x_{i})) ,y')\right| \le \xi(w). 
\end{align*}
Substituting these into equation \eqref{eq:6} yields
\begin{align*} 
&\left|\EE_{x,y}[J(\tf(x) ,y)] - \frac{1}{n}\sum_{i=1}^nJ(\tf(x_{i}) ,y_{i}) \right| 
\\ \nonumber & \le \scalebox{0.8}{$\displaystyle   \left|\sum_{k=1}^{|\bCcal_{r',d'}|} \sum_{y'\in \Ycal}\EE_{x,y}[J(\tf(x) ,y')|  h_\theta (x)\in \Bcal_{(\Mcal', d')}[c_{k},r ']]\left(\Pr(  h_\theta (x)\in \Bcal_{(\Mcal', d')}[c_{k},r ']\wedge y=y') - \frac{|\Ical_{k,y'}|}{n}\right) \right| +  \xi(w) $}
\\ \nonumber & \le \tC_{J}(w)  \sum_{k=1}^{2|\bCcal_{r',d'}|} \left|\left(\Pr( (  h_\theta (x),y)\in v_{k}) - \frac{|\Ical_{k}|}{n}\right)\right|+  \xi(w), 
\end{align*}
where the last line uses the fact that $\Ycal=\{y^{(1)},y^{(2)}\}$ for some $(y^{(1)},y^{(2)})$, along with the additional notation  $\Ical_{k}=\{i\in[n]: (  h_\theta (x_{i}),y_{i})\in  v_{k}\}$. Here, $v_{k}$ is defined as   $v_{k}= \Bcal_{(\Mcal', d')}[c_{k},r '] \times \{y^{(1)}\}$ for all $k\in[|\bCcal_{r',d'}|]$ and $v_{k}=\Bcal_{(\Mcal', d')}[c_{k-|\bCcal_{r',d'}|},r '] \times \{y^{(2)}\}$ for all $k \in \{|\bCcal_{r',d'}|+1,\dots,2|\bCcal_{r',d'}|\}$. 

Since $|\Ical_{k}|=\sum_{i=1}^n \one\{ (  h_\theta (x),y)\in v_{k}\}$ and $\theta$ is fixed, the vector $(|\Ical_{1}|,\dots,|\Ical_{2|\bCcal_{r',d'}|}|)$ follows  a multinomial distribution with parameters $n$ and $p=(p_1, ..., p_{2|\bCcal_{r',d'}|})$, where $p_k=\Pr( (  h_\theta (x),y)\in v_{k})$ for $k=1,\dots,2|\bCcal_{r',d'}|$. Thus, by noticing $|\bCcal_{r',d'}|=\Ncal_{(\Mcal',d')}(r',\Hcal)$ and by using the Bretagnolle-Huber-Carol inequality \citep[A6.6 Proposition]{van1996}, we have that with probability at least $1-\delta$,
\begin{align*} 
&\left|\EE_{x,y}[J(\tf(x) ,y)] - \frac{1}{n}\sum_{i=1}^nJ(\tf(x_{i}) ,y_{i})
  \right| 
  \\ &\ \le\tC_{J}(w) \sqrt{\frac{4\Ncal_{(\Mcal',d')}(r',\Hcal) \ln2 + 2 \ln(1/\delta)}{n}}+  \xi(w).
\end{align*}  

\end{proof}

\begin{proof}[Proof of Theorem \ref{thm:2}]
Let $\hat \Ccal_{r,d} \in \argmin_{\Ccal}\left\{|\Ccal|: \Ccal \subseteq \Mcal, \Theta\subseteq \cup_{c \in\Ccal} \Bcal_{(\Mcal,d)}[c,r]\right.\}$. Note that if $\Ncal_{(\Mcal,d)}(r,  \Theta)=\infty$, the bound in the statement of the theorem vacuously holds.  Thus, we focus on the case of $\Ncal_{(\Mcal,d)}(r,  \Theta)=|\hat \Ccal_{r,d}| <\infty$. For any $(w,\theta) \in \Wcal \times \Theta$, the following holds: for any $\hat \theta \in \hat \Ccal_{r,d}$, 
\begin{align} \label{eq:7}
\nonumber \left|\tpsi_{w}(\theta) \right| 
& =\left|\tpsi_{w}(\hat \theta)+\tpsi_{w}(\theta)-\tpsi_{w}( \hat \theta)  \right| 
\\ & \le\left|\tpsi_{w}( \hat \theta)\right|+\left|\tpsi_{w}(\theta)-\tpsi_{w}( \hat \theta)    \right|.    
\end{align}
For the first term in the right-hand side of \eqref{eq:7}, by using Lemma \ref{lemma:2} with $\delta=\delta'/\Ncal_{(\Mcal',d')}(r', \Theta)$ and taking union bounds, we have that for any $\delta'>0$, with probability at least  $1-\delta'$,
the following holds for all $ \hat \theta \in \hat \Ccal_{r,d}$, \begin{align} \label{eq:8}
\left|\tpsi_{w}( \hat \theta)\right|\le \tC_{J}(w) \sqrt{\frac{4\Ncal_{(\Mcal',d')}(r',\Hcal) \ln2 + 2 \ln(\Ncal_{(\Mcal,d)}(r,  \Theta)/\delta')}{n}}+ \xi(w,r',\Mcal',d).
\end{align}
By combining equations \eqref{eq:7} and \eqref{eq:8}, we have that   for any $\delta'>0$, with probability at least  $1-\delta'$,
the following holds for any $(w,\theta) \in \Wcal \times  \Theta$ and  any $ \hat \theta \in \hat \Ccal_{r,d}$:
\begin{align*}
&\left|\tpsi_{w}(\theta) \right| 
\\ &\le \tC_{J}(w) \sqrt{\frac{4\Ncal_{(\Mcal',d')}(r',\Hcal) \ln2 + 2 \ln(\Ncal_{(\Mcal,d)}(r,  \Theta)/\delta')}{n}}+\left|\tpsi_{w}(\theta)-\tpsi_{w}( \hat \theta)    \right|+ \xi(w,r',\Mcal',d).    
\end{align*}
This implies that for any $\delta'>0$, with probability at least  $1-\delta'$, the following holds for  any $(w,\theta) \in \Wcal \times  \Theta$:
\begin{align} \label{eq:9}
\left|\tpsi_{w}(\theta) \right| 
&\ \le C_{J}(w) \sqrt{\frac{4\Ncal_{(\Mcal',d')}(r',\Hcal) \ln2 + 2 \ln(\Ncal_{(\Mcal,d)}(r,  \Theta)/\delta')}{n}}
 \\ \nonumber  & \hspace{10pt} + \min_{ \hat \theta \in \hat \Ccal_{r,d}}\left|\tpsi_{w}(\theta)-\tpsi_{w}( \hat \theta)    \right|+ \xi(w,r',\Mcal',d).    
\end{align}
For the second term in the right-hand side of \eqref{eq:9}, we have that for  any $(w,\theta) \in \Wcal \times\Theta$,
\begin{align*} 
\min_{ \hat \theta \in \hat \Ccal_{r,d}}\left|\tpsi_{w}(\theta)-\tpsi_{w}( \hat \theta)\right| \le \tLcal_{d} (w)\min_{ \hat \theta \in \hat \Ccal_{r,d}} d(\theta,  \hat \theta) \le\Lcal_{d} (w)r.  
\end{align*}
Thus, by using $r=\tLcal_{d} (w)^{1/\rho-1} \sqrt{\frac{1}{n}}$, we have that
for any $\delta'>0$, with probability at least  $1-\delta'$, the following holds for  any $(w,\theta) \in \Wcal \times \Theta$:
\begin{align} \label{eq:10} 
\left|\tpsi_{w}(\theta) \right| 
& \le \tC_{J}(w) \sqrt{\frac{4\Ncal_{(\Mcal',d')}(r',\Hcal) \ln2 + 2 \ln(\Ncal_{(\Mcal,d)}(r,  \Theta)/\delta')}{n}}
 \\ \nonumber & \hspace{10pt} + \sqrt{\frac{\tLcal_{d} (w)^{2/\rho}}{n}}+ \xi(w,r',\Mcal',d).    
\end{align}
Since this statement holds for any $\delta'>0$, this implies the statement of this theorem.
\end{proof}

\section{Method Details} \label{app:method}

\begin{algorithm}[H]
\caption{Discretization of inter-module communication in RIM}
\SetAlgoLined
N is sample size,T is total time step, M is number of modules in the RIM model
\leavevmode \\
 initialization\;
 \For{i in 1..M}{
    initialize $z^0_i$;
}
  Training\;
 \For{n in 1..N}{
 \For{t in 1..T
 }{
 $\textsc{InputAttention}=\textsc{SoftAttention}(z^t_1,z^t_2,...,z^t_M,x^t)$;
 
    \uIf{i in top K of \textsc{InputAttention}}{
    $\hat{z}^{t+1}_{i}=\textsc{RNN}(z^{t}_{i},x^t)$;
    }\Else{$\hat{z}^{t+1}_{i'}=z^{t}_{i'}$;}
    \For{i in 1..M}{
    Discretization;
    $h^{t+1}_i=\textsc{SoftAttention}(\hat{z}^{t+1}_{1},\hat{z}^{t+1}_{2},....\hat{z}^{t+1}_{M})$
    \leavevmode \\
    $z^{t+1}_{i}=\hat{z}^{t+1}_{i}+q(h^{t+1}_i,L,G)$;
    }
 }
 Calculate task loss, codebook loss and commitment loss according to equation \ref{equation2}
 \leavevmode \\
 Update model parameter $\Theta$ together with discrete latent vectors in codebook $e \in R^{LXD}$;
 
 }
\end{algorithm}

\subsection{Task Details}

2D shape environment is a 5X5 grid world with different objects of different shapes and colors placed at random positions.Each location can only be occupied by one object.The underlying environment dynamics of 3D shapes are the same as in the 2D dateset, and only the rendering component was changed \citep{kipf2019contrastive}. In OOD setting, the total number of objects are changed for each environment. We used number of objects of 4 (validation), 3 (OOD-1)  and 2 (OOD-2). We did not put in more than 5 objects because the environment will be too packed and the objects can hardly move.

The 3-body physics simulation environment is an interacting system that evolves according to physical laws.There are no actions applied onto any objects and movement of objects only depend on interaction among objects. This environment is adapted from \citet{kipf2019contrastive}. In the training environment, the radius of each ball is 3.In OOD settings, we changed the radius to 4 ( validation) and 2 (OOD test).

In all the 8 Atari games belong to the same collections of 2600 games from Atari Corporation.We used the games adapted to OpenAI gym environment. There are several versions of the same game available in OpenAI gym. We used version "Deterministic-v0" starting at warm start frame 50 for each game for training. Version "Frameskip-v0" starting at frame 250 as OOD validation and "Frameskip-v4" starting at frame 150 at OOD test.

In all the GNN compositional reasoning experiments. HITS at RANK K (K=1 in this study) was used as as the metrics for performance.This binary score is 1 for a particular example if the predicted state representation is in the k-nearest neighbor set around the true observation. Otherwise this score is 0.  MEAN RECIPROCAL RANK (MRR) is also used as a performance metrics, which is defined as $MRR=\frac{1}{N}\sum^{N}_{n=1}\frac{1}{Rank_{n}}$ where $rank_n$ is the rank of the n-th sample (\cite{kipf2019contrastive}).

In  adding task, gap length of 500 was used for training and gap length of 200 (OOD validation) and 1000 (OOD testing) are used for OOD settings. In sequential MNIST experiment , model was trained at 14X14 resolution and tested in different resolutions (\citet{goyal2019recurrent}). Sort-of-Clevr experiments are conducted in the same way as \citet{goyal2021coordination}

\begin{figure}
    \centering
    \centering
     \begin{subfigure}[b]{0.25\textwidth}
         \centering
         \includegraphics[width=\textwidth]{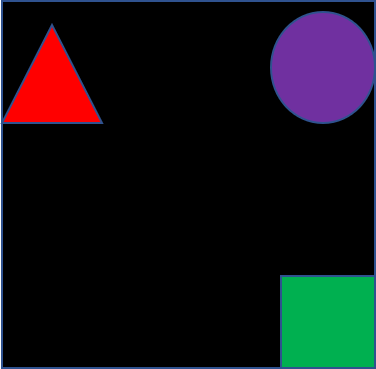}
         \caption{2D Shapes}
 \end{subfigure}
     \hfill
     \begin{subfigure}[b]{0.25\textwidth}
         \centering
         \includegraphics[width=\textwidth]{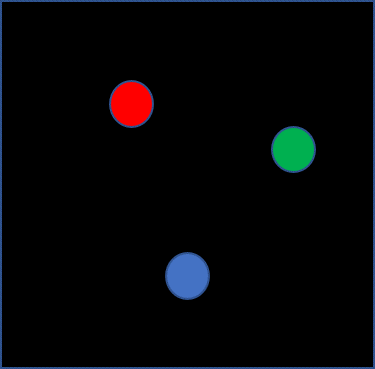}
         \caption{Three-body}
     \end{subfigure}
     \hfill
     \begin{subfigure}[b]{0.64\textwidth}
         \centering
         \includegraphics[width=\textwidth,trim={0 0 0 2cm},clip]{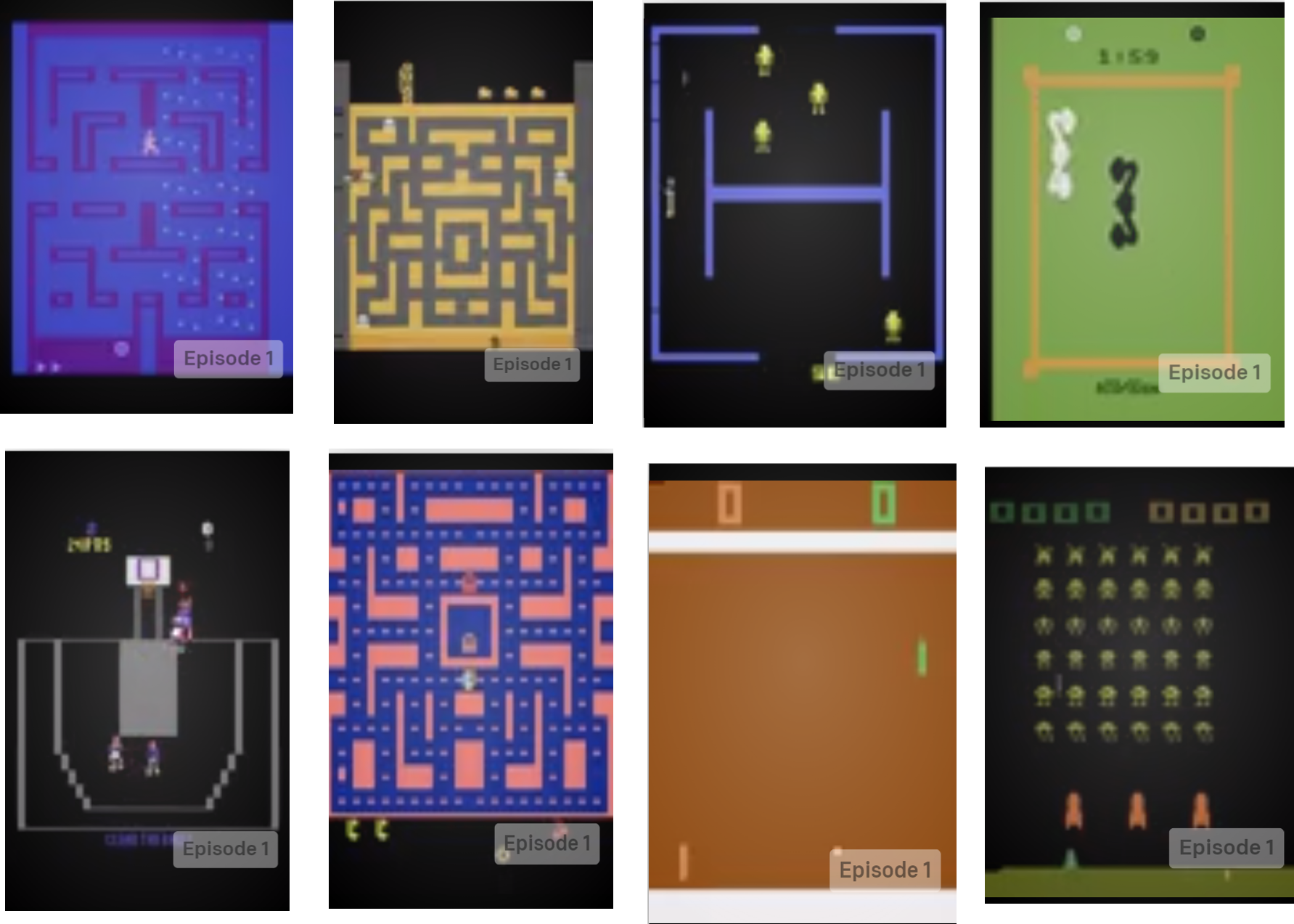}
         \caption{Atari Games}
     \end{subfigure}
     \hfill
     \begin{subfigure}[b]{0.84\textwidth}
         \centering
         \includegraphics[width=\textwidth]{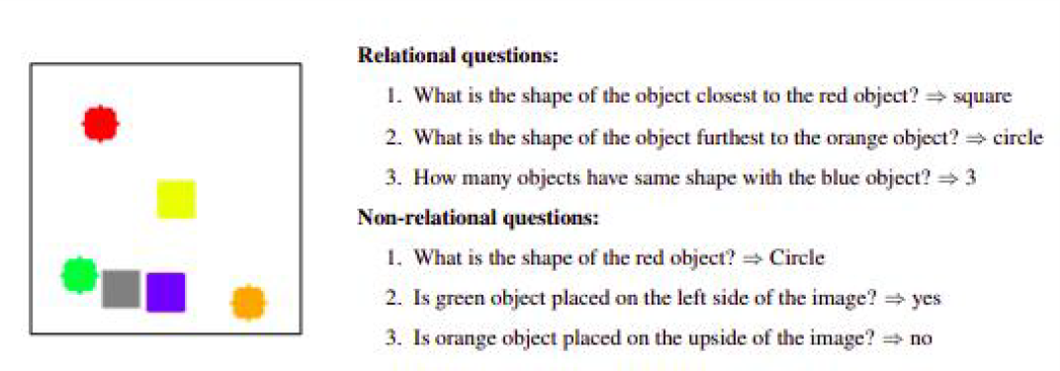}
         \caption{Sort-of-Clevr}
     \end{subfigure}
    
    \caption{ Examples of different task environments. Atari game screen shots are obtained from OpenAI gym platform. Sort-of-Clevr example was adapted from \citet{goyal2021coordination} with permission }
    \label{fig:tasks_image}
\end{figure}

\subsection{Model Architecture, Hyperparameters and Training Details }

\paragraph{DVNC implementation details} In DVNC, codebook $e \in \RR^{L \times m}$ was initialized by applying K-means clustering on training data points ($s$) where the number of clusters is $L$. The nearest $e_j$,by Euclidean distance ,is assigned to each $s_i$. The commitment loss $\beta \sum^{G}_{i}||s_{i}-\sg(e_{o_i})||^2_2$ , which encourages $s_i$ stay close to the chosen codebook vector, and the task loss are back-propagated to each of the model components that send information in the inter-component communication process.The gradient of task loss are back-propagated to each of the components that send information using straight-through gradient estimator. The codebook loss $\sum^{G}_{i}||\sg(s_{i})-e_{o_i}||^2$ that encourages  the selected codebook vector to stay close to $s_i$ is back-propagated to the selected codebook vector. Task loss is not backpropagated to codebook vectors.  Only the task loss is back-propagated to the model component that receives the information. It is worth pointing out that in this study, we train the codebook vectors directly using gradient descent instead of using exponential moving average updates as in \citet{oord2017neural}.

Model architecture, hyperparameters and training settings of GNN used in this study are same as in \citet{kipf2019contrastive}, where encoder dimension is 4 and number of object slot is 3..Model architecture, hyperparameters and training settings of RIMs used in this study are identify to \citet{goyal2019recurrent},where 6 RIM units and k=4 are used. Model architecture, hyperparameters and training settings of transformer models are the same as in \citet{goyal2021coordination}, except that we did not include shared workspace. Hyperparameters of GNN and RIM models are summarized in table \ref{table:Hyperparameters}. Hyperparameters of transformers with various settings can be found in \citet{goyal2021coordination}. In all the models mentioned above,we include discretization of communication in DVNC and keep other parts of the model unchanged.

Data are split into training set, validation set and test set, the ratio varies among different tasks depending on data availability.For in-distribution performance, validation set has the same distribution as training set. In OOD task, one of the OOD setting,eg. certain number of blocks in 2D shape experiment, is used as validation set.  The OOD setting used for validation was not included in test set.

\begin{table}[]
\caption{Hyperparameters used for GNN and RIMs}
\label{table:Hyperparameters}
\begin{tabular}{|l|l|l|l|l|}
\hline
\textbf{GNN model}       &          &  & \textbf{RIMs model}    &        \\ \hline
                         &          &  &                        &        \\ \hline
Hyperparameters          & Values   &  & Hyperparameters        & Values \\ \hline
Batch size               & 1024     &  & Batch size             & 64     \\ \hline
hidden dim               & 512      &  & hidden dim             & 300    \\ \hline
embedding-dim            & 512/G    &  & embedding-dim          & 300/G  \\ \hline
codebook\_loss\_weight   & 1        &  & codebook\_loss\_weight & 0.25   \\ \hline
Max. number of epochs    & 200      &  & Max. number of epochs  & 100    \\ \hline
Number of slots(objects) & 5        &  & learning-rate          & 0.001  \\ \hline
learning-rate            & 5.00E-04 &  & Optimizer              & Adam   \\ \hline
Optimizer                & Adam     &  & Number of Units (RIMs) & 6      \\ \hline
                         &          &  & Number of active RIMs  & 4      \\ \hline
                         &          &  & RIM unit type          & LSTM   \\ \hline
                         &          &  & dropout                & 0.5    \\ \hline
                         &          &  & gradient clipping      & 1      \\ \hline
\end{tabular}
\end{table}

\subsection{Computational resources}
GPU nodes on university cluster are used. GNN training takes ~3 hrs for each task with each hyperparameter setting on Tesla GPU. Training of RIMs and transformers take about 12 hours on the same GPU for each task. In total, the whole training progress of all models, all tasks, all hyperparameter settings takes approximately 800 hours on GPU nodes.

\end{document}